\setlist[enumerate]{leftmargin=.5in}
\setlist[itemize]{leftmargin=.5in}
\newcommand{\R}{\mathbb{R}}
\newcommand{\Z}{\mathbb{Z}}
\newcommand{\C}{\mathbb{C}}
\newcommand{\D}{\mathbb{D}}
\newcommand{\abs}[1]{\left\vert #1 \right\vert}
\newcommand{\norm}[1]{\left\Vert #1 \right\Vert}
\renewcommand{\Re}{\text{Re}}
\newcommand{\Part}[2]{\frac{\partial #1}{\partial #2}}
\newcommand{\etal}{\textit{et al. }}
\begin{document}

% \begin{frontmatter}
\title{Harmonic Beltrami Signature: A Novel 2D Shape Representation for Object Classification}

\author{
    Chenran Lin\thanks{Department of Mathematics, The Chinese University of Hong Kong, Hong Kong (\email{crlin@math.cuhk.edu.hk})}
\and
    Lok Ming Lui\thanks{Department of Mathematics, The Chinese University of Hong Kong, Hong Kong (\email{lmlui@math.cuhk.edu.hk})}
}
\headers{Harmonic Beltrami Signature}{Chenran Lin, and Lok Ming Lui}

\maketitle

\begin{abstract}
  There is a growing interest in shape analysis in recent years. We present a novel shape signature for 2D bounded simply-connected domains, named the Harmonic Beltrami signature (HBS). The proposed signature is based on the harmonic extension of the conformal welding map of a unit circle and its Beltrami coefficient. We show that there is a one-to-one correspondence between the quotient space of HBS and the space of 2D simply-connected shapes up to a translation, rotation and scaling. With a suitable normalization, each equivalence class in the quotient space of HBS is associated to a unique representative. It gets rid of the conformal ambiguity. As such, each shape is associated to a unique HBS. Conversely, the associated shape of a HBS can be reconstructed based on quasiconformal Teichm\"uller theories, which is uniquely determined up to a translation, rotation and scaling. The HBS is thus an effective fingerprint to represent a 2D shape. The robustness of HBS is studied both theoretically and experimentally. With the HBS, simple metric, such as $L^2$, can be used to measure geometric dissimilarity between shapes. Experiments have been carried out to classify shapes in different classes using HBS. Results show good classification performance, which demonstrate the efficacy of our proposed shape signature.
\end{abstract}

\begin{keywords}
  Shape representation, conformal welding, simply-connected, invariance
\end{keywords}

\section{Introduction}
\label{intro}

% Working on the introduction

    The outline of a shape contains important information, which can be used in many applications, such as medical image analysis, image segmentation, recognition, registration and so on. In order to utilize the shape information, an effective descriptor to represent a shape is a necessary and fundamental tool for many applications in pattern recognition and computer visions. Nevertheless, defining a robust shape signature to describe the space of shapes is still a mathematically challenging problem. A good shape signature should be easy to compute and retain essential geometric features of a shape. Meanwhile, a practical shape signature should be invariant under rigid motion (rotation, translation and scaling) and robust to noise. More desirably, the space of shape signatures should inherits a natural and simple metric. With the natural metric, two shapes can be quantitatively compared and prior shape information can be incorporated into various imaging models by adding a penalty term to further improve the accuracy. Of course, the shape signature must be simple to manipulate such that the modified imaging model is numerically manageable.
    
    Because of its significance, this problem has been widely studied and different models to build the metric shape space have been proposed. In general, existing approaches for constructing shape descriptors can be divided into two main categories, namely, the region-based methods and the contour-based methods. Region-based methods use all information of pixels within a shape. While more information is considered, these methods are often more computationally demanding. In contrast, the contour-based methods only use the boundary information of the shape. Various descriptors capturing the essential geometric features of the shape contour have been recently proposed. It is worth mentioning that many of these shape descriptors in either categories cannot completely capture the geometric information of the shape. In other words, the shape associated to a given shape descriptor cannot be uniquely determined up to rigid motions. Motivated by this, we propose in this paper a shape signature for 2D bounded simply-connected shapes, called the Harmonic Beltrami signature, based on the quasiconformal Teichm\"uller theories. Each shape signature is associated to a unique shape up to rigid motions. Thus, the proposed signature can capture the geometric information of a shape completely.
    
    %Nevertheless, it is a big challenge to define a robust descriptor for the space of shapes, even for the simplest situation of 2D simply-connected objects. In order to manipulate shapes and utilize their geometric information, there is an urgent demand to find a simple and robust representation to mathematically describe shape contours. The shape representation should also inherits a natural metric to measure geometric differences between two shape representations. With the geometric dissimilarity metric, two shapes can be quantitatively compared and prior shape information can be incorporated into various imaging models by adding a penalty term to further improve the accuracy. Due to its significance, this problem has been widely studied and different models to build the metric shape space have been proposed.

    More specifically, given a simply-connected bounded domain, the conformal disk parameterizations of the inner and outer regions are computed. It gives rise to the conformal welding of the boundary contour of the domain. The harmonic extension of the welding map can be computed, whose Beltrami coefficient then defines the shape signature, called the {\it Harmonic Beltrami signature (HBS)}. Theoretically, it can be shown that there is a one-to-one correspondene between the quotient space of HBS and the space of 2D simply-connected shapes up to rigid motions. With a suitable normalization, each equivalence class in the quotient space of HBS has a unique representative, which helps to get rid of the conformal ambiguity. In particular, each shape is associated to a unique HBS. Also, given a HBS, the associated shape can be uniquely reconstructed up to a translation, rotation and scaling. As such, the HBS can be regarded as an effective fingerprint to represent a 2D shape. Note that conformal map of a region is robust to noises on the boundary contour of the domain. The proposed HBS, which is based on the conformal maps and the harmonic extension, is robust to noises. The proposed signature also allows us to study the space of shapes by analyzing the space of HBS, which is easy to manipulate. In fact, with the HBS, simple metric can be used to measure geometric dissimilarity between shapes. This can be applied to various pattern recognition and image analysis tasks.

    The paper is organized as follows: Section \ref{related work} reviews some related topics about shape descriptors; Section \ref{background} introduces some theoretic background; Section \ref{main} explains our proposed Harmonic Beltrami signature in details; Section \ref{implementation} gives the implementation details; Section \ref{result} reports our experimental results. The paper is concluded in Section \ref{conclusion} and we point out several future directions.

    \begin{figure}
        \begin{center}
        \includegraphics[width=11cm]{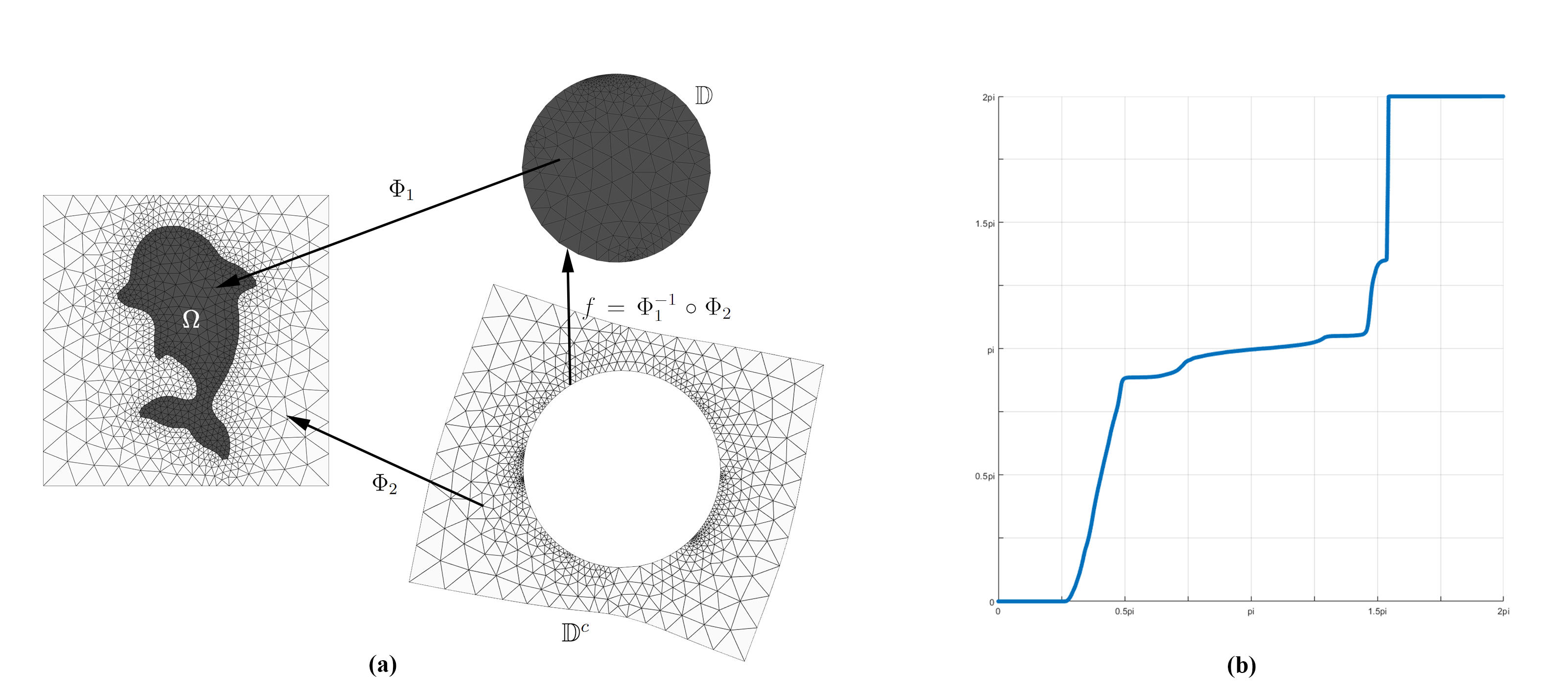}
        \caption{(a) The Illustration of how conformal welding $f$ is defined from given 2D simply-connected domain $\Omega$; (b) The image of conformal welding $f: [0, 2\pi) \rightarrow [0, 2\pi)$}
        \end{center}
        \label{fig1}
    \end{figure}

\section{Contributions}
The contributions of this paper can be summarized as follows.
\begin{enumerate}
    \item Firstly, we propose a new shape signature, called the Harmonic Beltrami signature, to effectively represent 2D simply-connected shapes. Every shape has a unique Harmonic Beltrami signature. Conversely, given a Harmonic Beltrami signature, its corresponding shape can be determined up to a translation, rotation and scaling.
    \item Secondly, the proposed Harmonic Beltrami signature solves the issue of conformal ambiguities facing the conformal welding signature.
    \item Thirdly, we propose a practical procedure to normalize the Harmonic Beltrami signature to handle the non-uniqueness issue, with rigorous theoretical justifications.
    \item Fourthly, we propose a reconstruction algorithm to construct the corresponding shape from the Harmonic Beltrami signature up to a rotation, translation and scaling. This allows us to go back and fro between shapes and Beltrami signatures in the imaging model.
    \item Finally, the proposed Harmonic Beltrami signature inherits a simple metric, namely, the $L^2$ distance, to measure the geometric dissimilarity between shapes. We have applied the shape distance to shape classification and shown satisfactory results. 
\end{enumerate}

\section{Related works}\label{related work}
    Shape representation and description is an enduring field and there have been extensive and in-depth discussions in the past several decades. Demisse \etal \cite{demisse2017deformation} proposed a method to represent an ordered set of points sampled from a curved shape as an element of a finite dimensional matrix Lie group. Mokhtarian \etal \cite{mokhtarian1997efficient} used the maxima of curvature zero-crossing contours of Curvature Scale Space image to represent the shapes of object boundary contours. Lui \etal \cite{lui2013shape} extracted each component of a 2D multi-connected shape, then the conformal weldings represent all components and conformal modules describe relationships between components. 
    
    A more meticulous survey about shape representations can be found in \cite{zhang2004review}. Generally speaking, all of these representation techniques can be divided into two major categories, \textit{contour-based} methods and \textit{region-based} methods, depending on whether shape features are extracted from the contour only or from the whole shape region.

    \subsection{Contour-based methods}
        As its name suggests, this kind of representations only exploits the information providing by shape boundary. A very natural idea is that the boundary can be taken as a whole, from which a multi-dimensional numeric feature vector can be calculated and becomes the demanded representation.

        The simplest features are area, circularity, curvature and so on and their combination can be used as shape representation. Peura \etal \cite{peura1997efficiency} proposed such a descriptor including convexity, ratio of principle axis, circular variance and elliptic variance. Belongie \etal \cite{belongie2002shape} tried in a different way and built a representation based on Hausdorff distance, called shape context. For any boundary point $p$, they calculated the Hausdorff distance $d_{pq}$ and the orientation $\theta_{pq}$ with any other boundary point $q$, then these $d_{pq}$ and $\theta_{pq}$ are quantized to create a histogram map $H_p$, which is used to represent the point $p$. All the histograms $H_p$ are flattened and concatenated to form the context of the shape. Asada \etal \cite{asada1986curvature} attempted to smooth the boundary by Gaussian filter as well as the second derivatives of Gaussian filter, then the remained inflection points are expected to be significant object characteristics.

        Some other contour-based representations pay more attention to local boundary information and break the shape down into many pieces. Chain code describes an object by a sequence of unit-size line segments with a given orientation, which was introduced by Freeman \etal \cite{freeman1961encoding}. Groskey \etal \cite{grosky1990index} proposed polygon decomposition as representation. The given shape boundary is broken down into line segments by polygon approximation. The feature for each segment is expressed as four elements, internal angle, distance from the next vertex, and its $x$ and $y$ coordinates. Berretti \etal \cite{berretti2000retrieval} extended Groskeyet's model. The curvature zero-crossing points from a Gaussian smoothed boundary are used to obtain smooth curve, called tokens. The features for each token are its maximum curvature and orientation, and the similarity between two tokens is measured by the weighted Euclidean distance. 
    
    \subsection{Region-based methods}
        Different from the previous category, region-based representations make the best use of all the pixels within the given shape region. Geometric moment is a classical and representative region-based shape description with form
        \begin{equation*}
            m_{pq} = \sum_x \sum_y x^p y^q f(x, y),
        \end{equation*}
        where $p, q = 0, 1, 2, \cdots$ and $f$ is the given shape. Hu published the first significant paper about geometric moment and applied it in pattern recognition \cite{hu1962visual}. Taubin \etal \cite{taubin1991object,taubin1991recognition} proposed algebraic moment, which is computed from the first $m$ central moments and is given as the eigenvalues of predefined matrices $M_{j, k}$, whose elements are scaled factors of the central moments. Zhang \etal \cite{zhang2002generic} proposed Generic Fourier descriptor which is acquired by applying a 2D Fourier transform on a polar-raster sampled image
        \begin{equation*}
            PF_2(\rho, \phi) = \sum_r \sum_k f(r, \theta_k) e^{2 \pi i (\frac{r}{R} \rho + \theta_k \phi)},
        \end{equation*}
        where $0 \le r < R$, $0 \le \rho < R$, $0 \le k < T$, $0 \le \phi < T$, $\theta_k = \frac{2\pi k}{T}$ and $R, T$ are the radial frequency resolution and angular frequency resolution respectively

\section{Theoretical basis}\label{background}
    \subsection{Quasi-conformal mapping and Beltrami equation}
        % Let $f: \Omega \subset \C \rightarrow \C$ be .
        % The following differential operators are more convenient for discussion
        % \begin{equation*}
        %     \frac{\partial}{\partial z} := \frac{1}{2}(\Part{}{x} - i \Part{}{y}), \Part{}{\overline{z}}:= \frac{1}{2}(\Part{}{x}+i \Part{}{y})
        % \end{equation*}
        A complex function $f: \Omega \subset \C \rightarrow \C$ is said to be \textit{quasi-conformal} associated to $\mu$ if $f$ is orientation-preserving and satisfies the following \textit{Beltrami equation}:
        \begin{equation}\label{beltrami eq}
            \Part{f}{\overline{z}} = \mu(z) \Part{f}{z}
        \end{equation}
        where $\mu(z)$ is a complex-valued Lebesgue measurable function satisfying $\norm{\mu}_\infty < 1$. More specifically, this $\mu: \Omega \rightarrow \D$ is called the \textit{Beltrami coefficient} of $f$
        \begin{equation}\label{mu def}
            \mu = \frac{f_{\overline{z}}}{f_{z}}
        \end{equation}
        
        In terms of the metric tensor, consider the effect of the pullback under $f$ of the Euclidean metric $ds^2_E$, the resulting metric is given by:
        \begin{equation}
            f^*(ds^2_E) = \abs{\Part{f}{z}}^2 \abs{dz + \mu(z)d\overline{z}}^2
        \end{equation}
        which, relative to the background Euclidean metric $dz$ and $d\overline{z}$, has eigenvalue $(1+\abs{\mu})^2 \abs{\Part{f}{z}}^2$ and $(1-\abs{\mu})^2 \abs{\Part{f}{z}}^2$. 

        Therefore, inside the local parameter domain around some point $p$, $f$ can be considered as a map composed of a translation to $f(p)$ together with the multiplication of a stretch map $S(z) = z + \mu(p)\overline{z}$ and conformal function $f_z(p)$, which may be expressed as follows:
        \begin{equation}\label{local f}
            f(z) =  f(p)+S(z)f_z(p) = f(p)+(z+\mu(p)\overline{z})f_z(p).
        \end{equation}
        $S(z)$ makes $f$ map a small circle to a small ellipse and all the conformal distortion of $f$ is caused by $\mu$. To form $\mu(p)$, we can determine the angles of the directions of maximal magnification and shrinkage and the amount of them as well. Specially, the angle of maximal magnification is $\arg(\mu(p))/2$ with magnifying factor $1+\abs{\mu(p)}$; the angle of maximal shrinkage is the orthogonal angle $\arg(\mu(p))/2 - \pi/2$ with shrinkage factor $1-\abs{\mu(p)}$. The distortion or dilation is given by:
        \begin{equation}
            K = \frac{1+\abs{\mu(p)}}{1-\abs{\mu(p)}}.
        \end{equation}
        Thus, the Beltrami coefficient $\mu$ gives us important information about the properties of the map (see figure \ref{fig3}) and $\mu$ is a measure of non-conformality. In particular, the map $f$ is conformal around a small neighborhood of $p$ when $\mu(p)=0$ and if $\mu(z)=0$ everywhere on $\Omega$, $f$ us called \textit{conformal} or \textit{holomorphic} on $\Omega$.

        \begin{figure}
            \begin{center}
                \includegraphics[width=7.6cm]{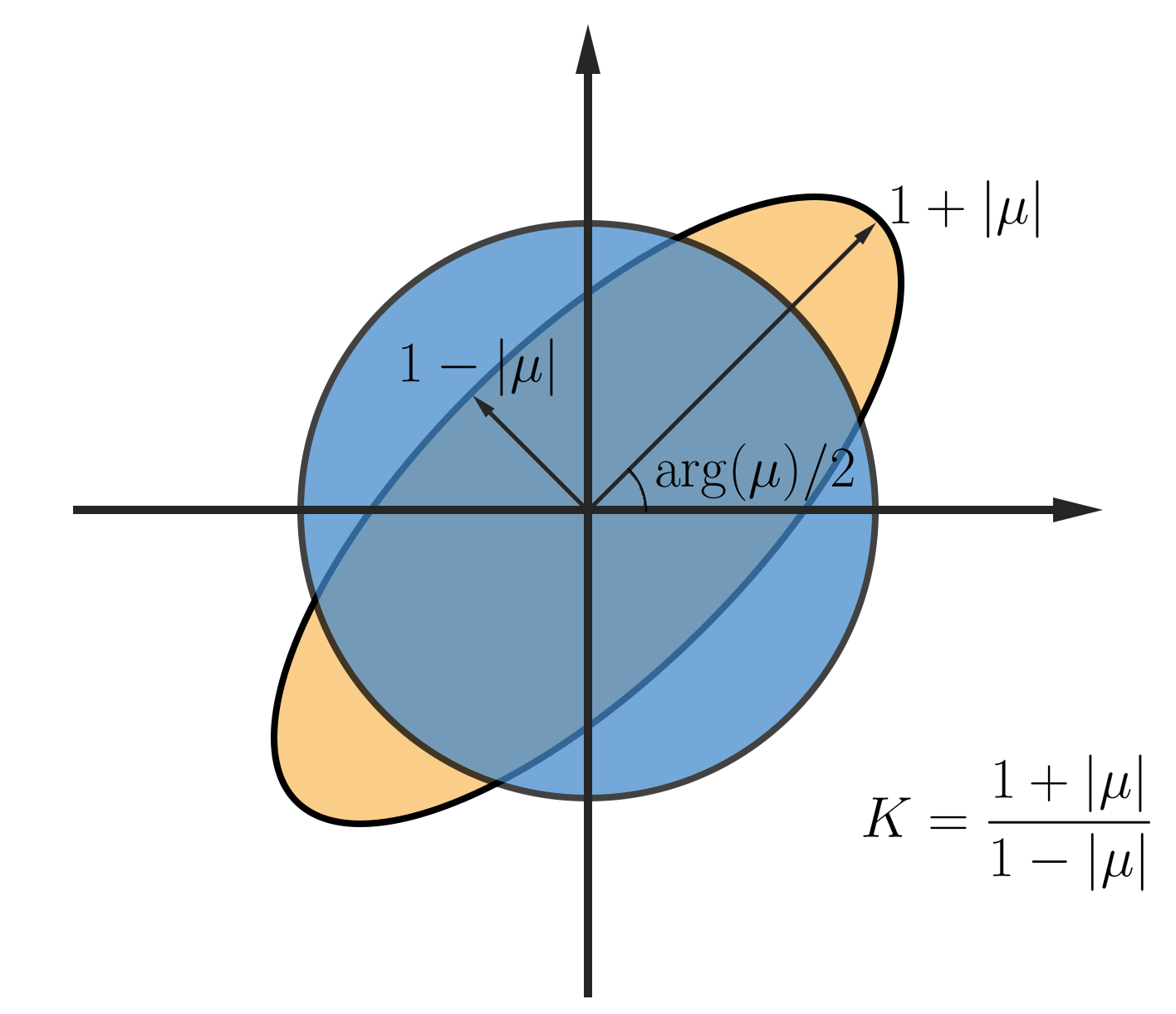}
            \end{center}
            \caption{Quasi-conformal maps infinitesimal circles to ellipses. The Beltrami coefficient measure the distortion or dilation of the ellipse under the QC map.}
            \label{fig3}
        \end{figure}

        Note that there is a one-to-one correspondence between the quasi-conformal mapping $f$ and its Beltrami coefficient $\mu$. Given $f$, there exists a Beltrami coefficient $\mu$ satisfying the Beltrami equation by equation (\ref{mu def}). Conversely, the following theorem states that given an admissible Beltrami coefficient $\mu$, there always exists an quasi-conformal mapping $f$ associating with this $\mu$.

        \begin{theorem}[Measurable Riemannian Mapping Theorem]\label{Measurable Riemannian Mapping Theorem}
            Suppose $\mu : \C \rightarrow \C$ is Lebesgue measurable satisfying $\norm{\mu}_\infty <1$; then, there exists a quasi-conformal homeomorphism $f$ from $\C$ onto itself, which is in the Sobolev space $W_{1,2}(\C)$ and satisfies the Beltrami equation in the distribution sense. The associated quasi-conformal homeomorphism $f$ is unique up to a Mobi\"us transformation. Furthermore, by fixing $0$, $1$ and $\infty$, the $f$ is uniquely determined.
        \end{theorem}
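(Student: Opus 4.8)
The plan is to follow the classical Ahlfors–Bers strategy: reduce to the case of a compactly supported Beltrami coefficient and solve that case by a Neumann series built from two singular integral operators. These are the solid Cauchy transform $P$, defined by $(Ph)(z) = -\frac{1}{\pi}\int_{\C}\frac{h(w)}{w-z}\,dA(w)$, which satisfies $\Part{(Ph)}{\overline{z}} = h$, and the Beurling transform $T$, the principal-value operator characterized by $\Part{(Ph)}{z} = Th$. The two inputs I would take from Calderón–Zygmund theory are that $T$ is an isometry on $L^2(\C)$ and is bounded on every $L^p(\C)$, $1<p<\infty$, with $\norm{T}_{L^p\to L^p}\to 1$ as $p\to 2$; since $\norm{\mu}_\infty<1$, I can then fix $p>2$ with $\norm{\mu}_\infty\,\norm{T}_{L^p\to L^p}<1$.

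First I would treat a $\mu$ with compact support. Seeking $f(z) = z + (Ph)(z)$ with $h\in L^p(\C)$, one gets $f_{\overline{z}} = h$ and $f_z = 1 + Th$, so the Beltrami equation becomes the fixed-point equation $(I-\mu T)h = \mu$. Because $\norm{\mu T}_{L^p\to L^p}\le\norm{\mu}_\infty\,\norm{T}_{L^p\to L^p}<1$, the operator $I-\mu T$ is invertible and $h = \sum_{n\ge 0}(\mu T)^n\mu\in L^p(\C)$. Since $p>2$, $Ph$ is H\"older continuous and vanishes at $\infty$, so $f$ is a continuous solution in $W^{1,p}_{loc}$, hence in the Sobolev class of the statement. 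To upgrade $f$ to a homeomorphism I would first handle smooth compactly supported $\mu$: elliptic bootstrapping makes $f$ smooth, and since its Jacobian equals $(1-\abs{\mu}^2)\abs{f_z}^2$ with $f_z$ zero-free (similarity principle), $f$ is a local diffeomorphism, hence—being normalized—a homeomorphism of $\C$. For general compactly supported $\mu$ I would approximate by smooth $\mu_n\to\mu$ a.e.\ with $\norm{\mu_n}_\infty\le\norm{\mu}_\infty$; the normalized solutions $f_n$ are a normal family of $K$-quasi-conformal maps, and a subsequential limit is a quasi-conformal homeomorphism whose Beltrami coefficient is $\mu$, by stability of the Neumann series.

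For a general measurable $\mu$ with $\norm{\mu}_\infty<1$, I would split $\mu = \mu_1+\mu_2$ with $\mu_1 = \mu\chi_{\D}$ (compact support) and $\mu_2$ supported in $\C\setminus\D$. Solve for $f^{\mu_1}$ as above. By the composition rule for Beltrami coefficients, the map $g$ with $f^{\mu} = g\circ f^{\mu_1}$ has coefficient $\nu$ that vanishes on the bounded Jordan domain $f^{\mu_1}(\D)$ and is supported on its (unbounded) complement; conjugating $g$ by the inversion $z\mapsto 1/z$ turns this into another compactly supported problem, already solved. Thus $f^{\mu}$ is a composition of maps already constructed; it lies in $W^{1,p}_{loc}$, and after post-composing with a suitable affine map it satisfies the normalization $f(0)=0$, $f(1)=1$, $f(\infty)=\infty$.

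For uniqueness, suppose $f$ and $\tilde f$ both solve the Beltrami equation for $\mu$ and are normalized. Then $\tilde f\circ f^{-1}$ is quasi-conformal with Beltrami coefficient $0$ (the $\mu$-terms cancel in the composition rule), so by Weyl's lemma it is holomorphic on $\C$; a holomorphic homeomorphism of $\widehat{\C}$ is a M\"obius transformation, and fixing $0,1,\infty$ forces it to be the identity, giving $\tilde f = f$. The step I expect to be the main obstacle is not the algebra of the series but the passage from ``$W^{1,p}$ solution of the PDE'' to ``homeomorphism'': controlling injectivity and properness uniformly along the smoothing $\mu_n\to\mu$, where the compactness and distortion theory of quasi-conformal maps together with the three-point normalization do the real work.
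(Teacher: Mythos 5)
Your proposal is a correct outline of the standard Ahlfors--Bers/Bojarski proof of the Measurable Riemann Mapping Theorem; the paper itself offers no proof of this statement at all, quoting it purely as classical background for the later constructions, so there is nothing in the text to compare against step by step. Your reduction to compactly supported $\mu$, the fixed-point equation $(I-\mu T)h=\mu$ solved by a Neumann series in $L^p$ for $p>2$ close to $2$, the upgrade to a homeomorphism via smooth approximation and normal families, the inversion trick for the coefficient supported near infinity, and the uniqueness argument via Weyl's lemma and the three-point normalization are all the right ingredients and in the right order. Two small points of care: for $h\in L^p(\C)$ with $p>2$ the Cauchy transform $Ph$ is H\"older continuous with exponent $1-2/p$ but does not in general vanish at infinity (that requires additional integrability, e.g.\ $h\in L^{p'}$ for some $p'<2$, or compact support of $h$, which the iterates $(\mu T)^n\mu$ do not have); the correct normalization is $f(z)=z+Ph(z)$ with $f(z)-z=O(|z|^{1-2/p})$, which still yields $f(\infty)=\infty$ and suffices for everything downstream. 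Also, the regularity one actually obtains is $W^{1,p}_{loc}$ (hence $W^{1,2}_{loc}$), which is the correct classical statement; the paper's claim that $f\in W_{1,2}(\C)$ globally is a slight imprecision, as is its phrasing ``from $\C$ onto itself \dots unique up to a M\"obius transformation,'' since a M\"obius map preserving $\C$ is necessarily affine and the uniqueness statement properly lives on $\overline{\C}$. Neither issue is a gap in your argument.
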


        Suppose $f, g: \C \rightarrow \C$ are complex-valued function with Beltrami coefficient $\mu_f, \mu_g$ respectively. Then the Beltrami coefficient for the composition $g \circ f$ is given by 
        \begin{equation}\label{mu of composition}
            \mu_{g \circ f} = \frac{\mu_f+(\mu_g \circ f) \tau}{1+\overline{\mu_f}(\mu_g \circ f) \tau},
        \end{equation}
        where $\tau = \frac{\overline{f_z}}{f_z}$. Note that when $g$ is conformal, $\mu_g = 0$ and 
        \begin{equation}\label{mu of conformal composition}
            \mu_{g \circ f} = \mu_f.
        \end{equation}

    \subsection{Conformal welding}\label{welding}
        Given a 2D bounded simply-connected shape, we can treat it as a 2D bounded simply-connected domain $\Omega \subset \C$, by Riemann mapping theorem, there exist conformal functions $\Phi_1: \D \rightarrow \Omega$ and $\Phi_2: \D^c \rightarrow \Omega^c$. $\Phi_1$ and $\Phi_2$ are unique up to a \textit{Mobi\"us transformation}:
        \begin{equation}
            M(z) = e^{i \theta} \frac{z - a}{1 - \overline{a}z}.
        \end{equation}
        Then we can define \textit{conformal welding} as:
        \begin{equation}
            f = \Phi_1^{-1} \circ \Phi_2.
        \end{equation}
        Such $f: \partial \D \rightarrow \partial \D$ is a diffeomorphism from $\partial \D$ to itself, which can be also thought as a periodic real-valued monotone increasing function $f_\R: [0, 2\pi) \rightarrow [0, 2\pi)$ such that $f(e^{i \theta}) = e^{i f_\R(\theta)}$ (see figure \ref{fig1}).
        
        However, such welding mappings are not unique because of the arbitrariness of Riemann mappings, as shown in figure \ref{unsatble}.
        
        \begin{figure}
            \begin{center}
                \includegraphics[width=8cm]{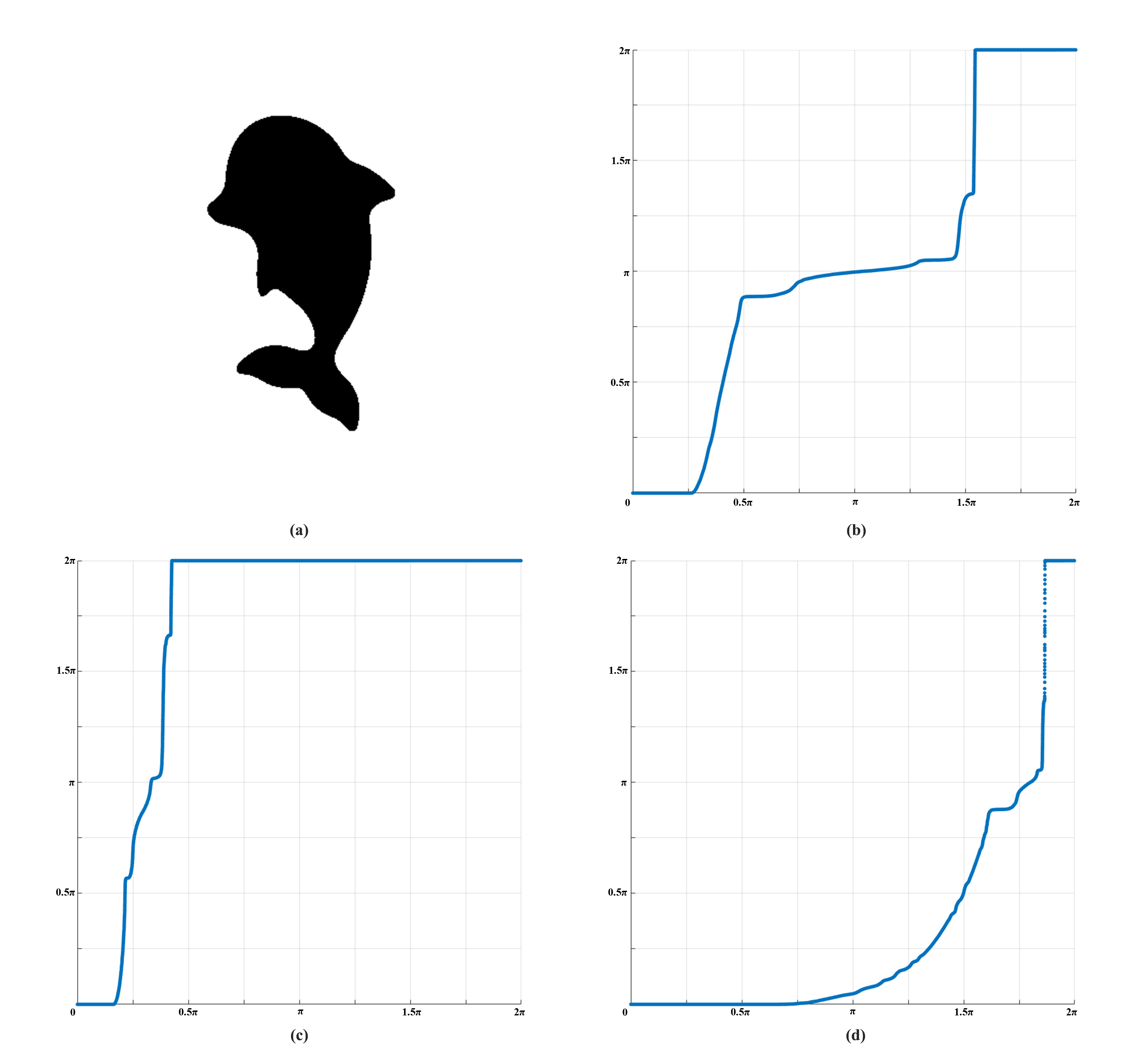}
            \end{center}
            \caption{Different conformal welding mappings (b), (c) and (d) of the same shape (a)}
            \label{unsatble}
        \end{figure}

    \subsection{Harmonic function and Poisson integral}
        A complex-valued function $f$ defined on $\Omega \subset \C$ is called \textit{harmonic} if it satisfies the \textit{Laplace's equation}:
        \begin{equation}
            \Delta f = 4 \frac{\partial^2 f}{\partial z \partial \overline{z}} = \frac{\partial^2 f}{\partial x^2} + \frac{\partial^2 f}{\partial y^2} = 0,
        \end{equation}
        where $z = x + iy$, $\bar{z} = x -iy$.

        Chen \etal \cite{chen2010compositions} proved following theorem, which tells us in what condition the composition of harmonic mappings and other mappings can inherit the harmonicity.
        \begin{theorem}\label{composition of harmonic}
            Let $f$ be a harmonic mapping, $f \circ g$ is harmonic if and only if $g(z) = az + b \overline{z} + c$, where $a$, $b$ and $c$ are constants and $g \circ f$ is harmonic if and only if $g$ is analytic or anti-analytic.
        \end{theorem}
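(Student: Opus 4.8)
The plan is to reduce harmonicity to a single derivative condition via the Wirtinger form of the Laplacian already recorded above, $\Delta h = 4\,\partial^2 h/\partial z\,\partial\overline{z}$, so that a map is harmonic exactly when its mixed second derivative $h_{z\overline{z}}$ vanishes. The only other tool I need is the complex chain rule for a composition $h = u\circ v$, namely $h_z = (u_w\circ v)\,v_z + (u_{\overline{w}}\circ v)\,\overline{v_{\overline{z}}}$ together with $h_{\overline{z}} = (u_w\circ v)\,v_{\overline{z}} + (u_{\overline{w}}\circ v)\,\overline{v_z}$, which I would apply once with the harmonic map $f$ on the outside (to treat $f\circ g$) and once with $f$ on the inside (to treat $g\circ f$). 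In each of the two biconditionals the sufficiency (``if'') direction is a direct substitution into $h_{z\overline{z}}$, whereas the necessity (``only if'') direction is the substantive part and rests on a non-degeneracy argument for the fixed harmonic mapping $f$.

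For $f\circ g$, I would differentiate once in $z$ and once in $\overline{z}$ by the chain rule, then impose harmonicity of the outer map through $f_{w\overline{z}}\!=0$, i.e. $f_{w\overline{w}}=0$. After substitution the expression for $(f\circ g)_{z\overline{z}}$ collapses to a single identity in which the derivatives of $g$ appear weighted by the partials $f_w, f_{\overline{w}}, f_{ww}, f_{\overline{w}\,\overline{w}}$ of $f$ evaluated along $g$. For a genuine (non-constant) harmonic $f$ these weights cannot all degenerate on an open set, so matching the identity term by term forces the relevant derivatives of $g$ to vanish and yields $g(z)=az+b\overline{z}+c$. Conversely, substituting an affine $g$ of this form makes every term of $(f\circ g)_{z\overline{z}}$ vanish identically, so $f\circ g$ is harmonic; this closes the first equivalence.

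For $g\circ f$, the same differentiation with $f$ now on the inside and the substitution $f_{z\overline{z}}=0$ reduces $(g\circ f)_{z\overline{z}}$ to an identity controlled by the first-order data $f_z,f_{\overline{z}}$ of the harmonic map, and collecting terms produces a product-type condition equivalent to $g_w\,g_{\overline{w}}=0$ holding pointwise, i.e. at each point $g$ is either conformal or anti-conformal. The main obstacle lies precisely here: I must upgrade this pointwise alternative to a \emph{global} one, concluding that $g$ is analytic on the entire domain or anti-analytic on the entire domain, rather than switching type from region to region. I would settle this by a connectedness argument, writing the domain as the union of the open sets where $g_w\neq 0$ and where $g_{\overline{w}}\neq 0$ and showing these cannot both be nonempty, so one of $g_w,g_{\overline{w}}$ vanishes identically; the accompanying care is to justify the non-degeneracy of $f$ invoked throughout, which is exactly where the hypothesis that $f$ is a non-constant harmonic mapping is indispensable. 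Sufficiency here is immediate, since if $g$ is analytic then $g_{\overline{w}}=0$ annihilates the identity, and symmetrically if $g$ is anti-analytic.
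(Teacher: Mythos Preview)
The paper does not actually prove this theorem; it merely quotes it and attributes it to Chen \etal\ \cite{chen2010compositions}. So there is no proof in the paper to compare your sketch against. That said, your sketch contains genuine computational errors that would make the argument fail as written.

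Carry out the chain-rule computation you propose for $h=f\circ g$. With $f_{w\bar w}=0$ one gets
\[
(f\circ g)_{z\bar z}=f_{ww}\,g_z g_{\bar z}+f_w\,g_{z\bar z}+f_{\bar w\bar w}\,\overline{g_z g_{\bar z}}+f_{\bar w}\,\overline{g_{z\bar z}}.
\]
Your claim that an affine $g(z)=az+b\bar z+c$ ``makes every term vanish identically'' is false: here $g_{z\bar z}=0$ but $g_z g_{\bar z}=ab$, leaving $f_{ww}\,ab+f_{\bar w\bar w}\,\overline{ab}$, which is nonzero for a generic harmonic $f$ (take $f(w)=w^2+\bar w^2$, $a=b=1$). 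Likewise for $h=g\circ f$ with $f_{z\bar z}=0$ one gets
\[
(g\circ f)_{z\bar z}=g_{ww}\,f_z f_{\bar z}+g_{w\bar w}\,(|f_z|^2+|f_{\bar z}|^2)+g_{\bar w\bar w}\,\overline{f_z f_{\bar z}},
\]
which is a second-order condition on $g$, not the first-order product $g_w g_{\bar w}=0$ you assert. In particular, for analytic $g$ one still has the surviving term $g_{ww}\,f_z f_{\bar z}$, so ``$g_{\bar w}=0$ annihilates the identity'' is incorrect (take $g(w)=w^2$, $f(z)=z+\bar z$; then $g\circ f=(z+\bar z)^2$ has $\Delta(g\circ f)=8$).

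What these computations actually show is that the two clauses in the theorem, as printed in the paper, are interchanged: it is $g\circ f$ that is harmonic precisely when $g$ is affine (then $g\circ f=af+b\bar f+c$ is a linear combination of harmonic functions), and it is $f\circ g$ that is harmonic precisely when $g$ is analytic or anti-analytic (then $f\circ g$ decomposes via $f=h+\bar k$ into analytic and anti-analytic pieces). The paper's subsequent use of the theorem in the proof of Theorem~\ref{rotation keep f} is in fact consistent with this corrected reading. Your non-degeneracy and connectedness ideas for the ``only if'' directions are reasonable once applied to the correct identities, but the sufficiency checks you wrote down are attached to the wrong compositions and therefore do not go through.
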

        
        The harmonic function on a compact set is determined by its restriction to the boundary, which follows from the maximum principle, and the progress to find a harmonic function from the given domain and the value in domain's boundary is call \textit{Dirichlet problem}. For a special case, where the domain is unit disk, \textit{Poisson integral} shows a method to obtain the solution $H : \overline{\D} \rightarrow \C$ of Dirichlet problem from a continuous $f$ on $\partial \D$
            \begin{equation}\label{poisson integral}
                H(re^{i\theta}) = \frac{1}{2\pi}\int_0^{2\pi} \frac{(1-r^2)f(e^{i \varphi})}{1 - 2 r cos (\varphi - \theta) + r^2} d\varphi.
            \end{equation}
        Such $H$ is harmonic on $\D$ and continuous on $\overline{\D}$ and has the same value with $f$ on the $\partial \D$, i.e. $H(e^{i\theta}) = f(e^{i\theta})$ (see figure \ref{harmonic}). Of course, it is uniquely determined by $f$.
        
        \begin{figure}
            \begin{center}
                \includegraphics[width=12cm]{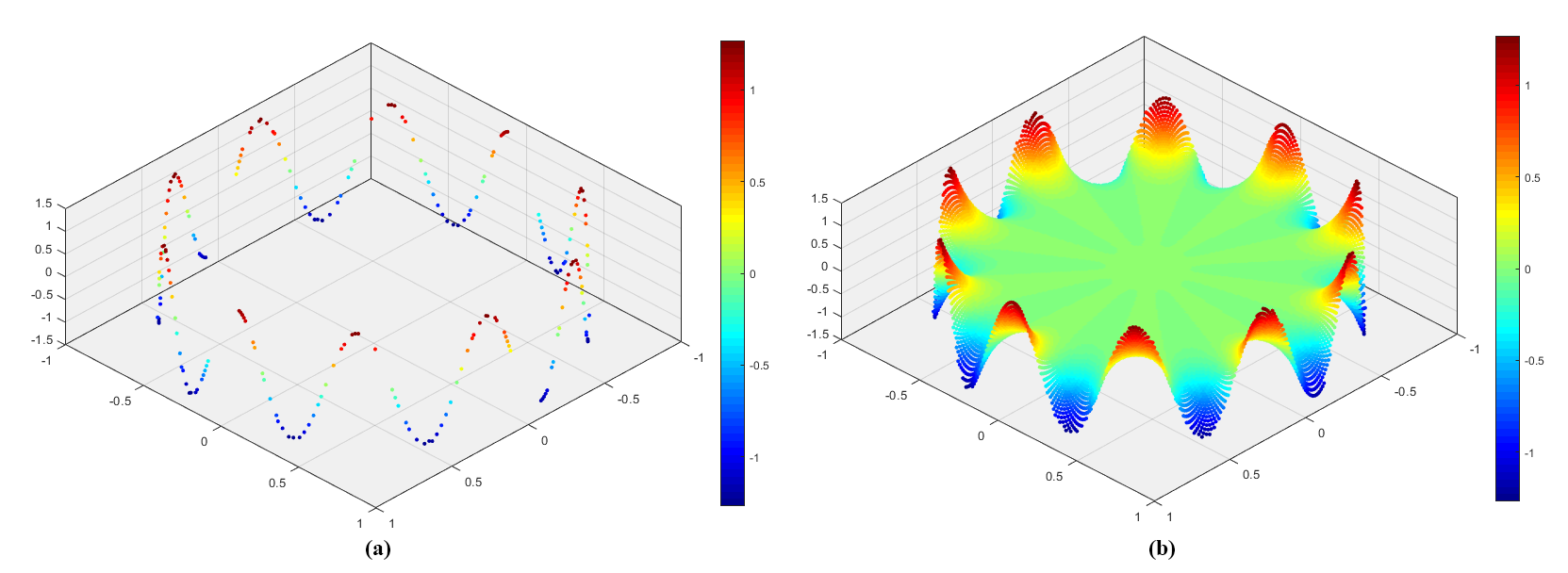}
            \end{center}
            \caption{(a) Continuous function $f(e^{i \theta}) = \sin(10 \theta) + \cos(10 \theta)$ defined on $\partial \D$; (b) The corresponding harmonic function $H$ generated from $f$ by the equation (\ref{poisson integral}). Note that we used real-valued function to illustrate the progress of harmonic extension for the convenience, but it is also feasible for complex-valued function.}
            \label{harmonic}
        \end{figure}

\section{Harmonic Beltrami signature (HBS)}\label{main}
In this chapter, we describe our proposed shape signature, called the {\it Harmonic Beltrami signature (HBS)}, to represent a 2D bounded simply-connected domain $\Omega$. The space of HBS inherits a natural metric, so that geometric distance between two shapes can be easily measured. In the following sections, the definition of HBS and some of its theoretical analysis are addressed.

\subsection{Definition of Harmonic Beltrami Signature}
Consider a bounded simply-connected domain $\Omega\subset \mathbb{C}$. Suppose $\Omega$ is a quasicircle, which is the image of the unit disk under a quasiconformal map. Let $f = \Phi_1^{-1} \circ \Phi_2$ be the conformal welding of $\Omega$, where  $\Phi_1: \D \rightarrow \Omega$ and $\Phi_2: \D^c \rightarrow \Omega^c$ are the conformal mappings. Denote the harmonic extension of $f$ as $H:\mathbb{D}\to \mathbb{D}$ by equation (\ref{poisson integral}).

\begin{definition}
The {\it Harmonic Beltrami Signature (HBS)} is a complex-valued function $B:\mathbb{D}\to \mathbb{D}$ with $||B||_{\infty}<1$ defined as
\begin{equation}
        B:= \mu_H = \frac{H_{\overline{z}}}{H_z}.
\end{equation}
\end{definition}

Note that the HBS is not unique without suitable normalization. According to Riemann mapping theorem, the conformal mappings $\Phi_1: \D \rightarrow \Omega$ and $\Phi_2: \D^c \rightarrow \Omega^c$ are not unique. Suppose $\tilde{\Phi}_1 = \Phi_1 \circ M_1$, $\tilde{\Phi}_2 = \Phi_2 \circ M_2$, where $M_1, M_2$ are Mobi\"us transformations, the corresponding conformal welding is
\begin{equation}\label{tilde f}
    \tilde{f} = \tilde{\Phi}_1^{-1} \circ \tilde{\Phi}_2 = M_1^{-1} \circ \Phi_1^{-1} \circ \Phi_2 \circ M_2 = M_1^{-1} \circ f \circ M_2.
\end{equation}
Therefore, the harmonic extension and hence the HBS are not unique due to conformal ambiguities. This motivates us to give the following definition of equivalence.

\begin{definition}
Two HBS $B$ and $\tilde{B}$ are said to be {\it equivalent} if $B=\mu_H$ and $\tilde{B} = \mu_{\tilde{H}}$, where $H$ and $\tilde{H}$ are respectively the harmonic extensions of a diffeomorphism $f:\mathbb{S}^1\to \mathbb{S}^1$ and $\tilde{f} = M_1^{-1} \circ f \circ M_2$ for some Mobi\"us transformations $M_1$ and $M_2$. In this case, we denote $B \sim \tilde{B}$. Also, the equivalence class of $B$ is denoted by $[B]$.
\end{definition}

In this work, we consider the quotient space of HBS $\mathcal{B} = \{B:\mathbb{D}\to \mathbb{D}:B \text{ is a HBS} \} \,/ \sim$ to study the quotient space of shapes $\mathcal{S} = \{\Omega \subset \C : \Omega \text{ is bounded simply-connected}\} \, / \approx$, where $\Omega \approx \bar{\Omega}$ iff $\bar{\Omega} = F(\Omega)$ and $F$ is composed of translation, rotation and scaling. The following theorem illustrates that HBS is indeed an effective representation.

\begin{theorem}\label{one to one equivalence class}
There is a one-to-one correspondence between $\mathcal{B}$ and $\mathcal{S}$. In particular, given $[B]\in \mathcal{B}$, its associated shape $\Omega$ can be determined up to a Mobi\"us transformation. Also, if $\Phi_2$ is chosen such that $\Phi_2(\infty) = \infty$, $\Omega$ is determined up to a translation, rotation and scaling.
\end{theorem}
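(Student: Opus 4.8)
The plan is to produce two maps, a forward map $\Psi\colon\mathcal S\to\mathcal B$ sending a shape to the class of its HBS and a reconstruction map $\Xi\colon\mathcal B\to\mathcal S$, to check that each is well defined on the respective quotient, and to verify that $\Psi$ and $\Xi$ are mutually inverse. The contrast between ``determined up to a M\"obius transformation'' and ``determined up to a translation, rotation and scaling'' will come out according to how much of the freedom in the exterior Riemann map $\Phi_2$ is frozen by requiring $\Phi_2(\infty)=\infty$.

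For the forward map I would, given a quasicircle $\Omega$, choose conformal $\Phi_1\colon\D\to\Omega$ and $\Phi_2\colon\D^c\to\Omega^c$, form the welding $f=\Phi_1^{-1}\circ\Phi_2$, take its Poisson extension $H$ as in (\ref{poisson integral}), and set $\Psi(\Omega)=[\mu_H]$. Because $\Omega$ is a quasicircle, $f$ is quasisymmetric, so by Rad\'o--Kneser--Choquet $H$ is a harmonic diffeomorphism of $\D$, and it is quasiconformal, hence $\mu_H$ is a genuine HBS. Well-definedness on $\mathcal S$ has two halves: replacing $(\Phi_1,\Phi_2)$ by $(\Phi_1\circ M_1,\Phi_2\circ M_2)$ changes $f$ to $M_1^{-1}\circ f\circ M_2$ by (\ref{tilde f}), so the new signature lies in $[\mu_H]$ by the very definition of $\sim$; and replacing $\Omega$ by $F(\Omega)$ for a translation, rotation or scaling $F$ leaves $f$ unchanged because $F$ is conformal, so $F\circ\Phi_1$ and $F\circ\Phi_2$ are admissible and $(F\Phi_1)^{-1}\circ(F\Phi_2)=f$.

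For the reconstruction I would, given $[B]$ and a representative $B$, extend $B$ by $0$ off $\D$ and invoke Theorem~\ref{Measurable Riemannian Mapping Theorem} to get a quasiconformal $\Psi\colon\C\to\C$ with $\mu_\Psi=B$ on $\D$ and $\mu_\Psi=0$ on $\D^c$, unique up to a M\"obius transformation; imposing $\Psi(\infty)=\infty$ reduces that ambiguity to post-composition with an affine map, i.e.\ a translation, rotation and scaling. Set $\Xi([B])=[\Psi(\D)]$; this is a quasicircle since $\Psi$ is quasiconformal and $\Psi|_{\D^c}$ is conformal, and it has the right welding: writing $B=\mu_H$ for the Poisson extension $H$ of a circle homeomorphism $f$ (which is what being an HBS means), the maps $\Psi|_{\D}$ and $H$ have the same Beltrami coefficient on $\D$, so by the composition rule (\ref{mu of composition}) together with $\|B\|_\infty<1$ the map $\Phi_1:=\Psi|_{\D}\circ H^{-1}$ is conformal, $\Phi_2:=\Psi|_{\D^c}$ is conformal with $\Phi_2(\infty)=\infty$, and since $\Psi$ is a homeomorphism $\Phi_1^{-1}\circ\Phi_2$ reduces to $H$ on $\mathbb{S}^1$, so $f$ is the conformal welding of $\Psi(\D)$. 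The same ``equal Beltrami coefficient $\Rightarrow$ differs by a disk automorphism'' principle shows that passing to another representative of $[B]$ only replaces $f$ by $M_1^{-1}\circ f\circ M_2$, and that once $\Phi_2(\infty)=\infty$ is in force the admissible $M_2$ are rotations; together with the affine ambiguity of $\Psi$ this makes $\Xi$ well defined into $\mathcal S$.

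It remains to see $\Psi$ and $\Xi$ are mutually inverse. Running $\Xi$ on $\Psi(\Omega)=[\mu_H]$ produces, by the paragraph above, a quasicircle with conformal welding $f$ and, when $\Phi_2(\infty)=\infty$ is imposed throughout, agreeing with the original $\Omega$ up to an affine map; running $\Psi$ on $\Xi([B])$ returns $[\mu_H]=[B]$. Dropping the normalization of $\Phi_2$ simply enlarges the residual freedom of $\Xi$ back to a general M\"obius transformation, which is the first refinement in the statement. The step I expect to be the main obstacle is not any single estimate but the compatibility bookkeeping: checking that the three sources of non-uniqueness here --- choice of Riemann maps, choice of representative in $[B]$, and the disk-automorphism freedom in recovering $H$ (hence $f$) from its Beltrami coefficient --- correspond exactly to the equivalences $\sim$ on $\mathcal B$ and $\approx$ on $\mathcal S$, and, for the sharp assertion, that imposing $\Phi_2(\infty)=\infty$ really does collapse the M\"obius freedom precisely onto the group of similarities and no further.
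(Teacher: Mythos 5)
Your proposal is correct and follows essentially the same route as the paper: the forward map is well defined by the very definition of $\sim$, and the inverse is obtained by extending $B$ by zero to $\D^c$ and solving the Beltrami equation via Theorem \ref{Measurable Riemannian Mapping Theorem}, with the normalization $\Phi_2(\infty)=\infty$ cutting the M\"obius ambiguity down to similarities. Your explicit factorization of the reconstructed map on $\D$ as (conformal)\,$\circ$\,(harmonic extension of $f$) is in fact slightly more careful than the paper's terse assertion that ``$G$ is harmonic,'' but it is the same argument.
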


\begin{proof}
Given $\Omega$, there exists a unique $[B]\in \mathcal{B}$ corresponding to $\Omega$ by the definition of equivalence class of HBS. Conversely, let $[B]\in \mathcal{B}$ and $B$ is a HBS in $[B]$. Define $\mu:\mathbb{C}\to \mathbb{C}$ as
\begin{equation}\label{reconstructionmu}
\mu := \begin{cases}
B \text{ on }\mathbb{D}\\
0 \text{ on }\mathbb{D}^c.
\end{cases}
\end{equation}

According to Measurable Riemannian Mapping Theorem \ref{Measurable Riemannian Mapping Theorem}, there exists $G:\mathbb{C}\to \mathbb{C}$ such that $G_{\bar{z}}/G_z = \mu$. $G$ is unique up to a Mobi\"us transformation. In other words, if $G_1$ and $G_2$ are two quasiconformal maps satisfying the above requirement, then $G_2=M\circ G_1$, where $M$ is a Mobi\"us transformation. Let $\Omega = G(\mathbb{D})$, we claim that the HBS of $\Omega$ is $B$. To see this, let $\Phi_1: \mathbb{D}\to \Omega$ be the conformal parameterization of $\Omega$. By construction, $G|_{\mathbb{D}^c}: \mathbb{D}^c\to \Omega^c$ is conformal. The conformal welding of $\Omega$ is $\Phi_1^{-1}\circ G|_{\partial\mathbb{D}}$. As $\Phi_1$ is conformal and $G$ is harmonic, $\Phi_1^{-1}\circ G$ is the harmonic extension of the welding map. Thus, the HBS of $\Omega$ is: $\mu_{\Phi_1^{-1}\circ G}=\mu_G = B$.

Now, $\Omega$ is determined up to a Mobi\"us transformation $M=\frac{az+b}{cz+d}$. If $G(\infty) = \infty$, $M$ is in the form: $M = az+b= re^{i\theta}z +b$, $r\in \mathbb{R}^+$, $\theta \in [0,2\pi)$ and $b\in \mathbb{C}$. Hence, $\Omega$ is uniquely determined up to a scaling, rotation and translation, which are reflected by $r, \theta$ and $b$ respectively. 

Suppose $B_1, B_2 \in [B]$ are two different HBS, we want to demonstrate that their reconstructed domains $\Omega_1$ and $\Omega_2$ are the same up to scaling, rotation and translation. By definition, their corresponding conformal welding $f_1, f_2$ satisfy $f_2 = M_1^{-1} \circ f_1 \circ M_2$. Let $f_1 = \Phi_1^{-1} \circ \Phi_2$, then $f_2 = (\Phi_1 \circ M_1)^{-1} \circ (\Phi_2 \circ M_2)$ and $H_1, H_2$ are the harmonic extension of $f_1$ and $f_2$ respectively. It's easy to check that the Beltrami coefficient of 
\begin{equation} 
    G_1 := \begin{cases}
    \Phi_1 \circ H_1 \text{ on } \mathbb{\D}\\
    \Phi_2 \text{ on } \mathbb{D}^c
    \end{cases}
\end{equation}
is just $\mu$ in equation (\ref{reconstructionmu}), so $\Omega_1 = T_1 \circ G_1(\D) = T_1 \circ \Phi_1 \circ H_1 (\D)$, where $T_1$ is a composition of scaling, rotation and translation. Note that $H_1$ maps unit disk to unit disk, we have $\Omega_1 = T_1 \circ \Phi_1 (\D)$.
Similarly, let
\begin{equation}
    G_2 := \begin{cases}
    \Phi_1 \circ M_1 \circ H_2 \text{ on } \mathbb{\D}\\
    \Phi_2 \circ M_2 \text{ on } \mathbb{D}^c,
    \end{cases}
\end{equation}
the reconstructed domain of $B_2$ is $\Omega_2 = T_2 \circ G_2(\D) = T_2 \circ \Phi_1 \circ M_1 \circ H_2(\D)$, where $T_2$ is also composed of scaling, rotation and translation. Since $M_1(\D) = H_2(\D) = \D$, $\Omega_2$ can be represented as $\Omega_2 = T_2 \circ \Phi_1(\D)$. Therefore, $\Omega_1= T_1 \circ T_2^{-1} (\Omega_2)$, which shows that the reconstructed shape is invariant (up to a scaling, rotation and translation) regardless of the selection of HBS in given $[B]$.
\end{proof}

The above theorem demonstrates that the HBS is indeed an effective geometric representation or ``fingerprint" of a shape. It determines a shape up to a scaling, rotation and translation. 

\bigskip

\noindent {\it Remark:}
{\it The proof of the above theorem also provides us with a method to reconstruct the shape associated to a given HBS $B$. More precisely, given $B$, we can define a Beltrami coefficient $\mu$ according to equation (\ref{reconstructionmu}). By solving the Beltrami's equation with Beltrami coefficient $\mu$, we obtain the quasiconformal map $G$. If we fix $G$ at $\infty$, the associated shape $\Omega = G(\mathbb{D})$ is uniquely determined up to a rotation, translation and scaling. The associated quasiconformal map can be solved by some computational methods developed earlier \cite{lui2013texture,lui2012optimization,choi2020shape}}

    \begin{figure}
        \begin{center}
            \includegraphics[width=13cm]{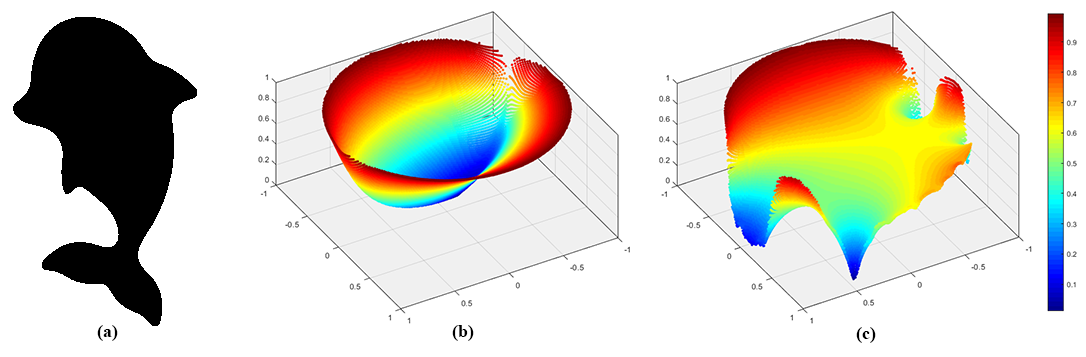}
        \end{center}
        \caption{Illustration of Harmonic Beltrami signature. (a) The input shape, a dolphin; (b) The corresponding harmonic extension, where the conformal welding is shown in figure \ref{fig1} (b); (c) The Harmonic Beltrami signature of (a). Remark that the harmonic function and Harmonic Beltrami signature should be complex-valued function and we only show modulus of them in z-axis in (b) and (c).}
        \label{illu of BS}
    \end{figure}

\subsection{Unique representative of $[B]$}
    As discussed, every shape can be represented by its associated equivalence class of HBS. In order to measure the geometric difference between shapes based on HBS, it is necessary to find a unique representative in the equivalence class $[B]$. Once the unique representatives of two shapes are determined, the geometric difference between them can be easily measured, such as the $L^2$ distance.
            
    In order to proceed to investigate the relationship between $B_1$ and $B_2$, the following theorem is needed.

    \begin{theorem}\label{rotation keep f}
        Suppose $f$ and $\tilde{f}$ are continuous map from $\mathbb{S}^1$ to itself and $\tilde{f} = M_1^{-1} \circ f \circ M_2$, where $M_1, M_2$ are both Mobious transformations. $H$ and $\tilde{H}$ are harmonic extension of $f$ and $\tilde{f}$, then $\tilde{H} = M_1^{-1} \circ H \circ M_2$ iff $M_1$ is a rotation.
    \end{theorem}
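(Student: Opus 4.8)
The plan is to prove the two implications separately, using only two elementary properties of the Poisson (harmonic) extension operator $P[\cdot]$ on $\D$: its $\C$-linearity, $P[ag+b]=aP[g]+b$ for constants $a,b\in\C$; and its conformal invariance on the right, $P[g\circ M]=P[g]\circ M$ for every disk automorphism $M$. I would justify the latter by writing the harmonic map $P[g]=h+\overline{k}$ with $h,k$ holomorphic on $\D$, noting that $P[g]\circ M=(h\circ M)+\overline{(k\circ M)}$ is again harmonic, has boundary values $g\circ M$, and therefore equals $P[g\circ M]$ by uniqueness of the solution of the Dirichlet problem (equation (\ref{poisson integral})).

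For the ``if'' direction, assume $M_1$ is a rotation, so $M_1^{-1}(w)=e^{-i\alpha}w$ is linear; then on $\mathbb{S}^1$ we have $\tilde f=e^{-i\alpha}(f\circ M_2)$, and I would simply chain the two properties:
\[
  \tilde H = P[\tilde f] = P\bigl[e^{-i\alpha}(f\circ M_2)\bigr] = e^{-i\alpha}\,P[f\circ M_2] = e^{-i\alpha}\,(H\circ M_2) = M_1^{-1}\circ H\circ M_2 .
\]
Note that $M_2$ may be an arbitrary Möbius map here, matching the statement, which constrains only $M_1$.

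For the ``only if'' direction, assume $\tilde H=M_1^{-1}\circ H\circ M_2$. Since $\tilde H$ is harmonic, so is the right-hand side. I would put $u:=H\circ M_2$ (harmonic, by the argument above), write $M_1^{-1}$ as a holomorphic function $\phi$ of one variable, and compute with Wirtinger derivatives, using $u_{z\overline{z}}=0$:
\[
  \partial_z\partial_{\overline{z}}(\phi\circ u)=\phi''(u)\,u_z\,u_{\overline{z}}+\phi'(u)\,u_{z\overline{z}}=\phi''(u)\,u_z\,u_{\overline{z}} .
\]
Harmonicity then forces $\phi''(u(z))\,u_z(z)\,u_{\overline{z}}(z)=0$ on $\D$. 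A short calculation with $M_1(z)=e^{i\theta}\frac{z-a}{1-\overline{a}z}$ shows that $(M_1^{-1})''$ equals a nonzero constant times $(1+\overline{a}e^{-i\theta}w)^{-3}$, hence is nowhere zero on $\D$ unless $a=0$, i.e.\ unless $M_1$ is a rotation. So if $M_1$ is not a rotation, then $\phi''(u(z))\neq 0$ (as $u(z)\in\D$), forcing $u_z\,u_{\overline{z}}\equiv 0$; writing $u=h+\overline{k}$ with $h,k$ holomorphic this reads $h'\,\overline{k'}\equiv 0$, and discreteness of the zero set of a nonzero holomorphic function gives $h'\equiv0$ or $k'\equiv0$. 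Then $u$, and hence $H=u\circ M_2^{-1}$, is holomorphic or anti-holomorphic, so $f=H|_{\mathbb{S}^1}$ is the restriction of a Möbius (or anti-Möbius) automorphism of $\D$ — excluded under the standing assumption that $f$ is the nontrivial conformal welding of a shape that is not a round disk. Hence $M_1$ is a rotation.

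The main obstacle is the ``only if'' direction, for two reasons. First, the chain-rule identity must be set up carefully: $\phi$ depends only on the holomorphic variable $w$ while $u$ depends on both $z$ and $\overline{z}$, and the vanishing of the $\phi'(u)u_{z\overline{z}}$ term is precisely where harmonicity of $u$ is used. Second — and this is the genuine caveat — the conclusion needs the nondegeneracy that $H$ is neither holomorphic nor anti-holomorphic: if $H$ were holomorphic, then $M_1^{-1}\circ H\circ M_2$ would be holomorphic (hence equal to $\tilde H$) for \emph{every} $M_1$, and the implication would be false. I would therefore either add this hypothesis explicitly or note that it holds automatically unless the associated domain is a disk. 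The remaining ingredients — linearity of $P$, conformal invariance of harmonicity, and discreteness of zero sets — are routine.
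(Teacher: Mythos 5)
Your proof is correct, and in the ``only if'' direction it takes a more self-contained and more careful route than the paper. The ``if'' direction is essentially identical to the paper's: both arguments observe that $M_1^{-1}\circ H\circ M_2$ is harmonic when $M_1$ is a rotation, agrees with $\tilde f$ on $\partial\mathbb{D}$, and hence equals $\tilde H$ by uniqueness of the Dirichlet solution; you package this as linearity plus conformal naturality of the Poisson operator, which is the same content. For the ``only if'' direction the paper invokes Theorem~\ref{composition of harmonic} as a black box to conclude that $M_1^{-1}(z)=az+b\overline z+c$, whereas you prove the needed instance directly from $\partial_z\partial_{\overline z}(\phi\circ u)=\phi''(u)\,u_z u_{\overline z}$ and the explicit computation of $(M_1^{-1})''$. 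The payoff of your computation is that it exposes exactly where the implication can fail: if $u_z u_{\overline z}\equiv 0$, i.e.\ if $H$ is holomorphic or anti-holomorphic, then $\tilde H=M_1^{-1}\circ H\circ M_2$ holds for \emph{every} Möbius $M_1$, and the ``only if'' direction of the theorem is genuinely false. The paper's proof silently assumes this degeneracy away --- the cited characterization of harmonicity-preserving post-compositions is a statement about all harmonic maps, not about a fixed one, and applying it pointwise to the specific map $H\circ M_2$ requires precisely the nondegeneracy you identify. Since $H$ is holomorphic only when the welding $f$ is the restriction of a Möbius automorphism (i.e.\ the shape is a round disk), your suggestion to add this as an explicit hypothesis, or to note that it is automatic for nontrivial weldings, is a genuine improvement rather than an omission on your part.
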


    \begin{proof}
        Since $M_1$ is a Mobi\"us transformations, $M_1^{-1}$ is also a Mobi\"us transformation, so it can be written as $M_1^{-1}(z) = e^{i \theta}\frac{z - p}{1 - \overline{p}z}$, where $\theta \in [0, 2\pi)$ and $p \in \D$.

        $\Rightarrow :$ From theorem \ref{composition of harmonic}, we know that $H \circ M_2$ is harmonic since there is no doubt that $M_2$ is conformal. Since $\tilde{H} = M_1^{-1} \circ H \circ M_2$ and $H \circ M_2$ are both harmonic, $M_1^{-1}$ can be represented as $M_1^{-1}(z) = az + b \overline{z} + c$, where $a, b, c \in \C$. Therefore, we have 
        \begin{equation*}
            e^{i \theta}\frac{z - p}{1 - \overline{p}z} = az + b \overline{z} + c,
        \end{equation*}
        which means $p = b = c = 0$, $a = e^{i \theta}$ and so $M_1$ is a rotation.

        $\Leftarrow :$ When $M_1$ is a rotation, $M_1^{-1}$ is also a rotation, so $M_1^{-1} \circ H \circ M_2$ is a harmonic function according to theorem \ref{composition of harmonic}. It's easy to check that
        \begin{equation*}
            M_1^{-1} \circ H \circ M_2 (e^{i \theta})
            = M_1^{-1} \circ f \circ M_2(e^{i \theta})
            = \tilde{f}(e^{i \theta}) = \tilde{H} (e^{i \theta}),
        \end{equation*}
        which means $M_1^{-1} \circ H \circ M_2$ and $\tilde{H}$ have the same boundary value. From the uniqueness of harmonic mapping, $M_1^{-1} \circ H \circ M_2 = \tilde{H}$.
    \end{proof}
            
    Note that if $M_1$ is not only a rotation, $\tilde{f}$ and $\tilde{H}$ also exist but $\tilde{H} \neq M_1^{-1} \circ H \circ M_2$, as shown in figure \ref{harmonic extension not rotation}.

    \begin{figure}
        \begin{center}
            \includegraphics[width=11cm]{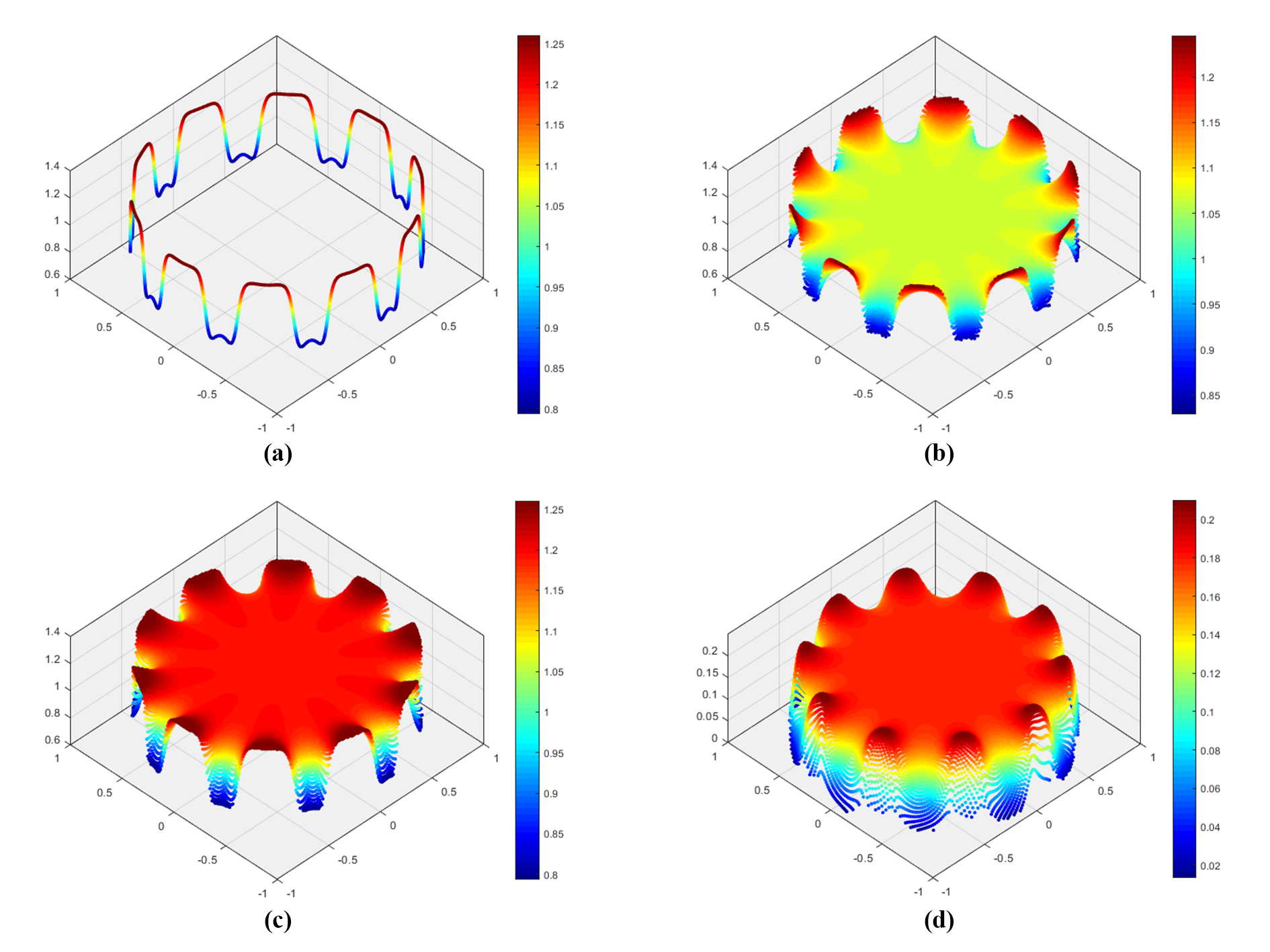}
        \end{center}
        \caption{Let $f(e^{i \theta}) = \sin(10 \theta) + \cos(10 \theta)+1.5$, $M_1(z) = e^{i \tau}\cfrac{z-p}{1-\overline{p}z}$, where $p=0.6+0.6i$, $\tau = 0.8$. The image of $f$ and its harmonic extension $H$ are shown in figure \ref{harmonic}. (a) $\tilde{f} = M_1 \circ f$; (b) harmonic extension $\tilde{H}$ of $\tilde{f}$; (c) $H' = M_1 \circ H$; (d) $H' - \tilde{H}$, and it's clear that $M_1 \circ H \neq \tilde{H}$ except on the boundary.}
        \label{harmonic extension not rotation}
    \end{figure}

    Let $B_1$ and $B_2$ be two harmonic Beltrami signatures in the same equivalence class $[B]$ and $B_k$ is computed by conformal parameterizations $\Phi_1^k:\mathbb{D}\to \Omega$ and $\Phi_2^k:\mathbb{D}^c\to \Omega^c$, where $k=1,2$. Now, we have the following theorem about the relationship between $B_1$ and $B_2$.
            
    \begin{theorem}\label{b1 b2 relationship}
        Suppose $B_k$ and $\Phi_j^k$ are defined as above for $j = 1, 2$ and $k=1,2$. If $\Phi_j^2 = \Phi_j^1\circ M_j$ where $M_j(z) = e^{i\theta_j}z$, then:
        \begin{equation}
          B_2(z) = e^{-2i\theta_2} B_1(e^{i\theta_2}z)  
        \end{equation}
    \end{theorem}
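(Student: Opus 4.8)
The plan is to reduce the statement to the composition rules for Beltrami coefficients recorded in Section~\ref{background}. First I would unwind the hypothesis: since $\Phi_j^2 = \Phi_j^1\circ M_j$ with $M_j(z)=e^{i\theta_j}z$, the two conformal weldings are related by
\begin{equation*}
f_2 = (\Phi_1^2)^{-1}\circ\Phi_2^2 = M_1^{-1}\circ(\Phi_1^1)^{-1}\circ\Phi_2^1\circ M_2 = M_1^{-1}\circ f_1\circ M_2,
\end{equation*}
exactly as in equation~(\ref{tilde f}). Because $M_1(z)=e^{i\theta_1}z$ is a rotation, so that $M_1^{-1}(z)=e^{-i\theta_1}z$ has the form required in Theorem~\ref{rotation keep f}, that theorem applies and yields $H_2 = M_1^{-1}\circ H_1\circ M_2$ for the harmonic extensions; that is, $H_2(z) = e^{-i\theta_1}H_1(e^{i\theta_2}z)$.

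Next I would compute $B_2=\mu_{H_2}$ from this identity. One clean route is through the composition formulas~(\ref{mu of composition})--(\ref{mu of conformal composition}): since $M_1^{-1}$ is conformal, post-composition by it does not change the Beltrami coefficient, so $\mu_{H_2}=\mu_{H_1\circ M_2}$. Applying~(\ref{mu of composition}) to $H_1\circ M_2$ with inner map $M_2$ conformal (hence $\mu_{M_2}=0$) and $\tau = \overline{(M_2)_z}/(M_2)_z = e^{-i\theta_2}/e^{i\theta_2} = e^{-2i\theta_2}$ gives
\begin{equation*}
B_2(z) = \mu_{H_1\circ M_2}(z) = e^{-2i\theta_2}\,(\mu_{H_1}\circ M_2)(z) = e^{-2i\theta_2}B_1(e^{i\theta_2}z).
\end{equation*}
Alternatively the same identity drops straight out of the Wirtinger chain rule applied to $H_2(z)=e^{-i\theta_1}H_1(e^{i\theta_2}z)$: the factor $e^{-i\theta_1}$ and one factor $e^{i\theta_2}$ cancel in the quotient $(H_2)_{\bar z}/(H_2)_z$, leaving the prefactor $e^{-2i\theta_2}$ and the argument shift $z\mapsto e^{i\theta_2}z$.

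The computation itself is short, so the work is not in the calculation but in the reduction: the one point needing care is verifying that the hypothesis of Theorem~\ref{rotation keep f} is genuinely met, i.e. that the Möbius factor acting on the \emph{target} side, $M_1$, is a pure rotation about the origin, since this is precisely what upgrades the boundary agreement of $M_1^{-1}\circ H_1\circ M_2$ and $H_2$ on $\mathbb{S}^1$ to equality on all of $\mathbb{D}$. It is also worth remarking, as a consistency check, that $\theta_1$ disappears from the final formula, exactly as it must: a rotation of the target disk is conformal and therefore invisible to the Beltrami coefficient, so only the source-side angle $\theta_2$ survives in $B_2$.
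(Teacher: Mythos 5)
Your proposal is correct and follows essentially the same route as the paper's proof: reduce to $f_2 = M_1^{-1}\circ f_1\circ M_2$, invoke Theorem~\ref{rotation keep f} (using that $M_1$ is a rotation) to get $H_2 = M_1^{-1}\circ H_1\circ M_2$, discard the conformal post-composition by $M_1^{-1}$ via equation~(\ref{mu of conformal composition}), and evaluate $\mu_{H_1\circ M_2}$ with equation~(\ref{mu of composition}) to obtain $B_2(z)=e^{-2i\theta_2}B_1(e^{i\theta_2}z)$. Your added consistency check that $\theta_1$ must drop out is a nice touch but does not change the argument.
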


    \begin{proof}
        Under the assumptions, we have: 
        \[
        f_2 = M_1^{-1}\circ f_1 \circ M_2,
        \]
        \noindent where $f_1 = {\Phi_1^1}^{-1}\circ \Phi_2^1$ and $f_2 = {\Phi_1^2}^{-1}\circ \Phi_2^2$. Let $H_1$ and $H_2$ be the harmonic extensions of $f_1$ and $f_2$ respectively. According to theorem \ref{rotation keep f}, we have:
        \[
        H_2 = M_1^{-1}\circ H_1 \circ M_2.
        \]
        Since $M_1$ is a Mobi\"us transformation and thus conformal, we have $B_2 = \mu_{H_2} = \mu_{H_1\circ M_2}$. Using equation (\ref{mu of composition}), we obtain:
        \begin{equation*}
        B_2(z) = \mu_{H_1\circ M_2}(z) = \frac{\bar{M_2}}{M_2} \mu_{H_1}\circ M_2(z) = e^{-2i\theta_2} B_1( e^{i\theta_2} z).
        \end{equation*}
    \end{proof}
    
    When $\Phi_j^1$ and $\Phi_j^2$ are unique up to a rotation for $j=1, 2$, the relationship between $B_1$ and $B_2$ is shown in the above theorem. And with further normalization on the HBS, we can obtain a unique representative of $[B]$.

    \begin{theorem}\label{unique B}
        Suppose $B = \mu_H$ and $\tilde{B} = \mu_{\tilde{H}}$ are two Harmonic Beltrami signatures in given equivalence class $[B]$ for some domain $\Omega$, where $H$ and $\tilde{H}$ are the corresponding harmonic extensions of conformal welding $f$ and $\tilde{f}$ respectively with $\tilde{f} = M_1^{-1} \circ f \circ M_2$. If $M_1$ and $M_2$ are both rotation and
        \begin{gather}
                \arg \int_\D B(z) dz = \arg \int_\D \tilde{B}(z)dz = 0,\label{arg integral B is 0}\\
                \arg \int_\D \frac{B(z)}{z} dz = \arg \int_\D \frac{\tilde{B}(z)}{z}dz \in [0, \pi),\label{arg in 0 and pi}
        \end{gather}
        then we have $\tilde{B} = B$.
    \end{theorem}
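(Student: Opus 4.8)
The plan is to exploit the fact that both M\"obius maps here are rotations, reduce the discrepancy between $B$ and $\tilde B$ to a single rotation angle, and then show that the two normalization conditions force that angle to vanish.

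\textbf{Step 1: reduce to one rotation angle.} Since $M_1$ is a rotation, Theorem~\ref{rotation keep f} applies and gives $\tilde H = M_1^{-1}\circ H\circ M_2$. As $M_1^{-1}$ is conformal, post-composition by it leaves the Beltrami coefficient unchanged (equation~(\ref{mu of conformal composition})), so $\tilde B = \mu_{\tilde H} = \mu_{H\circ M_2}$. Writing $M_2(z) = e^{i\theta_2}z$ with $\theta_2\in[0,2\pi)$ and using the composition formula~(\ref{mu of composition}) with the inner map $M_2$ conformal (so $\tau = \overline{(M_2)_z}/(M_2)_z = e^{-2i\theta_2}$), I get exactly the relation of Theorem~\ref{b1 b2 relationship}:
\[
\tilde B(z) = e^{-2i\theta_2}\, B\!\left(e^{i\theta_2}z\right).
\]
Everything now comes down to proving $\theta_2 = 0$.

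\textbf{Step 2: use the two integral conditions.} I would apply the change of variables $w = e^{i\theta_2}z$ to the integrals in (\ref{arg integral B is 0})--(\ref{arg in 0 and pi}); the planar area measure on $\D$ is rotation-invariant and $\D$ is carried to itself, so
\[
\int_\D \tilde B(z)\,dz = e^{-2i\theta_2}\int_\D B(w)\,dw,\qquad \int_\D \frac{\tilde B(z)}{z}\,dz = e^{-i\theta_2}\int_\D \frac{B(w)}{w}\,dw,
\]
the second using $1/z = e^{i\theta_2}/w$. From (\ref{arg integral B is 0}) both $\int_\D B\,dz$ and $\int_\D\tilde B\,dz$ are positive reals; the first identity then forces $e^{-2i\theta_2}>0$, i.e. $2\theta_2\equiv 0\pmod{2\pi}$, so $\theta_2\in\{0,\pi\}$. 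The second identity shows the two arguments in (\ref{arg in 0 and pi}) differ by $\theta_2$ modulo $2\pi$; if $\theta_2=\pi$ they would differ by $\pi$, which is impossible for two numbers both lying in the half-open interval $[0,\pi)$. Hence $\theta_2=0$, and the relation from Step~1 collapses to $\tilde B(z)=B(z)$.

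\textbf{Main obstacle.} The computations are routine; the one place to be careful is the well-definedness of ``$\arg$'': conditions (\ref{arg integral B is 0})--(\ref{arg in 0 and pi}) only make sense when the integrals are nonzero, and this is automatic here, since if $\int_\D B\,dz=0$ then also $\int_\D\tilde B\,dz = e^{-2i\theta_2}\cdot 0 = 0$ and neither argument is defined, contradicting the hypothesis (the same remark handles $\int_\D B(z)/z\,dz$). The other point of care is the modular bookkeeping with the range $[0,\pi)$, which is precisely what eliminates the antipodal angle $\theta_2=\pi$ surviving the first condition. I do not expect any deeper difficulty.
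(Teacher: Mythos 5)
Your proposal is correct and follows essentially the same route as the paper: reduce to $\tilde B(z)=e^{-2i\theta_2}B(e^{i\theta_2}z)$ via Theorem~\ref{b1 b2 relationship}, use rotation invariance of the area integral with condition~(\ref{arg integral B is 0}) to force $\theta_2\in\{0,\pi\}$, and then use condition~(\ref{arg in 0 and pi}) to rule out $\theta_2=\pi$. Your remark about the integrals needing to be nonzero for $\arg$ to be defined is a minor point the paper leaves implicit, but the argument is otherwise identical.
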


    \begin{proof}
        According to theorem \ref{b1 b2 relationship}, the $\tilde{B}$ can be displayed in the form of
        \begin{equation*}
        \tilde{B}(z) = e^{-2 i \theta} B(e^{i\theta} z),
        \end{equation*}
        when $M_1, M_2$ are rotations and $M_2(z) = e^{i \theta} z$. Suppose $\int_\D B(z) dz = r e^{i \tau}$, the integral of $\tilde{B}$ on $\D$ can be written as
        \begin{equation}
            \int_\D \tilde{B}(z) dz
            = \int_\D e^{-2i\theta} B(e^{i\theta} z) dz
            = e^{-2i\theta} \int_\D B(z) dz
            = r e^{i(\tau-2\theta)}.
        \end{equation}
        If equation (\ref{arg integral B is 0}) holds, we have
        \begin{equation*}
            \tau = \tau-2\theta  = 2k\pi ,k \in \Z,
        \end{equation*}
        hence $\theta = k \pi$ and so $\tilde{B}(z) = B(z)$ or $\tilde{B}(z) = B(-z)$. Suppose $\tilde{B}(z) = B(-z)$, then
        \begin{equation*}
            \arg  \int_\D \frac{\tilde{B}(z)}{z} dz 
            = \arg \left(- \int_\D \frac{B(z)}{z} dz \right)
            = \arg \int_\D \frac{B(z)}{z} dz + \pi,
        \end{equation*}
        which can not satisfy (\ref{arg in 0 and pi}), so we must have $\tilde{B} = B$.
    \end{proof}
    
    \bigskip
    
    \noindent {\it Remark:}
    {\it Note that the unique representative of $[B]$ can be easily generated from any HBS $B_0 = \mu_{H_0} \in [B]$ when $M_1$ and $M_2$ are rotations. Suppose $\arg \int_\D B_0(z) dz = \tau_0$, $\tau_0 \neq 0$ and $\arg \int_\D B_0(z)/z dz = \tau_1$, then
    \begin{equation}\label{normaled B}
        B(z) = \begin{cases}
            e^{-i\tau_0}B_0(e^{i\frac{\tau_0}{2}}z), \text{ if } \tau_1 - \frac{\tau_0}{2} \in [0, \pi),\\
            e^{-i\tau_0}B_0(-e^{i\frac{\tau_0}{2}}z), \text{ if } \tau_1 - \frac{\tau_0}{2} \in [\pi, 2\pi),
            \end{cases}
    \end{equation}
    is just the desired representative since
    \begin{gather*}
        \arg \int_\D B(z) dz = \arg \left( e^{-i\tau_0} \int_\D B_0(\pm e^{i\frac{\tau_0}{2}}z)dz \right) = \tau_0-\tau_0 = 0,
        % \arg \int_\D \frac{B(z)}{z} dz = 
        % \begin{cases}
        % \arg \left( e^{-i\frac{\tau_0}{2}} \int_\D \frac{B_0(e^{i\frac{\tau_0}{2}}z)}{e^{i\frac{\tau_0}{2}}z}dz \right) = \tau_1 - \frac{\tau_0}{2} \in [0, \pi), \text{ if } \tau_1 - \frac{\tau_0}{2} \in [0, \pi),\\
        % \arg \left(-e^{-i\frac{\tau_0}{2}} \int_\D \frac{B_0(-e^{i\frac{\tau_0}{2}}z)}{-e^{i\frac{\tau_0}{2}}z}dz \right) = \tau_1 - \frac{\tau_0}{2}  - \pi \in [0, \pi), \text{ if } \tau_1 - \frac{\tau_0}{2} \in [\pi, 2\pi).
        % \end{cases}
    \end{gather*}
    for $\tau_1 - \frac{\tau_0}{2} \in [0, \pi)$
    \begin{gather*}
        \arg \int_\D \frac{B(z)}{z} dz = \arg \left( e^{-i\frac{\tau_0}{2}} \int_\D \frac{B_0(e^{i\frac{\tau_0}{2}}z)}{e^{i\frac{\tau_0}{2}}z}dz \right) = \tau_1 - \frac{\tau_0}{2} \in [0, \pi),
    \end{gather*}
    and for $\tau_1 - \frac{\tau_0}{2} \in [\pi, 2\pi)$
    \begin{gather*}
        \arg \int_\D \frac{B(z)}{z} dz = \arg \left(-e^{-i\frac{\tau_0}{2}} \int_\D \frac{B_0(-e^{i\frac{\tau_0}{2}}z)}{-e^{i\frac{\tau_0}{2}}z}dz \right) = \tau_1 - \frac{\tau_0}{2}  - \pi \in [0, \pi).
    \end{gather*}
    Therefore, the only difficulty on the way to the unique representative $B$ of $[B]$ is how to normalize $M_1$ and $M_2$ to be rotations.
    }

    \subsection{Normalization to $M_1$}\label{norm_to_m1}
    Suppose $\Phi_1:\mathbb{D}\to \Omega$ be the conformal parameterization of $\Omega$. We proceed to normalize $\Phi_1$ so that $M_1$ is a rotation. One possible way is to fix $\Phi_1$ on some special points, like $\Phi_1(0) = 0$. But it requires the assumption that these points are inside $\Omega$, which is equivalent to limiting the position of shape. In this work, we introduce a new approach without any additional assumption.
    
    In practical application, we usually use finite boundary points $z_1, z_2, \cdots, z_n \in \partial \Omega$ to represent $\Omega$. Denote $p_k = \Phi_1^{-1}(z_k) \in \partial \D$ for $k=1,\cdots,n$, we claims that $M_1$ can be normalized by restricting the arithmetic mean of $p_k$ to be $0$, that is
    \begin{equation}\label{norm phi1}
        \sum^n_{k=1} \Phi_1^{-1}(z_k) = \sum_{k=1}^n p_k = 0.
    \end{equation}
    
    A natural question is whether there exists a conformal parameterization $\Phi_1$ satisfying equation (\ref{norm phi1}). Actually, the existence is equivalent to that given boundary points $p_k$ on unit circle, there is a Mobi\"us transformation $M$ such that $\sum_{k=1}^n M(p_k) = 0$. Without lose of generality, we ignore the rotational component of $M$ and let $M = F_a(z) = \frac{z - a}{1 - \overline{a}z}$, where $a \in \D$. We proceed to solve
    \begin{equation}\label{eq}
        f(a) = \sum_{k=1}^n F_a(p_k) = \sum_{k=1}^n \frac{p_k - a}{1 - \overline{a}p_k} = 0.
    \end{equation}

    \begin{figure}
        \begin{center}
            \includegraphics[width=10cm]{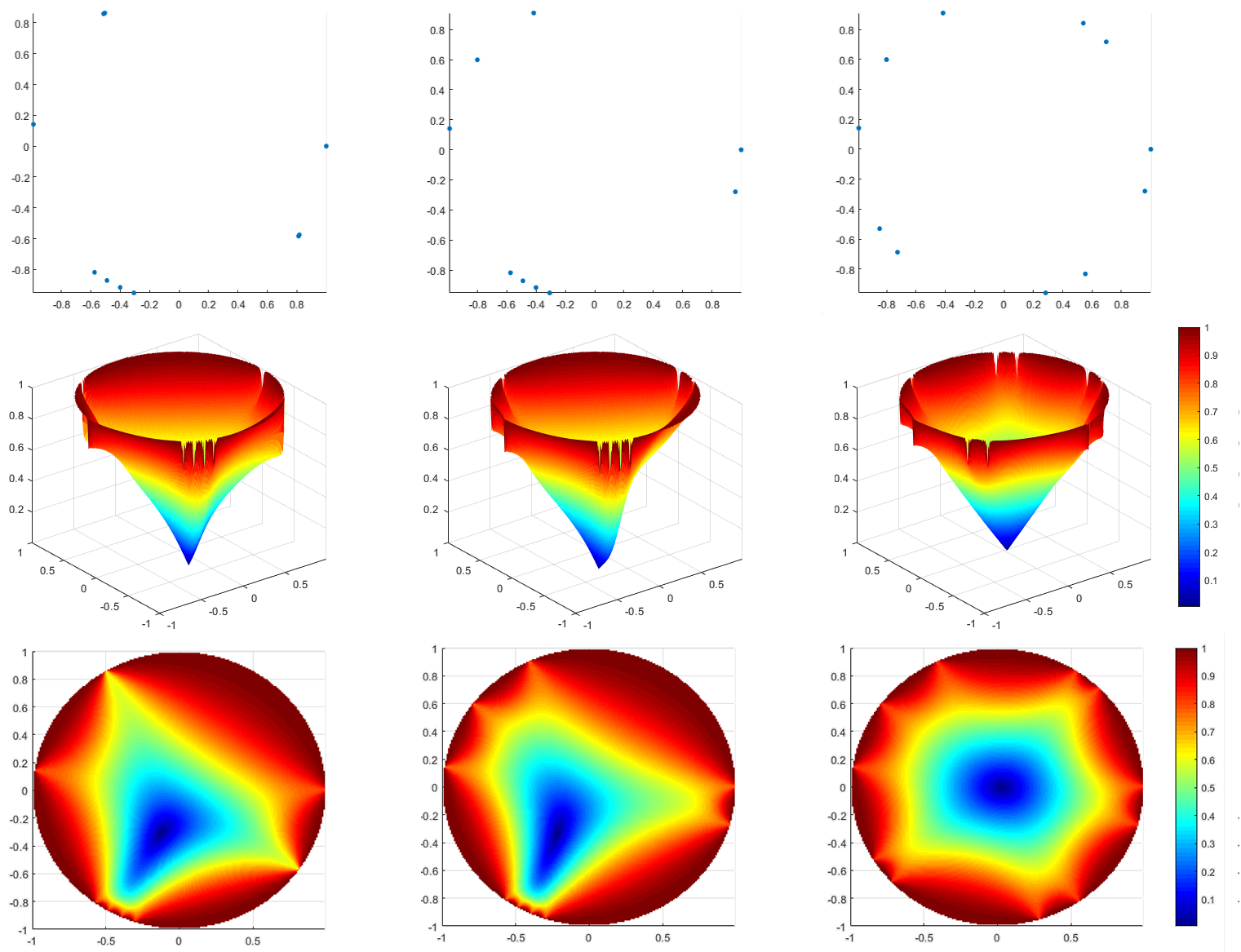}
        \end{center}
        \caption{The first row are some randomly generated points in $\partial \D$. The second row are the corresponding $\abs{f(a)}$, where $a \in \D$. The third row are also $\abs{f(a)}$ but is in top view. }
        \label{fasolvable}
    \end{figure}

    Figure \ref{fasolvable} shows some graphs of $|f(a)|$ corresponding to various boundary points and illustrates equation (\ref{eq}) may be solvable. And with the help of Brouwer fixed point theorem, the existence of the solution to equation (\ref{eq}) can be achieved.

    \begin{theorem}\label{existence}
        Given $\{p_1, p_2, \cdots, p_n\} \subset \partial \D$ and $n \ge 3$, let $F_a(z) = \frac{z - a}{1 - \overline{a}z}$, where $a \in \D$. The solution of equation (\ref{eq}) always exists.
    \end{theorem}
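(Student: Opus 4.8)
\emph{Setup.} The plan is to read $f(a)=\sum_{k=1}^n F_a(p_k)$ as a continuous planar vector field on a disk slightly smaller than $\D$, check that it points strictly inward along that disk's boundary, and then invoke Brouwer's theorem in the form ``a continuous vector field on a closed disk pointing strictly inward along the boundary must vanish in the interior.'' A shrunk disk $\overline{\D_\rho}=\{|a|\le\rho\}$, $\rho<1$, is unavoidable: each summand $F_a(p_k)$ is real-analytic on $\D$ (its denominator $1-\bar a p_k$ never vanishes for $|a|<1$), so $f$ is continuous on $\D$, but $f$ does not extend continuously to all of $\partial\D$. The underlying fact is the elementary identity, for $p,q\in\partial\D$ with $p\neq q$, that $1-\bar p q=\bar p(p-q)$, whence $\dfrac{q-p}{1-\bar p q}=-p$. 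Consequently, for each fixed $k$, $F_a(p_j)\to -p_k$ as $a\to p_k$ for every $j\neq k$ (uniformly in $j$), while $|F_a(p_k)|=1$ for all $a\in\D$ since $F_a$ is a disk automorphism preserving $\partial\D$. So as $a\to p$ with $p\notin\{p_1,\dots,p_n\}$ we get $f(a)\to -np$, and as $a\to p_k$ we get $f(a)=F_a(p_k)-(n-1)p_k+o(1)$, where the surviving term $F_a(p_k)$ need not converge but stays on the unit circle.

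\emph{Inward-pointing estimate.} I claim there is $\rho_0<1$ with $\Re\!\big(\bar a\,f(a)\big)<0$ for all $a$ with $\rho_0\le|a|<1$. Near a point $p_k$, writing $\bar a f(a)=\overline{p_k}f(a)+(\bar a-\overline{p_k})f(a)$, using $|f(a)|\le n$ to absorb the second piece and the previous paragraph for the first, one gets $\Re(\bar a f(a))\le\Re(\overline{p_k}F_a(p_k))-(n-1)+\varepsilon\le 1-(n-1)+\varepsilon=2-n+\varepsilon$, which is strictly negative for small $\varepsilon$ precisely because $n\ge3$. On the complementary subarc of $\partial\D$ (compact and disjoint from all $p_k$), $f$ extends continuously with boundary value $-np$, so $\Re(\bar a f(a))$ is uniformly close to $-n<0$ for $|a|$ close to $1$. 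Patching these two regimes produces $\rho_0$.

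\emph{Brouwer step.} Fix $\rho\in(\rho_0,1)$ and suppose $f$ had no zero on $\overline{\D_\rho}$. Then $w(a):=\rho\,f(a)/|f(a)|$ is a continuous self-map of the convex compact set $\overline{\D_\rho}$ with image in $\partial\D_\rho$. By Brouwer there is a fixed point $a^\ast$; then $|a^\ast|=\rho$ and $f(a^\ast)=\big(|f(a^\ast)|/\rho\big)a^\ast$ is a positive multiple of $a^\ast$, so $\Re(\overline{a^\ast}f(a^\ast))=\rho\,|f(a^\ast)|>0$, contradicting the estimate of the previous paragraph. Hence $f$ vanishes somewhere on $\overline{\D_\rho}$, and since $f$ is nonzero on $\partial\D_\rho$ the zero lies in the open disk $\D_\rho\subset\D$ and is a solution of (\ref{eq}).

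\emph{Main difficulty.} The crux is the inward-pointing estimate at the singular boundary points $p_k$: there $F_a(p_k)$ genuinely has no limit as $a\to p_k$ (it winds around $\partial\D$), so $f$ has no boundary value at $p_k$ and one cannot simply pass to a limit. The remedy is that the other $n-1\ge 2$ summands do converge, to $-(n-1)p_k$, and this deterministic part dominates the lone unit-modulus ``wild'' term exactly when $n\ge3$; obtaining the estimate uniformly, so that a single radius $\rho_0$ works simultaneously near every $p_k$ and on the rest of the circle, is the step that needs a genuine $\varepsilon$/compactness argument rather than a bare limit.
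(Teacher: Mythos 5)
Your proof is correct, and it reaches Brouwer by a genuinely different route than the paper. The paper's proof sets $g(a)=\frac{1}{n}f(a)+a$ on $\D$, extends $g$ by $0$ to $\partial \D$ using the boundary identity $f(e^{i\theta})=-ne^{i\theta}$, radially truncates $g$ so that it maps $\overline{\D}$ into itself, and extracts a fixed point, which is then shown to lie neither on $\partial\D$ nor in the truncated set, forcing $\frac{1}{n}f(a)+a=a$. You instead retreat to a compactly contained disk $\overline{\D_\rho}$, establish the strictly inward-pointing estimate $\Re\left(\bar a\,f(a)\right)<0$ on an annulus near $\partial\D$, and invoke the standard corollary of Brouwer that a zero-free continuous field cannot point strictly inward along the whole boundary of a disk. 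The payoff of your route is exactly at the points $p_k$, which you correctly identify as the crux: $F_a(p_k)$ has no limit as $a\to p_k$ (it oscillates on the unit circle, as a short computation along paths $a=re^{i\epsilon(r)}p_k$ shows), so $f$ --- and hence the paper's $g$ --- does not extend continuously to $p_k$, and the paper's assertion that its extended $g$ ``is continuous on $\overline{\D}$'' is not literally true at those $n$ points. Your estimate $\Re\left(\bar a f(a)\right)\le 2-n+\varepsilon$, which isolates the single wild unit-modulus summand and lets the $n-1$ convergent summands dominate (this is precisely where $n\ge 3$ enters), sidesteps that difficulty and, combined with the compactness patching you describe, yields a complete argument; the paper's version is shorter but would need essentially the same local estimate near each $p_k$ to be made airtight.
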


    \begin{proof}
        Note that when $e^{i \theta} \neq p_k$ for any $k$, we have
        \begin{equation*}
            f(e^{i \theta}) = \sum_{k=1}^n \frac{p_k - e^{i \theta}}{1 - e^{-i\theta}p_k} = - n e^{i\theta},
        \end{equation*}
        so $\frac{1}{n} f(e^{i\theta}) + e^{i \theta} = 0$. Let 
        \begin{equation*}
            g(a) = \begin{cases}
        \frac{1}{n} f(a) + a, a \in \D,\\
        0, a \in \partial \D,
        \end{cases}
        \end{equation*}
        such $g$ is continuous on $\overline{\D}$. It's clear that $\norm{g(a)} \le \frac{1}{n} \norm{f(a)} + \norm{a} \le 2$.
        
        Let $M = \{a \in \D ~|~ \norm{g(a)} \ge 1 \}$, we define
        \begin{equation*}
            h(a) = \begin{cases}
            g(a), &a \in  \overline{\D} \setminus M,\\
            \frac{g(a)}{\norm{g(a)}}, &a \in  M.
        \end{cases}
        \end{equation*}
        Since $h$ is a continuous map from $\overline{\D}$ to itself, there exists some $a \in \overline{\D}$ such that $h(a) = a$.
        
        If $a \in \partial \D$, $\norm{a} = 1$ but $\norm{h(a)} = 0 \neq \norm{a}$. If $a \in M$, we know $a \notin \partial \D$, so $\norm{a} < 1$, but $\norm{h(a)} = \frac{\norm{g(a)}}{\norm{g(a)}} = 1 \neq \norm{a}$. So there exists $a \in \D \setminus M$ such that 
        \begin{equation*}
            h(a) = g(a) = \frac{1}{n}f(a) + a = a,
        \end{equation*}
        which means $f(a) = 0$.
    \end{proof}

    The uniqueness of the solution of equation (\ref{eq}) is another thing we concern. Firstly, we consider a special case.
    
    \begin{theorem}\label{uniqueness when 0}
        Suppose $\sum_{k=1}^n p_k = 0$, equation (\ref{eq}) holds if and only if $a = 0$.
    \end{theorem}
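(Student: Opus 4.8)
The plan is to prove the two implications separately. The ``if'' direction is immediate: when $a=0$ we have $f(0)=\sum_{k=1}^n\frac{p_k-0}{1-0}=\sum_{k=1}^n p_k=0$ by hypothesis, so (\ref{eq}) holds. For the ``only if'' direction I would argue by contradiction: assume $a\neq0$ and $f(a)=0$, and follow the value of $f$ along the radial segment from $0$ to $a$, showing that a suitable real-linear functional of it is \emph{strictly} monotone along that segment, which is incompatible with its vanishing at both endpoints $0$ (forced by $\sum p_k=0$) and $a$ (forced by $f(a)=0$).

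Concretely, write $a=\rho e^{i\alpha}$ with $\rho=\abs{a}\in(0,1)$, put $w_k:=e^{-i\alpha}p_k\in\partial\D$, and consider $\psi(t):=\Re\bigl(e^{-i\alpha}f(ta)\bigr)$ for $t\in[0,1]$. Each summand $\frac{p_k-ta}{1-t\overline{a}p_k}$ is a smooth rational function of $t$ with no pole on $[0,1]$, since $\abs{t\overline{a}p_k}\le\rho<1$, so $\psi\in C^1([0,1])$. Differentiating gives $\frac{d}{dt}\frac{p_k-ta}{1-t\overline{a}p_k}=\frac{\overline{a}p_k^2-a}{(1-t\overline{a}p_k)^2}$, and since $\overline{a}p_k^2=aw_k^2$ and $\overline{a}p_k=\rho w_k$, the right side equals $-a\,\frac{1-w_k^2}{(1-t\rho w_k)^2}$. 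Multiplying by $e^{-i\alpha}$, for which $e^{-i\alpha}a=\rho$, and summing over $k$, we obtain $\psi'(t)=\Re\bigl(e^{-i\alpha}\tfrac{d}{dt}f(ta)\bigr)=-\rho\sum_{k=1}^n\Re\frac{1-w_k^2}{(1-t\rho w_k)^2}$.

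The crux is the algebraic identity $\Re\dfrac{1-w^2}{(1-sw)^2}=\dfrac{(1-s^2)\bigl(1-\Re(w^2)\bigr)}{\abs{1-sw}^4}$, valid for $\abs{w}=1$ and $s\in(-1,1)$, which follows by writing $\tfrac{1}{(1-sw)^2}=\tfrac{(1-s\overline{w})^2}{\abs{1-sw}^4}$, expanding $(1-w^2)(1-s\overline{w})^2$, and using $w\overline{w}=1$ to collect the real part. Since $1-s^2>0$ and $1-\Re(w^2)\ge0$ with equality exactly when $w=\pm1$, each term of $\psi'(t)=-\rho\sum_k\frac{(1-t^2\rho^2)\bigl(1-\Re(w_k^2)\bigr)}{\abs{1-t\rho w_k}^4}$ is $\le0$, and it is strictly negative for every index $k$ with $w_k\neq\pm1$, i.e.\ with $p_k\neq\pm e^{i\alpha}$. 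Because $n\ge3$ and the points $p_k$ are distinct, at most two of them lie in the two-element set $\{e^{i\alpha},-e^{i\alpha}\}$, so such an index exists and $\psi'(t)<0$ throughout $[0,1]$. Hence $\psi$ is strictly decreasing, giving $\psi(0)>\psi(1)$; but $\psi(0)=\Re\bigl(e^{-i\alpha}\sum_k p_k\bigr)=0$ and $\psi(1)=\Re\bigl(e^{-i\alpha}f(a)\bigr)=0$, a contradiction. Therefore $a=0$.

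The step I expect to be the real obstacle is choosing what to monotonize: the right multiplier is exactly the conjugate phase $e^{-i\alpha}$ of $a$, and it is what makes $\Re\bigl(e^{-i\alpha}\tfrac{d}{dt}f(ta)\bigr)$ manifestly signed through the small identity above; the rest is routine. Two remarks. First, distinctness of the $p_k$ (implicit in the setup) is genuinely used: without it, placing equal masses at a pair of antipodal points makes (\ref{eq}) hold for every $a$ on an entire diameter, so uniqueness fails. Second, if one prefers to avoid the path integral, an equivalent route is to use $\sum p_k=0$ and a partial-fraction expansion of $\frac{1-w^2}{1-\rho w}$ to reduce $f(a)=0$ to the real equation $\sum_k\phi\bigl(\Re(w_k)\bigr)=0$, where $\phi(x)=\frac{(1+\rho^2)x-2\rho}{(1+\rho^2)-2\rho x}$ is increasing and strictly convex on $[-1,1]$ with $\phi(\pm1)=\pm1$; then $\phi(x)<x$ on $(-1,1)$, so $\sum_k\phi\bigl(\Re(w_k)\bigr)<\sum_k\Re(w_k)=0$ unless every $w_k=\pm1$, which is again excluded.
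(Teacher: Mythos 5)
Your proof is correct. The paper's own argument is the more direct one that you sketch in your closing remark: after rotating so that $a=\rho\in(0,1)$ is a positive real, it computes $\Re\bigl(F_a(p_k)\bigr)=\cos\theta_k-\frac{2a(1-\cos^2\theta_k)}{(a-1)^2+2a(1-\cos\theta_k)}$ and concludes $\Re\bigl(F_a(p_k)\bigr)<\Re(p_k)$ for every $p_k\neq\pm1$ (with equality at $\pm1$), so that $\Re\sum_k F_a(p_k)<\Re\sum_k p_k=0$; this is exactly your $\phi(x)<x$ on $(-1,1)$, and only that inequality is needed, not the monotonicity or convexity of $\phi$. Your main argument instead differentiates along the radial segment $t\mapsto f(ta)$ and shows $\psi(t)=\Re\bigl(e^{-i\alpha}f(ta)\bigr)$ is strictly decreasing; I checked the derivative computation, the identity $\Re\frac{1-w^2}{(1-sw)^2}=\frac{(1-s^2)(1-\Re(w^2))}{\abs{1-sw}^4}$, and the endpoint values, and all are sound. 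The two proofs exploit the same phenomenon --- the component of $f$ in the direction of $a$ strictly decreases as the disk automorphism pulls mass away from $a$ --- but the paper's version buys brevity (one closed-form inequality, no $C^1$ path argument), while yours makes transparent \emph{why} the multiplier $e^{-i\alpha}$ is the right functional to monotonize, since the derivative comes out manifestly signed. Both proofs use distinctness of the $p_k$ identically (at most two of them can lie in $\{e^{i\alpha},-e^{i\alpha}\}$, so some term is strictly negative), and your observation that equal masses at antipodal points destroy uniqueness is a genuine caveat that the paper leaves implicit in its set notation.
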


    \begin{proof}
        If $a = 0$, it's obvious $F_0$ is identity, so $f(0) = \sum_{k=1}^n p_k = 0$.
        
        If $a \neq 0$, WLOG, we can assume that $\arg(a) = 0$ and $0 < a < 1$. We have $F_a(1) = 1$ and $F_a(-1) = -1$. As for $p_k \neq \pm 1$, there is some $\theta_k \in (0, \pi) \cup (\pi, 2\pi)$ such that $p_k = \cos \theta_k + \sin \theta_k i$ and
        \begin{align*}
            &F_a(p_k) \\
            = &F_a(\cos \theta_k + \sin \theta_k i) \\
            = &\frac{\cos \theta_k + \sin \theta_k i - a}{1 - a \cos \theta_k - a \sin \theta_k i} \\
            = &\frac{(a^2 + 1) \cos \theta_k -2a - (a^2 - 1) \sin \theta_k i}{a^2 + 1 - 2a \cos \theta_k},
        \end{align*}
        so
        \begin{align*}
            &\Re(F_a(p_k)) \\
            = &\frac{(a^2 + 1) \cos \theta_k -2a}{a^2 + 1 - 2a \cos \theta_k} \\
            = &\cos \theta_k - \frac{2a(1-\cos^2 \theta_k)}{(a-1)^2 + 2a(1-\cos \theta_k)} \\
            < &\cos \theta_k = \Re(p_k).
        \end{align*}

        Therefore, if $n \ge 3$, there must be at least one $p_k \neq \pm 1$ and so
        \begin{equation*}
            \Re(\sum_{k=1}^n F_a(p_k)) = \sum_{k=1}^n \Re(F_a(p_k)) < \sum_{k=1}^n \Re(p_k) = \Re(\sum_{k=1}^n p_k)  = 0,
        \end{equation*}
        which means $\sum_{k=1}^n F_a(p_k) \neq 0$. Note that we can rotate all $p_k$ to get the same conclusion when $\arg(a) \neq 0$. So $a=0$ is the only solution of equation (\ref{eq}) when $\sum_{k=1}^n p_k = 0$.
    \end{proof}

    This above theorem confirms the uniqueness for a special situation but actually this conclusion is universal no matter how $p_i$ distribute.

    \begin{theorem}\label{uniqueness}
        The solution of equation (\ref{eq}) is unique.
    \end{theorem}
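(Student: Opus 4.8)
The plan is to reduce the general uniqueness claim to the special case already settled in Theorem~\ref{uniqueness when 0} by exploiting the group structure of disk automorphisms, rather than attempting a direct computation with the rational function $f$. Suppose $a_1,a_2\in\D$ both solve $f(a)=0$. Set $q_k:=F_{a_1}(p_k)\in\partial\D$ for $k=1,\dots,n$; since $a_1$ is a solution of (\ref{eq}), we have $\sum_{k=1}^n q_k=0$. The key observation is that $\psi:=F_{a_2}\circ F_{a_1}^{-1}$ is again an automorphism of $\D$ (a composition of Möbius maps preserving $\D$; indeed $F_{a_1}^{-1}=F_{-a_1}$), so it can be written in the standard normal form $\psi(w)=e^{i\phi}F_b(w)=e^{i\phi}\frac{w-b}{1-\overline b w}$ for some $b\in\D$ and $\phi\in[0,2\pi)$.

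Next I would substitute. Since $F_{a_1}^{-1}(q_k)=p_k$, we have $\psi(q_k)=F_{a_2}(p_k)$, and because $a_2$ is also a solution of (\ref{eq}),
\[
0=\sum_{k=1}^n F_{a_2}(p_k)=\sum_{k=1}^n \psi(q_k)=e^{i\phi}\sum_{k=1}^n F_b(q_k),
\]
hence $\sum_{k=1}^n F_b(q_k)=0$. The points $q_1,\dots,q_n$ lie on $\partial\D$ with $n\ge 3$ and have vanishing sum, so Theorem~\ref{uniqueness when 0} applies to the configuration $\{q_k\}$ and forces $b=0$. Then $\psi(w)=e^{i\phi}w$, i.e. $F_{a_2}=e^{i\phi}F_{a_1}$ as maps on $\D$; since $F_{a_2}$ vanishes exactly at $a_2$ and $e^{i\phi}F_{a_1}$ vanishes exactly at $a_1$, we conclude $a_1=a_2$.

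The one place I expect genuine care to be needed is the invocation of Theorem~\ref{uniqueness when 0} after the change of variables: its proof uses that at least one of the points in the configuration differs from $\pm 1$, which relies on the $p_k$ (hence the $q_k=F_{a_1}(p_k)$, as $F_{a_1}$ is injective) being pairwise distinct — a hypothesis that is automatic in the geometric setting, where the $p_k=\Phi_1^{-1}(z_k)$ are images of distinct boundary points, but is not merely cosmetic, since an all-$\pm1$ configuration with $n$ even would violate that lemma. Granting distinctness, every remaining step is formal manipulation with Möbius transformations, so no further obstacle arises.
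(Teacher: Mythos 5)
Your proof is correct and follows essentially the same route as the paper's: both transport the configuration by $F_{a_1}$ and recognize $F_{a_2}\circ F_{a_1}^{-1}$ as a unimodular multiple of some $F_b$ so that Theorem~\ref{uniqueness when 0} forces $b=0$ (the paper just computes that composition and its unimodular prefactor explicitly rather than citing the normal form of a disk automorphism). Your closing caveat about needing at least one $q_k\neq\pm 1$ — guaranteed by distinctness of the $p_k$ when $n\ge 3$ — is a legitimate point that the paper's own proof of Theorem~\ref{uniqueness when 0} leaves implicit.
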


    \begin{proof}
        Assume that $a_0, a_1$ are two solutions, then we have that
        \begin{align*}
            \sum_{k=1}^n F_{a_0}(p_k) = 0,\sum_{k=1}^n F_{a_1}(p_k) = 0
        \end{align*}
        
        Let $p_k' = F_{a_0}(p_k)$, then
        \begin{equation*}
            \sum_{k=1}^n F_{a_1} (p_k)
            = \sum_{k=1}^n (F_{a_1} \circ F_{a_0}^{-1})(F_{a_0}(p_k))
            = \frac{1- a_1 \overline{a_0}}{1- a_0 \overline{a_1}}\sum_{k=1}^n F_{\frac{a_1-a_0}{1-a_1\overline{a_0}}}(p_k') = 0.
        \end{equation*}
        Since $a_0, a_1 \in \D$, then $\frac{1- a_1 \overline{a_0}}{1- a_0 \overline{a_1}} \neq 0$ and so
        $$\sum_{k=1}^n F_{\frac{a_1-a_0}{1-a_1\overline{a_0}}}(p_k') = 0.$$
        According to theorem \ref{uniqueness when 0}, $\frac{a_1-a_0}{1-a_1\overline{a_0}} = 0$, then $a_0 = a_1$.
    \end{proof}

    With the above observations, we can come back to the original problem about normalization $M_1$ and have the following theorem.

    \begin{theorem}\label{unique up to a rotation}
        Given $\{z_1, z_2, \cdots, z_n\} \subset \partial \Omega$ and $n \ge 3$, if conformal mapping $\Phi_1: \D \rightarrow \Omega$ satisfies equation (\ref{norm phi1}), then such $\Phi_1$ is unique up to a rotation $M_1$.
    \end{theorem}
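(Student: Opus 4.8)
The plan is to use the classical fact that two conformal parameterizations of the same simply-connected domain differ by an automorphism of the unit disk, and then to show that the normalization (\ref{norm phi1}) forces this automorphism to be a rotation, via Theorem \ref{uniqueness when 0}.

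First I would recall that if $\Phi_1,\tilde{\Phi}_1:\D\to\Omega$ are both conformal, then $\tilde{\Phi}_1 = \Phi_1\circ M$ for some automorphism $M$ of $\D$, and every such automorphism can be written as $M(z)=e^{i\alpha}F_a(z)$ with $\alpha\in[0,2\pi)$ and $a\in\D$; in particular $M^{-1}(z)=e^{i\beta}F_b(z)$ for some $\beta\in[0,2\pi)$ and $b\in\D$. Set $p_k=\Phi_1^{-1}(z_k)\in\partial\D$. By hypothesis $\sum_{k=1}^n p_k=0$. Since $\tilde{\Phi}_1^{-1}=M^{-1}\circ\Phi_1^{-1}$ and $\tilde{\Phi}_1$ is also assumed to satisfy (\ref{norm phi1}), we get $\sum_{k=1}^n M^{-1}(p_k)=\sum_{k=1}^n \tilde{\Phi}_1^{-1}(z_k)=0$.

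Next I would factor out the rotation: writing $M^{-1}=R\circ F_b$ with $R(w)=e^{i\beta}w$, linearity of $R$ gives $0=\sum_{k=1}^n M^{-1}(p_k)=R\bigl(\sum_{k=1}^n F_b(p_k)\bigr)$, hence $\sum_{k=1}^n F_b(p_k)=0$; that is, $b$ is a solution of equation (\ref{eq}) for the point set $\{p_1,\dots,p_n\}$. Because $\sum_{k=1}^n p_k=0$, Theorem \ref{uniqueness when 0} forces $b=0$. Therefore $M^{-1}(z)=e^{i\beta}z$ is a rotation, so $M$ is a rotation and $\tilde{\Phi}_1=\Phi_1\circ M$ with $M$ a rotation, which is exactly the assertion. (Existence of at least one conformal $\Phi_1$ satisfying (\ref{norm phi1}) follows from Theorems \ref{existence} and \ref{uniqueness}, though the statement claims only uniqueness up to a rotation.)

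I do not expect a genuine obstacle. The only point requiring care is to keep the rotational component of the disk automorphism separate from its ``$F_a$'' part, so that equation (\ref{eq}) — which is phrased for the pure Möbius factor $F_a$ — applies, and to invoke Theorem \ref{uniqueness when 0} (the special case $\sum p_k=0$) rather than the general Theorem \ref{uniqueness}. A minor alternative would be to fix one reference conformal map $\Psi:\D\to\Omega$, express every valid $\Phi_1$ as $\Psi\circ M$, use Theorem \ref{uniqueness} to see that the $F_b$-part of $M^{-1}$ is uniquely determined while its rotation part is free, and conclude that any two valid $\Phi_1$'s differ by precomposition with a rotation; this gives the same result.
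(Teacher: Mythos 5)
Your proposal is correct and follows essentially the same route as the paper's proof: write the disk automorphism relating the two parameterizations as a rotation composed with $F_b$, use the normalization to get $\sum_{k}F_b(p_k)=0$, and invoke Theorem \ref{uniqueness when 0} to conclude $b=0$. The only difference is cosmetic — you make the step of cancelling the rotational factor slightly more explicit than the paper does.
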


    \begin{proof}
        Suppose $\Phi_1$ and $\tilde{\Phi}_1$ are two arbitrary conformal map from $\D$ to $\Omega$  then $\tilde{\Phi}_1 = \Phi_1 \circ M_1$, where $M_1$ is a Mobi\"us transformation. If $\Phi_1$, $\tilde{\Phi}_1 $ satisfy equation (\ref{norm phi1}), let $p_k = \Phi_1^{-1}(z_k)$, then we have $\sum_{k=1}^n p_k = 0$ and $\sum_{k=1}^n M_1^{-1} (p_k) = 0$. $M_1^{-1}$ is also a Mobi\"us transformation so let $M_1^{-1}(z) = e^{i \theta}\frac{z - a}{1 - \overline{a} z} = e^{i \theta} F_a(z)$, then we have
        \begin{equation*}
            e^{i \theta} \sum_{k=1}^n F_a(p_k) = 0.
        \end{equation*}

        From theorem \ref{uniqueness when 0} we can know $a = 0$, then $M_1^{-1}(z) = e^{i \theta} z$ and $\Phi_1$ and $\tilde{\Phi}_1$ are the same up to a rotation $M_1$.
    \end{proof}

    \subsection{Normalization to $M_2$}\label{norm2}
    As for $M_2$, we also hope it is a rotation, which is equivalent to that $\Phi_2$ is uniquely determined up to a rotation. Luckily, $\infty$ is always inside $\D^c$ and $\Omega^c$, and we can use this to ensure $M_2$ to be a rotation.

    \begin{theorem}
        Let $\Phi_2$ be a conformal map from $\D^c$ to $\Omega^c$ satisfying
        \begin{equation}
            \Phi_2(\infty) = \infty, \label{norm phi2 1}
        \end{equation}
        then such $\Phi_2$ is uniquely determined up to a rotation.
    \end{theorem}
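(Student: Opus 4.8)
The plan is to reduce the statement to the classification of Möbius automorphisms of the exterior disk, in the same spirit as Section~\ref{norm_to_m1}. Throughout, $\D^c$ is read on the Riemann sphere, $\D^c = \overline{\C}\setminus\overline{\D}$, and likewise $\Omega^c = \overline{\C}\setminus\overline{\Omega}$; since $\Omega$ is bounded and simply-connected, $\Omega^c$ is a genuine simply-connected domain containing $\infty$, so the Riemann mapping theorem applies and any two conformal parameterizations $\Phi_2,\tilde{\Phi}_2:\D^c\to\Omega^c$ satisfy $\tilde{\Phi}_2 = \Phi_2\circ M_2$ for some Möbius transformation $M_2$, exactly as recorded in Section~\ref{welding}.

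First I would observe that $M_2 = \Phi_2^{-1}\circ\tilde{\Phi}_2$ is a conformal automorphism of $\D^c$, hence it carries $\partial\D$ onto $\partial\D$ and maps $\D^c$ onto $\D^c$ (rather than onto $\D$). A Möbius transformation with this property has the same form as an automorphism of $\D$, namely $M_2(z) = e^{i\theta}\,\frac{z-a}{1-\overline{a}z} = e^{i\theta}F_a(z)$ with $|a|<1$; this can be checked directly, or obtained by conjugating the standard description of the automorphism group of $\D$ by the inversion $z\mapsto 1/z$, which interchanges $\D$ and $\D^c$. Thus $M_2$ lies in the same family $e^{i\theta}F_a$ used in the previous subsections.

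Next I would invoke the hypothesis (\ref{norm phi2 1}). Since $\Phi_2(\infty)=\infty$ and $\tilde{\Phi}_2(\infty)=\infty$, we have $M_2(\infty) = \Phi_2^{-1}\bigl(\tilde{\Phi}_2(\infty)\bigr) = \Phi_2^{-1}(\infty) = \infty$. Evaluating the explicit form at infinity gives $M_2(\infty) = -e^{i\theta}/\overline{a}$, a finite point of modulus $1/|a|>1$ whenever $a\neq 0$. Hence $a=0$, so $M_2(z)=e^{i\theta}z$ is a rotation and $\tilde{\Phi}_2 = \Phi_2\circ M_2$ differs from $\Phi_2$ only by a rotation. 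Conversely, post-composing any admissible $\Phi_2$ with a rotation preserves both conformality and the normalization $\Phi_2(\infty)=\infty$, so ``up to a rotation'' is the best one can say.

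The one point that needs care --- and the main, if mild, obstacle --- is the classification of Möbius automorphisms of $\D^c$: one must verify that $M_2$ is circle-preserving and $\D^c$-preserving (not one of the circle-preserving Möbius maps that swap $\D$ and $\D^c$), so that it indeed falls in the family $e^{i\theta}F_a$ with $|a|<1$. Once this is settled the rest is the one-line evaluation at $\infty$ above, and the argument closes in the same way as the normalization of $M_1$ in Theorem~\ref{unique up to a rotation}, with the constraint $\Phi_2(\infty)=\infty$ now playing the role that $\sum_k p_k = 0$ played there.
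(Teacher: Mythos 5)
Your proposal is correct and follows essentially the same route as the paper: write $M_2 = \Phi_2^{-1}\circ\tilde{\Phi}_2$ in the form $e^{i\theta}\frac{z-a}{1-\overline{a}z}$, use $M_2(\infty)=\infty$ to force $a=0$, and conclude $M_2$ is a rotation. The only difference is that you take care to justify why $M_2$ has the disk-automorphism form (as a Möbius automorphism of $\D^c$), a point the paper simply asserts; this is a welcome but minor addition.
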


    \begin{proof}
        Let $\Phi_2$ and $\tilde{\Phi}_2$ be two arbitrary conformal map from $\D^c$ to $\Omega^c$, then $\tilde{\Phi}_2 = \Phi_2 \circ M_2$, where $M_2$ is Mobi\"us transformation and $M_2(z) = e^{i \theta} \frac{z - a}{1- \overline{a}z}$. Since $\Phi_2$ and $\tilde{\Phi}_2$ both satisfy equation (\ref{norm phi2 1}), then
        \begin{equation*}
            \tilde{\Phi}_2(\infty) = \Phi_2(M_2(\infty)) = \infty, \Phi_2(\infty) = \infty.
        \end{equation*}
        Therefore, $M_2$ maps $\infty$ to $\infty$, which means that $a = 0$ and $M_2(z) = e^{i \theta} z$.
    \end{proof}

\subsection{Invariance under simple transformation}
    With the normalization mentioned above, we can get a unique HBS $B$ as the representative corresponding to domain $\Omega$, so we can remark $B$ as $B_\Omega$. Now we want to prove that if we do some simple transformation like rotation, scaling and translation to $\Omega$, the HBS is invariant.

    \begin{theorem}\label{invariance theorem}
        Given a boundary simply-connected domain $\Omega$ and transformation $T$ composed of rotation, scaling and transformation. Let $B_\Omega$ and $B_{T(\Omega)}$ be the HBS of $\Omega$ and $T(\Omega)$, then $B_\Omega = B_{T(\Omega)}$.
    \end{theorem}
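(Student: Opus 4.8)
The strategy is to exhibit, for the transformed domain $T(\Omega)$, a pair of \emph{normalized} conformal parameterizations whose conformal welding is literally the same map as the normalized welding of $\Omega$. Since the harmonic extension, the Beltrami coefficient, and the representative-selecting procedure (\ref{normaled B}) are all deterministic functions of that welding map, the equality $B_\Omega = B_{T(\Omega)}$ will follow immediately.

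Write $T(w) = \lambda e^{i\alpha} w + \beta$ with $\lambda \in \R^+$, $\alpha \in [0,2\pi)$, $\beta \in \C$; then $T$ is a conformal affine self-map of $\hat{\C}$ fixing $\infty$, so $T(\Omega)$ is again a bounded simply-connected quasicircle and the construction of Section \ref{main} applies to it. Let $\Phi_1:\D\to\Omega$ and $\Phi_2:\D^c\to\Omega^c$ be the normalized parameterizations defining $B_\Omega$, i.e. $\sum_{k=1}^n\Phi_1^{-1}(z_k)=0$ as in (\ref{norm phi1}) and $\Phi_2(\infty)=\infty$ as in (\ref{norm phi2 1}). Because $T$ is a conformal bijection of $\hat{\C}$, the maps $\Psi_1:=T\circ\Phi_1:\D\to T(\Omega)$ and $\Psi_2:=T\circ\Phi_2:\D^c\to T(\Omega)^c$ are conformal parameterizations of $T(\Omega)$ and of its complement.

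First I would check that $\Psi_1,\Psi_2$ satisfy the normalizations of Sections \ref{norm_to_m1}--\ref{norm2} for $T(\Omega)$. The boundary samples of $T(\Omega)$ are $T(z_1),\dots,T(z_n)$, and $\Psi_1^{-1}\big(T(z_k)\big)=\Phi_1^{-1}\big(T^{-1}(T(z_k))\big)=\Phi_1^{-1}(z_k)$, so $\sum_{k=1}^n\Psi_1^{-1}(T(z_k))=0$, which is precisely (\ref{norm phi1}) for $T(\Omega)$; likewise $\Psi_2(\infty)=T(\Phi_2(\infty))=T(\infty)=\infty$, which is (\ref{norm phi2 1}). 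By Theorem \ref{unique up to a rotation} and the theorem of Section \ref{norm2}, $\Psi_1,\Psi_2$ are admissible normalized parameterizations of $T(\Omega)$, unique up to rotations of the disks. Computing the welding of $T(\Omega)$ from this pair gives
\[
\tilde f=\Psi_1^{-1}\circ\Psi_2=\Phi_1^{-1}\circ T^{-1}\circ T\circ\Phi_2=\Phi_1^{-1}\circ\Phi_2=f,
\]
the map $T$ cancelling since it is invertible on all of $\hat{\C}$. Hence the harmonic extension $\tilde H$ of $\tilde f$ equals the harmonic extension $H$ of $f$ by uniqueness of the Poisson integral (\ref{poisson integral}), and the unnormalized signature $\mu_{\tilde H}$ coincides with $\mu_H$.

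Finally, the unique representative of the class is obtained from $\mu_H$ by the deterministic recipe (\ref{normaled B}) (together with Theorem \ref{unique B}), whose only inputs are $\tau_0=\arg\int_\D\mu_H(z)\,dz$ and $\tau_1=\arg\int_\D \mu_H(z)/z\,dz$; these depend on the function $\mu_H$ alone, not on $\Omega$. Since the two input functions are identical, (\ref{normaled B}) yields $B_{T(\Omega)}=B_\Omega$. The argument is essentially bookkeeping; the one point I would treat most carefully---the \emph{main obstacle}---is the preservation of (\ref{norm phi1}): one must note that $T$ merely relabels the boundary samples $z_k\mapsto T(z_k)$ while leaving the disk preimages $\Phi_1^{-1}(z_k)$ unchanged, and that $T(\Omega)$ is still a quasicircle. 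Once this is in place, the cancellation $T^{-1}\circ T=\mathrm{id}$ in the welding and the fact that (\ref{normaled B}) sees only the Beltrami coefficient deliver the claim.
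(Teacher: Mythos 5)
Your proposal is correct and follows essentially the same route as the paper: both arguments rest on the facts that $T$ preserves the normalizations (\ref{norm phi1}) and (\ref{norm phi2 1}) and cancels in the welding $\Phi_1^{-1}\circ T^{-1}\circ T\circ \Phi_2$. The only cosmetic difference is direction: you push the normalized parameterizations of $\Omega$ forward by $T$ to get a specific pair for $T(\Omega)$ whose welding is literally $f$, whereas the paper pulls an arbitrary normalized pair for $T(\Omega)$ back to $\Omega$ and then invokes Theorem \ref{unique B} to absorb the residual rotations.
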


    \begin{proof}
        Suppose $\Phi_1 : \D \rightarrow \Omega$, $\Phi_2: \D^c \rightarrow \Omega^c$, $\tilde{\Phi}_1: \D \rightarrow T(\Omega)$ and $\tilde{\Phi}_2 : \D^c \rightarrow T(\Omega)$ are conformal. Since $T$ is composed of rotation, scaling and translation, $T$ can be written as $T(z) = ke^{i\theta} z + b$ and such $T$ is absolutely invertible and conformal.
        
        Let $\hat{\Phi}_1 =  T^{-1} \circ \tilde{\Phi}_1 : \D \rightarrow \Omega$, $\hat{\Phi}_1$ is conformal. Given the boundary points $\{z_1, z_2, \cdots, z_n\} \subset \partial \Omega$, then $\{T(z_1), T(z_2), \cdots, T(z_n)\} \subset \partial T(\Omega)$. Since $\tilde{\Phi}_1$ satisfies condition (\ref{norm phi1}), we have
        \begin{equation*}
            \sum_{i=1}^n \hat{\Phi}_1^{-1}(z_i) = \sum_{i=1}^n \tilde{\Phi}_1^{-1} \circ T(z_i) = \sum_{i=1}^n \tilde{\Phi}_1^{-1}(T(z_i)) = 0,
        \end{equation*}
        which means $\hat{\Phi}_1$ also satisfies condition (\ref{norm phi1}). Hence $\hat{\Phi}_1 =  T^{-1} \circ \tilde{\Phi}_1= \Phi_1 \circ M_1$, which equals to
        \begin{equation}
            \tilde{\Phi}_1 = T \circ \Phi_1 \circ M_1,
        \end{equation}
        where $M_1$ is a rotation.

        Similarly, let $\hat{\Phi}_2 = T^{-1} \circ \tilde{\Phi}_2: \D^c \rightarrow \Omega^c$, $\hat{\Phi}_2$ is conformal. Since $\tilde{\Phi}_2(\infty) = \infty$, we have
        \begin{equation*}
            \hat{\Phi}_2(\infty) = T \circ \tilde{\Phi}_2(\infty) = \infty,
        \end{equation*}
        which means $\hat{\Phi}_2$ satisfies condition (\ref{norm phi2 1}). Therefore, $\hat{\Phi}_2 = T^{-1} \circ \tilde{\Phi}_2 = \Phi_2 \circ M_2$ and then
        \begin{equation}
            \tilde{\Phi}_2 = T \circ \Phi_2 \circ M_2,
        \end{equation}
        where $M_2$ is also a rotation.

        The corresponding conformal welding of $T(\Omega)$ is
        \begin{equation*}
            \tilde{f}
            = \tilde{\Phi}_1^{-1} \circ \tilde{\Phi}_2 
            = M_1^{-1} \circ \Phi_1^{-1} \circ T^{-1} \circ T \circ \Phi_2 \circ M_2 
            = M_1^{-1} \circ f \circ M_2,
        \end{equation*}
        which means $B_\Omega$ and $B_{T(\Omega)}$ are both the representative of the same equivalence class. Therefore, $B_\Omega = B_{T(\Omega)}$ because of theorem \ref{unique B}.
    \end{proof}

\subsection{Geometric implication of HBS}\label{geometric implication}
    Although there exists a one-to-one correspondence between HBS and shapes up to a rotation, translation and scaling, the geometric implication of HBS is still unknown. More precisely, if two shapes are compared based on the HBS, it is necessary to know whether two shapes are close when their corresponding HBS are close. To study this, we first define the distance between two shapes $\Omega_1\subset \C$ and $\Omega_2\subset \C$ as follows: 
    \begin{equation}\label{shape distance}
        d_\Omega(\Omega_1,\Omega_2) = \frac{1}{2}\left(\max_{q\in\partial \Omega_2} \min_{p\in\partial \Omega_1} ||p-q|| + \max_{p\in\partial \Omega_1} \min_{q\in\partial \Omega_2} ||p-q||\right),
    \end{equation}
    \noindent where $||\cdot||$ refers to the Euclidean norm. We shall show that $d_\Omega(\Omega_1,\Omega_2)$ is small if their Harmonic Beltrami Signatures are alike.
    
The following theorem is useful, which describe the perturbation of the quasiconformal map under the perturbation of the associated Beltrami coefficient.

    \begin{theorem}[Beltrami holomorphic flow on $\overline{\C}$]\label{BHF}
        There is a one-to-one correspondence between the set of quasiconformal diffeomorphisms of $\overline{\C}$ that fix the points 0, 1, and $\infty$ and the set of smooth complex-valued functions $\mu$ on $\overline{\C}$ with $\norm{\mu}_\infty = k < 1$. Here, we have identified $\overline{\C}$ with the extended complex plane $\overline{\C}$. Furthermore, the solution $f^\mu$ to the Beltrami equation depends holomorphically on $\mu$. Let $\{\mu(t)\}$ be a family of Beltrami coefficients depending on a real or complex parameter $t$. Suppose also that $\mu(t)$ can be written in the form
        \begin{equation}
            \mu(t)(z) = \mu(z) + tv(z) + t \epsilon(t)(z),
        \end{equation}
        \noindent with suitable $\mu$ in the unit ball of $C^\infty(\C)$, $v, \epsilon(t) \in L^\infty(\C)$ such that $\lim_{t \rightarrow 0} \norm{\epsilon(t)}_\infty = 0$. Then for all $w \in \C$,
        \begin{equation}
            f^{\mu(t)}(w) = f^{\mu}(w) + tV(f^\mu, v)(w) + o(\abs{t})
        \end{equation}
        locally uniformly on $\C$ as $t \rightarrow 0$, where
        \begin{align}
            &V(f^\mu, v)(w) = -\frac{f^\mu(w)(f^\mu(w)-1)}{\pi}W(f^\mu, v)(w)\\
            &W(f^\mu, v)(w) = \int_\C \frac{v(z)(f^\mu)^2_z(z)}{f^\mu(z)(f^\mu(z)-1)(f^\mu(z) - f^\mu(w))}dz.
        \end{align}
    \end{theorem}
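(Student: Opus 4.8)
The plan is as follows. The first assertion --- the bijection between quasiconformal diffeomorphisms of $\overline{\C}$ fixing $0,1,\infty$ and smooth $\mu$ with $\norm{\mu}_\infty<1$ --- is the Measurable Riemann Mapping Theorem (Theorem~\ref{Measurable Riemannian Mapping Theorem}) with the stated normalization, together with the fact that elliptic regularity for the Beltrami equation upgrades a smooth $\mu$ to a diffeomorphic $f^\mu$; I would just cite this. For the holomorphic dependence and the first-order expansion I would reduce everything to the \emph{principal solution} $g^\mu$, the unique quasiconformal self-map of $\overline{\C}$ with $g^\mu_{\bar z}=\mu\,g^\mu_z$ and $g^\mu(z)=z+O(1/z)$ at $\infty$. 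The normalized solution is then the M\"obius post-composition
\begin{equation*}
  f^\mu=\frac{g^\mu-g^\mu(0)}{g^\mu(1)-g^\mu(0)},
\end{equation*}
so every statement about $f^\mu$ in $\mu$ follows from the corresponding statement about $g^\mu$ via the quotient rule in the two scalars $g^\mu(0),g^\mu(1)$.

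Next I would use the singular-integral representation $g^\mu=z+\mathcal{C}h$, where $\mathcal{C}$ is the Cauchy transform, $\mathcal{S}$ the Beurling transform, $g^\mu_z=1+\mathcal{S}h$, and $h$ solves $(I-\mu\mathcal{S})h=\mu$. The single genuinely analytic ingredient --- and the step I expect to be the main obstacle --- is the sharp bound $\norm{\mathcal{S}}_{L^p\to L^p}\to 1$ as $p\to 2^+$; given it, one fixes $p$ slightly above $2$ so that $\norm{\mu\mathcal{S}}_{L^p\to L^p}\le k\,\norm{\mathcal{S}}_{L^p\to L^p}<1$ uniformly on $\norm{\mu}_\infty\le k<1$, whence the Neumann series $h=\sum_{n\ge 0}(\mu\mathcal{S})^n\mu$ converges in $L^p(\C)$ and depends locally uniformly on $\mu\in L^\infty$. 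Everything after this point is formal manipulation of that series.

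Holomorphy is then immediate: each partial sum $\sum_{n=0}^{N}(\mu\mathcal{S})^n\mu$ is a polynomial, hence entire, function of $\mu$ as a point of the complex Banach space $L^\infty(\C)$; the series converges locally uniformly on $\{\norm{\mu}_\infty<1\}$ by the geometric bound above; a locally uniform limit of holomorphic maps between complex Banach spaces is holomorphic; and $\mathcal{C}$, point evaluation, and the M\"obius normalization are all holomorphic, so $\mu\mapsto f^\mu(w)$ is holomorphic. For the first-order expansion I would avoid differentiating the resolvent and instead use the composition formula (\ref{mu of composition}): writing $f^{\mu(t)}=F_t\circ f^\mu$ with $F_t:=f^{\mu(t)}\circ(f^\mu)^{-1}$, formula (\ref{mu of composition}) shows that the Beltrami coefficient of $F_t$, precomposed with $f^\mu$, equals $\dfrac{\mu(t)-\mu}{\tau\,(1-\overline{\mu}\,\mu(t))}$ with $\tau=\overline{f^\mu_z}/f^\mu_z$; under $\mu(t)=\mu+tv+t\epsilon(t)$ with $\norm{\epsilon(t)}_\infty\to 0$ this equals $t\,\dfrac{v}{\tau(1-\abs{\mu}^2)}+o(\abs{t})$ in $L^\infty$. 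Hence $F_t$ is quasiconformal with Beltrami coefficient $O(t)$, so $F_t=\mathrm{id}+t\,\dot F+o(\abs{t})$ where $\dot F$ is the first variation \emph{at} $\mu=0$, computed once and for all from $g^\nu=z+\mathcal{C}\nu+O(\norm{\nu}^2)$ and the normalization as $\dot F(w)=-\frac{w(w-1)}{\pi}\int_\C \sigma(\zeta)[\zeta(\zeta-1)(\zeta-w)]^{-1}\,d\zeta$ with $\sigma=[v/(\tau(1-\abs{\mu}^2))]\circ(f^\mu)^{-1}$.

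Finally, evaluating $F_t$ at $f^\mu(w)$ and changing variables $\zeta\mapsto f^\mu(\zeta)$ --- whose Jacobian $\abs{f^\mu_\zeta}^2(1-\abs{\mu(\zeta)}^2)$, combined with the factor $\overline{f^\mu_\zeta}/f^\mu_\zeta$ carried by $\sigma$, cancels the $1-\abs{\mu}^2$ in the denominator and converts the numerator into $(f^\mu_\zeta)^2$ --- turns the integral into exactly $W(f^\mu,v)(w)$, yielding $f^{\mu(t)}(w)=f^\mu(w)+t\,V(f^\mu,v)(w)+o(\abs{t})$. The convergence is locally uniform on $\C$ because the first-variation term is bounded in $L^p$ and $\mathcal{C}$ maps $L^p(\C)$, $p>2$, continuously into the space of locally H\"older continuous functions. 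I would note that this theorem is classical in quasiconformal Teichm\"uller theory, so the task here is only to record it in the precise form needed for the perturbation analysis that follows.
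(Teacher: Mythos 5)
Your outline is essentially a correct sketch of the classical Ahlfors--Bers/Bojarski argument, but it is a genuinely different route from the paper's: the paper does not prove Theorem~\ref{BHF} at all, it simply attributes it to Bojarski and cites a reference for the detailed proof. What you supply instead is the standard proof skeleton --- reduction to the principal solution $g^\mu = z + \mathcal{C}h$ with $(I-\mu\mathcal{S})h=\mu$, the Neumann series made convergent by the sharp bound $\norm{\mathcal{S}}_{L^p\to L^p}\to 1$ as $p\to 2^+$, Banach-space holomorphy of the partial sums, and the reduction of the first variation at general $\mu$ to the first variation at $\mu=0$ via the composition formula (\ref{mu of composition}) followed by the change of variables $\zeta\mapsto f^\mu(\zeta)$, whose Jacobian $\abs{f^\mu_\zeta}^2(1-\abs{\mu}^2)$ indeed cancels exactly against the $\tau(1-\abs{\mu}^2)$ carried by $\sigma$ to produce $(f^\mu_\zeta)^2$ in the numerator of $W(f^\mu,v)$; I checked that algebra and it is right. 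The one point you should flag is that the representation $g^\mu=z+\mathcal{C}h$ with $h\in L^p(\C)$ requires $\mu\in L^p$, i.e.\ effectively compactly supported, whereas the theorem is stated for smooth $\mu$ on all of $\overline{\C}$; the standard remedy is to factor $f^\mu$ through a coefficient supported in $\D$ and one supported in $\D^c$ handled by conjugation with $z\mapsto 1/z$. This is routine, and in the paper's actual application the coefficient (\ref{reconstructionmu}) vanishes on $\D^c$ anyway, so nothing is lost. In short: the paper's citation is the economical choice for a classical result it does not claim as a contribution, while your sketch would make the dependence of the error term and of the kernel $W$ on the normalization at $0,1,\infty$ self-contained --- useful if one ever needs to track the constant $M$ in Theorem~\ref{reconstruction robust theorem} explicitly.
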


    \begin{proof}
        This theorem is due to Bojarski. For detailed proof, please refer to \cite{durrleman2007measuring}.
    \end{proof}
    
    Recall that given a HBS, its associated shape is uniquely determined up to a translation, scaling and rotation. Therefore, in order to analyze the geometric implication of HBS, we shall normalize the shape associated to a given HBS. According to theorem \ref{one to one equivalence class}, given $[B] \in \mathcal{B}$, its corresponding shape can be determined by computing a quasiconformal map $G$ associated to $\mu$ given by equation (\ref{reconstructionmu}) and the shape $\Omega = G(\mathbb{D})$ can be reconstructed. In particular, $\Omega$ can be normalized by constraining $G$ to fix $0, 1$ and $\infty$. In this subsection, we assume the shape $\Omega$ corresponding to a HBS is normalized as described above.
    
    Now, the geometric implication of HBS can be explained by the following theorem.
    
    \begin{theorem}\label{reconstruction robust theorem}
    Let $[B_1]$, $[B_2]$ be two equivalence class of HBS and $B_1$, $B_2$ be the unique representatives. Let $\Omega_1$ and $\Omega_2$ be the normalized shapes associated to $B_1$ and $B_2$ respectively. If 
    $||B_1 - B_2||_{\infty} < \epsilon$, then $d_\Omega(\Omega_1,\Omega_2)<\frac{2M}{\pi}\epsilon$ for some $M>0$.
    \end{theorem}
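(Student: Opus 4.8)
The plan is to use the Beltrami holomorphic flow (Theorem~\ref{BHF}) to bound $\sup_{w\in\partial\D}\abs{G_1(w)-G_2(w)}$, where $G_i=f^{\mu_i}$ is the normalized quasiconformal self-map of $\C$ fixing $0,1,\infty$ associated, as in Theorem~\ref{one to one equivalence class} and the remark following it, to the coefficient $\mu_i$ equal to $B_i$ on $\D$ and to $0$ on $\D^c$, so that $\Omega_i=G_i(\D)$. First I would reduce the shape distance to this quantity. Since $\partial\Omega_i=G_i(\partial\D)$, every $q\in\partial\Omega_2$ has the form $q=G_2(w)$ with $w\in\partial\D$, and $G_1(w)\in\partial\Omega_1$, so $\min_{p\in\partial\Omega_1}\norm{p-q}\le\norm{G_1(w)-G_2(w)}$; taking the maximum over $q$ and adding the symmetric term in the definition of $d_\Omega$, one gets $d_\Omega(\Omega_1,\Omega_2)\le\sup_{w\in\partial\D}\norm{G_1(w)-G_2(w)}$. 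Hence it suffices to bound the right-hand side by $\frac{2M}{\pi}\epsilon$.

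Next I would interpolate between the two Beltrami coefficients. Set $\mu(t)=\mu_1+t(\mu_2-\mu_1)$ for $t\in[0,1]$; being a convex combination, $\norm{\mu(t)}_\infty\le k:=\max(\norm{B_1}_\infty,\norm{B_2}_\infty)<1$, so every $\mu(t)$ is admissible. Write $v=\mu_2-\mu_1$, so that $\norm{v}_\infty<\epsilon$ and, crucially, $v$ is supported in $\overline{\D}$. Applying Theorem~\ref{BHF} with base coefficient $\mu(t_0)$ and increment $sv$, so that the remainder term $\epsilon(s)$ vanishes, shows that $t\mapsto f^{\mu(t)}(w)$ is $C^1$ with derivative $V(f^{\mu(t)},v)(w)$, which is continuous in $t$; integrating over $[0,1]$ gives
\[
G_2(w)-G_1(w)=\int_0^1 V\!\left(f^{\mu(t)},v\right)(w)\,dt,\qquad\text{hence}\qquad \abs{G_2(w)-G_1(w)}\le\int_0^1\abs{V\!\left(f^{\mu(t)},v\right)(w)}\,dt.
\]
A technical point to dispatch here is that Theorem~\ref{BHF} is stated for smooth $\mu$, whereas $\mu_i$ has a jump across $\partial\D$; I would handle this by approximating $\mu_i$ almost everywhere, and in every $L^p$, by smooth coefficients of sup-norm at most $k$ and passing to the limit, using the continuous dependence of $f^\mu$ on $\mu$.

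It then remains to bound $V$ uniformly. From the formulas in Theorem~\ref{BHF}, together with $\abs{v}<\epsilon$ and $\mathrm{supp}\,v\subset\overline{\D}$,
\[
\abs{V\!\left(f^\mu,v\right)(w)}\le\frac{\epsilon}{\pi}\,\abs{f^\mu(w)}\,\abs{f^\mu(w)-1}\int_{\D}\frac{\abs{f^\mu_z(z)}^2}{\abs{f^\mu(z)}\,\abs{f^\mu(z)-1}\,\abs{f^\mu(z)-f^\mu(w)}}\,dz.
\]
Changing variables by $\zeta=f^\mu(z)$ and using the Jacobian identity $J_{f^\mu}=\abs{f^\mu_z}^2(1-\abs{\mu}^2)\ge(1-k^2)\abs{f^\mu_z}^2$, the integral is bounded by $\frac{1}{1-k^2}\int_{\Omega}\abs{\zeta}^{-1}\abs{\zeta-1}^{-1}\abs{\zeta-f^\mu(w)}^{-1}\,d\zeta$ over the bounded Jordan domain $\Omega=f^\mu(\D)$. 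Each factor is a locally integrable $\abs{\cdot}^{-1}$ singularity in the plane, so this is finite for every $w$; moreover its only possible unbounded behaviour, namely as $f^\mu(w)$ approaches $0$ or $1$, is logarithmic and is exactly cancelled by the prefactor $\abs{f^\mu(w)}\,\abs{f^\mu(w)-1}$. Thus $w\mapsto V(f^\mu,v)(w)$ is bounded and continuous on $\overline{\D}$, so by compactness of $\partial\D$ it is bounded there by a constant depending only on $k$ and on the diameter of $\Omega$. Since $f^{\mu(t)}$, its $z$-derivative, and the image domain $f^{\mu(t)}(\D)$ all vary continuously with $t\in[0,1]$, this constant may be chosen uniform in $t$; denoting it $2M/\pi$ and substituting back gives $\abs{G_2(w)-G_1(w)}\le\frac{2M}{\pi}\epsilon$ for every $w\in\partial\D$, whence $d_\Omega(\Omega_1,\Omega_2)\le\frac{2M}{\pi}\epsilon$.

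I expect the main obstacle to be precisely this last uniform estimate: showing that the triple singularity $\abs{\zeta}^{-1}\abs{\zeta-1}^{-1}\abs{\zeta-f^\mu(w)}^{-1}$ integrates over $\Omega$ to a quantity that stays bounded as $f^\mu(w)$ ranges over $\partial\Omega$, with the prefactor supplying the compensation needed near the fixed values $0$ and $1$, and then upgrading this to a bound uniform along the interpolating path — together with the routine but necessary smoothing argument required to invoke the smooth-$\mu$ form of the Beltrami holomorphic flow for the discontinuous coefficients $\mu_i$.
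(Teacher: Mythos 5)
Your argument follows the same route as the paper's proof: reduce $d_\Omega(\Omega_1,\Omega_2)$ to $\sup_{w}\norm{G_1(w)-G_2(w)}$ via the common boundary parameterization, invoke the Beltrami holomorphic flow of Theorem \ref{BHF} along the linear path between the two extended coefficients, and bound the variational kernel by a constant $M$. It is correct, and at three points it is in fact more careful than the printed proof. First, the paper applies the flow once with $t=\norm{B_1-B_2}_\infty$ and then drops the $o(\abs{t})$ remainder from the final display, which is only legitimate as $t\to 0$; your device of integrating $V(f^{\mu(t)},v)$ over $t\in[0,1]$ eliminates that remainder entirely. Second, the paper asserts that $W(G_1,v)$ is bounded ``since $G_1$ is continuous and bounded,'' which does not address the singularities of the kernel at the preimages of $0$, $1$ and $G_1(w)$; your change of variables $\zeta=f^{\mu}(z)$, the local integrability of $\abs{\cdot}^{-1}$ in the plane, and the observation that the prefactor $\abs{f^{\mu}(w)}\,\abs{f^{\mu}(w)-1}$ cancels the logarithmic growth as $f^{\mu}(w)$ approaches the fixed boundary point $1$ supply the missing justification (the paper's intermediate estimate $\norm{G_1}_\infty\norm{G_1-1}_\infty\le 2$ is also unjustified for a general $\Omega$, and you rightly fold that factor into $M$). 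Third, you flag the mollification needed because Theorem \ref{BHF} is stated for smooth coefficients while $\mu_i$ jumps across $\partial\D$, a point the paper passes over. The one loose end in your write-up is the uniformity of $M$ over $t\in[0,1]$, which you assert from continuous dependence of $f^{\mu(t)}$, its derivative, and its image on $t$; this deserves the same quantitative care as the pointwise bound, but it does not threaten the conclusion since the theorem only claims the existence of some $M>0$.
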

    \begin{proof}
    According to theorem \ref{one to one equivalence class}, $\Omega_1$ and $\Omega_2$ can be reconstructed by solving for the quasiconformal maps $G_1$ and $G_2$. Then, $\Omega_1 = G_1(\D)$ and $\Omega_2 = G_2(\D)$.
    
    Let
    \begin{equation}
        g(t)(z) = \begin{cases}
            B_1(z) + t v(z), z \in \D \\
            0, z \in \D^c
        \end{cases}
    \end{equation}
    where $v(z) = \frac{B_2(z) - B_1(z)}{||B_1 - B_2||_{\infty}}$ if $z \in \D$ and $v(z) = 0$ if $z \notin \D$. Then, $G_1$ and $G_2$ are quasiconformal maps associated to the Beltrami coefficients $g(0)$ and $g(t)$ respectively, where $t = ||B_1 - B_2||_{\infty} \in (0, \epsilon)$. According to theorem \ref{BHF},
    \begin{align*}
        &\norm{G_2 - G_1}_\infty = \norm{t V(G_1, v) + o(t)}_\infty \\
        \le& \frac{t}{\pi}\norm{G_1}_\infty \norm{G_1 - 1}_\infty \norm{W(G_1, v)}_\infty + o(t)\\
        \le& \frac{2t}{\pi} \norm{W(G_1, v)}_\infty + o(t).
    \end{align*}
    Since $G_1$ is continuous and bounded, the following integral is bounded and for any $w \in \D$ there exists some $M > 0$ such that
    \begin{equation}
    \begin{split}
        &\norm{\int_\D \frac{(G_1)_z^2(z)}{G_1(z)(G_1(z)-1)(G_1(z)-G_1(w)}dz} \\
        \le& \int_\D \norm{\frac{(G_1)_z^2(z)}{G_1(z)(G_1(z)-1)(G_1(z)-G_1(w)}} dz \\
        \le& M.
    \end{split}
    \end{equation}
    Therefore, we have $\norm{W(G_1, v)}_\infty \le M \norm{v}_\infty \le M$ and
    \begin{equation}
        \norm{G_2 - G_1}_\infty \le \frac{2t}{\pi} \norm{W(G_1, v)}_\infty \le \frac{2Mt}{\pi} \le \frac{2M}{\pi} \epsilon.
    \end{equation}
    
    Now, for any $q = G_2(z) \in \Omega_2$ ($z\in \D$), we have $\min_{p\in \Omega_1} ||p-q|| \leq ||G_1(z) - G_2(z)||\leq \frac{2M}{\pi}\epsilon$. Thus, $\max_{q\in \Omega_2}\min_{p\in \Omega_1} ||p-q|| \leq \frac{2M}{\pi}\epsilon$. Similarly, $\max_{p\in \Omega_1}\min_{q\in \Omega_2} ||p-q|| \leq \frac{2M}{\pi}\epsilon$. As a result, we have 
    \[
    d_\Omega(\Omega_1,\Omega_2)<\frac{2M}{\pi}\epsilon
    \]
    \end{proof}
    
    This above theorem illustrates that if the difference between HBS is small enough, their corresponding domains is almost the same, which means our Harmonic Beltrami signature is a good similarity indicator of shapes.

\section{Implementation detail}\label{implementation}
\subsection{Zipper algorithm}
    In order to find a unique and stable HBS, the first thing is to find a way to calculate a conformal mapping from the given domain to unit disk. As mentioned in Section \ref{norm_to_m1}, we only have finite boundary points of the shape and zipper algorithm invented in the 1980s is a suitable and accurate method to deal with this situation numerically. 

    Marshall \etal demonstrates the zipper algorithm detailedly with clear diagrams in \cite{marshall2007convergence}. For the convenience of readers, we gives a very brief review here. Given $N$ clockwise boundary points $z_1, z_2, \cdots, z_N \in \partial \Omega$, this algorithm use a series of linear fractional transformations $g_1, g_2, \cdots, g_N$ to map $z_1, z_2,\cdots, z_N$ to real axis one-by-one, and finally transform the upper half plane to unit disk by $g_{N+1}(z) = \frac{z-i}{z+i}$. Therefore, $g = g_{N+1} \circ g_N \circ \cdots \circ g_2 \circ g_1$ is a conformal mapping indisputably and maps all these boundary points to unit circle and the domain $\Omega$ to $\D$. Remark that zipper algorithm is sensitive to the order of points. If we input the points anti-clockwise, i.e. $z_N, z_{N-1}, \cdots, z_1$, the zipper will give us a conformal mapping from $\Omega^c$ to $\D$. This progress is shown in figure \ref{zipper algo}.

    \begin{figure}
    \begin{center}
        \includegraphics[width=12cm]{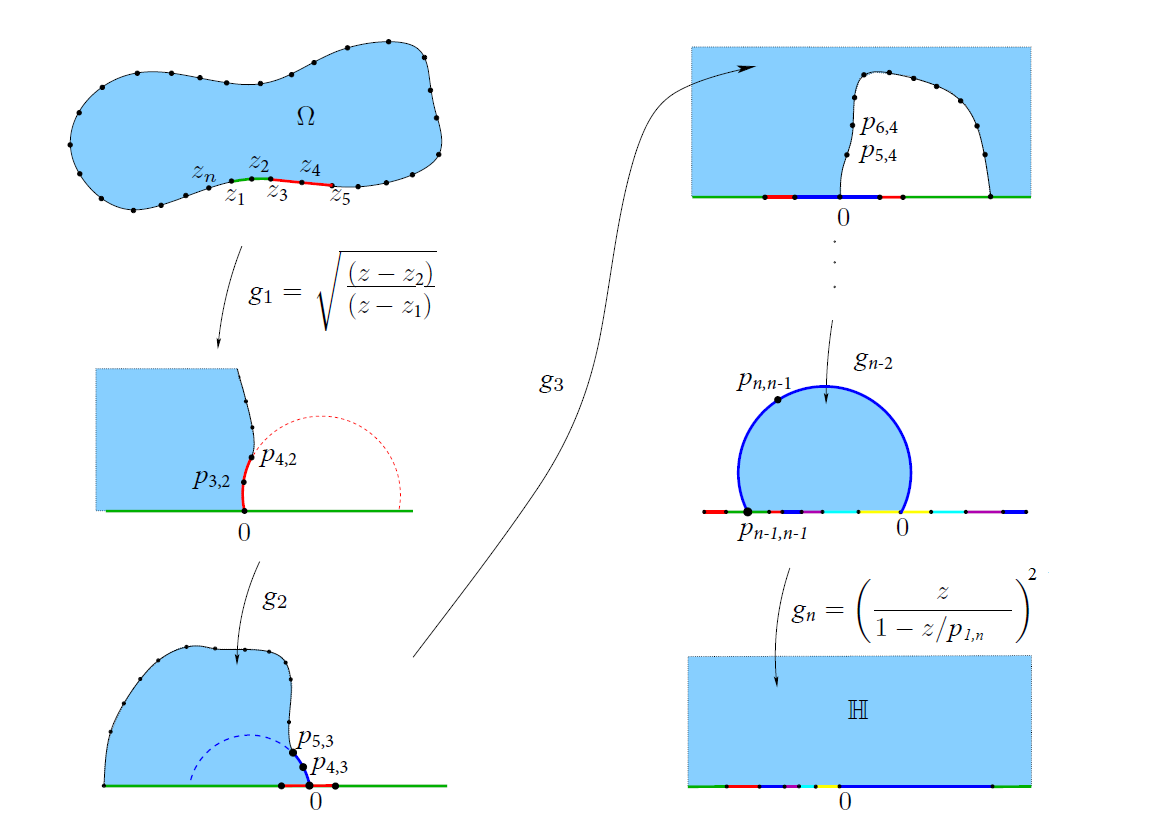}
    \end{center}
    \caption{Zipper algorithm}
    \label{zipper algo}
    \end{figure}

    For $\Phi_1 : \D \rightarrow \Omega$, we can find a conformal mapping $g_{\Phi_1}: \Omega \rightarrow \D$ by inputting points clockwise, and $\Phi_1 = g_{\Phi_1}^{-1}$. For $\Phi_2 : \D^c \rightarrow \Omega^c$, we can input anti-clockwise points and get $g_{\Phi_2} : \Omega^c \rightarrow \D$, then $\Phi_2(z) = g_{\Phi_2}^{-1}(\frac{1}{z})$. Because of the invariance of HBS under scaling, the number of boundary points $N$ can be fixed as $200$ here and these points are picked uniformly from the shape contour. 

        \begin{algorithm}[H]                           % HERE!!!!!!!!!
        \caption{Zipper}          % give the algorithm a caption
        \label{zipper}      % and a label for \ref{} commands later in the document
        \begin{algorithmic}  % enter the algorithmic environment
            \STATE \textbf{Inputs:} $z_i \in \partial \Omega$ for $i=1, 2, \cdots, N$, $N=200$. %, $\epsilon = 10^{-10}$(any very small value)
            \STATE \textbf{Initialize:} Let $k=2$, $g_1(z) = \sqrt{\frac{z-z_2}{z-z_1}}$, $g= g_1$ and compute $p_{i,2}=g(z_i)$.
            \WHILE{$k < N$}
                \STATE Pick $q = p_{k+1, k} = a + b i$, then compute $c = \frac{a}{\abs{q}^2}$, $d = \frac{b}{\abs{q}^2}$.
                \STATE Let $g_k(z) = \sqrt{\frac{cz}{1+dzi}}$, then $g = g_k \circ g$ and compute $p_{i, k+1} = g_k(p_{i, k})$.
                \STATE Let $k = k+1$.
            \ENDWHILE
            \STATE Let $g_{N}(z) = \left(\frac{z}{1-\frac{z}{p_{1,N}}}\right)^2$ and $g_{N+1}(z) = \frac{z-i}{z+i}$, then $g = g_{N+1} \circ g_N \circ g$ and $p_i = g_{N+1} \circ g_N(p_{i,N})$.
            \RETURN Conformal mapping $g: \Omega \rightarrow \D$ and boundary points $p_i \in \partial \D$.
        \end{algorithmic}
        \end{algorithm}

    \subsection{Normalization}
    Section \ref{norm_to_m1} and \ref{norm2} show that we can normalize $M_1$ and $M_2$ by some restrictions and then HBS $B$ can be unique. This section will tells how to satisfy equation (\ref{norm phi1}) and (\ref{norm phi2 1}) from the output of zipper algorithm.

    To normalization $M_1$, we need to solve equation (\ref{eq}). Generally speaking, the output of zipper, $p_i = g_{\Phi_1}(z_i) \in \partial \D$ for $i = 1, 2, \cdots, N$, gather around some point. At that time, $\abs{f(a)} \approx 1$ almost everywhere and the solution of (\ref{eq}) is also very close to that point. This means the solution is quite unstable and hard to converge for common algorithms(see figure \ref{original distribution}).

    \begin{figure}
        \begin{center}
            \includegraphics[width=9cm]{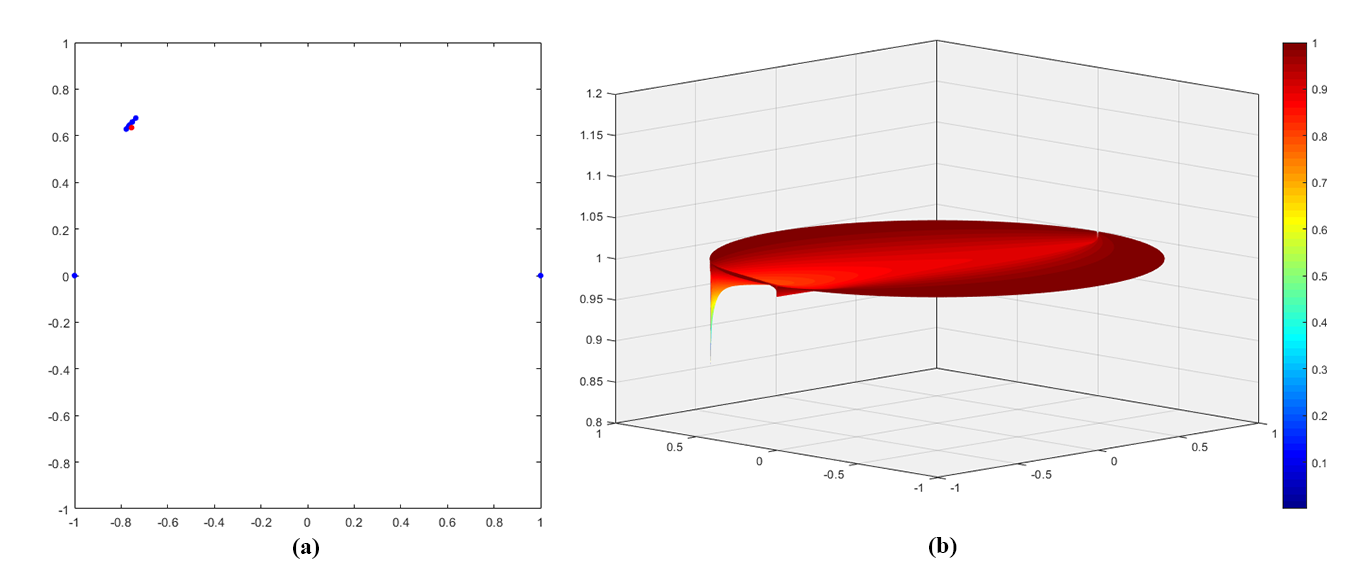}
        \end{center}
        \caption{(a) The $p_i \in \partial \D$ gather in a small neighborhood around their arithmetic mean $p_c$, which is labeled in red; (b) The corresponding $\abs{f(a)}$ for these $p_i$. It's worth to mention that the minimal of $\abs{f(a)}$ can reach actually, but it isn't shown in the picture since the grid is not small enough.}
        \label{original distribution}
    \end{figure}

    \begin{figure}
        \begin{center}
            \includegraphics[width=9cm]{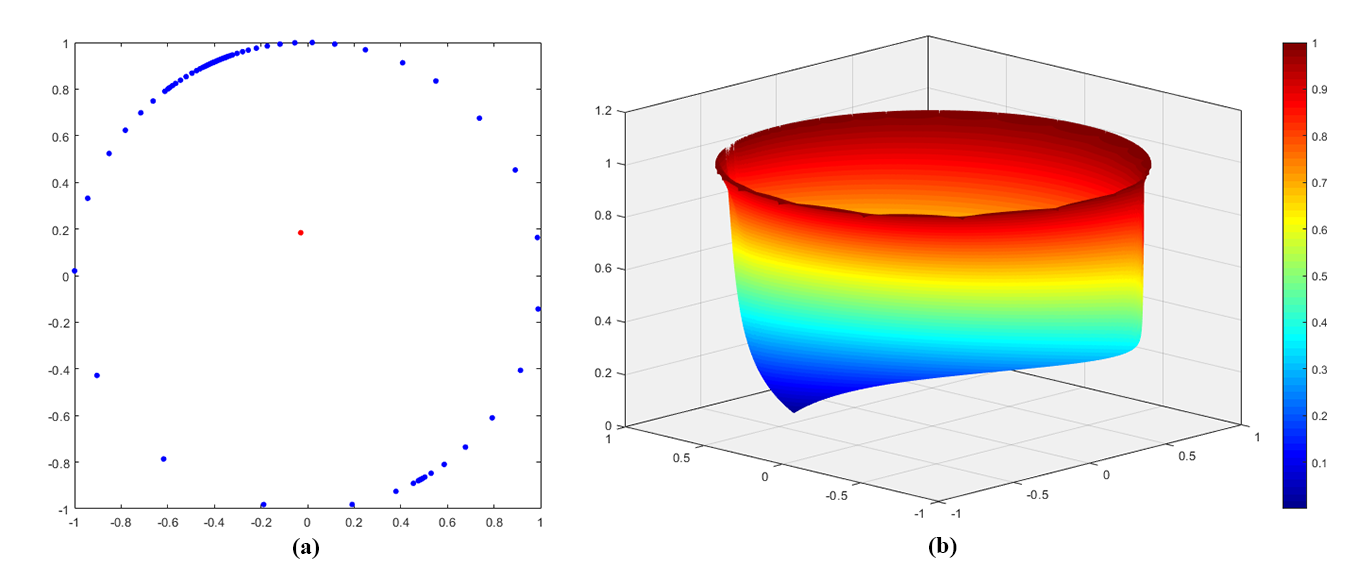}
        \end{center}
        \caption{Similar with figure \ref{original distribution}. (a) The boundary points after adjustment; (b) The $\abs{f(a)}$.}
    \end{figure}

    Instead of proposing a complicated method to solve equation directly, the way we adopted to solve this problem is to use some Mobi\"us transformations to adjust the distribution of $p_{i}$ until their arithmetic center is very close to 0. We set $p_{i,0} = p_{i}$ at the beginning and for the $k$-th iteration, let $p_{i,k} \in \partial \D$ be the boundary points and
    \begin{equation*}
        p_{c,k} = \frac{\sum_{i=1}^N p_{i,k}}{N} \in \D
    \end{equation*}
    as the arithmetic center of $p_{i,k}$. Remark that $F_a(z) = \frac{z-a}{1-\overline{a}z}$ is a Mobi\"us transformation ignoring rotation, then the $F_{p_{c,r}}$ gives new boundary points as 
    \begin{equation*}
        p_{i,k+1} = F_{p_{c,k}}(p_{i,k}) = \frac{p_{i,k} - p_{c,k}}{1 - \overline{p_{c,k}}p_{i,k}}.
    \end{equation*}
    Repeat this iteration for $k$ times until $\abs{p_{c,k+1}}$ is close to $0$, then the distribution of $p_{i, k+1}$ is sufficiently regular and almost uniform. At that time, suppose 
    \begin{equation*}
        M_{\Phi_1} = F_{p_{c,k}} \circ F_{p_{c,k-1}} \circ \cdots \circ F_{p_{c, 0}}
    \end{equation*}
    and the final conformal mapping satisfying (\ref{norm phi1}) is
    \begin{equation}
        \tilde{\Phi}_1 = \Phi_1 \circ M_{\Phi_1}^{-1}
    \end{equation}

    \begin{algorithm}[H]
    \caption{Normalize $M_1$}
    \label{alg norm phi1}
    \begin{algorithmic}
        \STATE \textbf{Inputs:} $\Phi_1$ and $p_i \in \partial \D$ for $i=1, 2, \cdots, N$, $N=200$, $\epsilon=10^{-5}$.
        \STATE \textbf{Initialize:} Let $k=0$, $p_{i,0} = p_i$, $M_{\Phi_1} = id$ and compute $p_{c,0}= \frac{1}{N} \sum_{i=1}^N p_{i}$. %and $M_{p_{c,0}}(z) = \frac{z-p_{c,0}}{1- \overline{p_{c, 0}}z}$
        \WHILE{$\abs{p_{c,k}} > \epsilon$}
            \STATE Let $F_{p_{c,k}}(z) = \frac{z-p_{c,k}}{1- \overline{p_{c, k}}z}$ and $M_{\Phi_1} = F_{p_{c,k}} \circ M_{\Phi_1}$.%and $\Phi_1 = \Phi_1 \circ F_{p_{c,k}}^{-1}$.
            \STATE Compute $p_{i, k+1} = F_{p_{c,k}}(p_{i, k})$ and $p_{c,k+1} = \frac{1}{N} \sum_{i=1}^N p_{i, k+1}$.
            \STATE Let $k = k+1$.
        \ENDWHILE
        % \STATE Solve equation (\ref{eq}) by Newtown's method and get solution $a \in \D$.
        % \STATE Let $F_a(z) = \frac{z-a}{1- \overline{a}z}$ and $M_{\Phi_1} = F_a \circ M_{\Phi_1}$.% and $\hat{\Phi}_1 = \Phi_1 \circ F^{-1}_{a}$.
        \RETURN Mobi\"us transformation $M_{\Phi_1}$.
    \end{algorithmic}
    \end{algorithm}

    \begin{figure}
        \begin{center}
            \includegraphics[width=8cm]{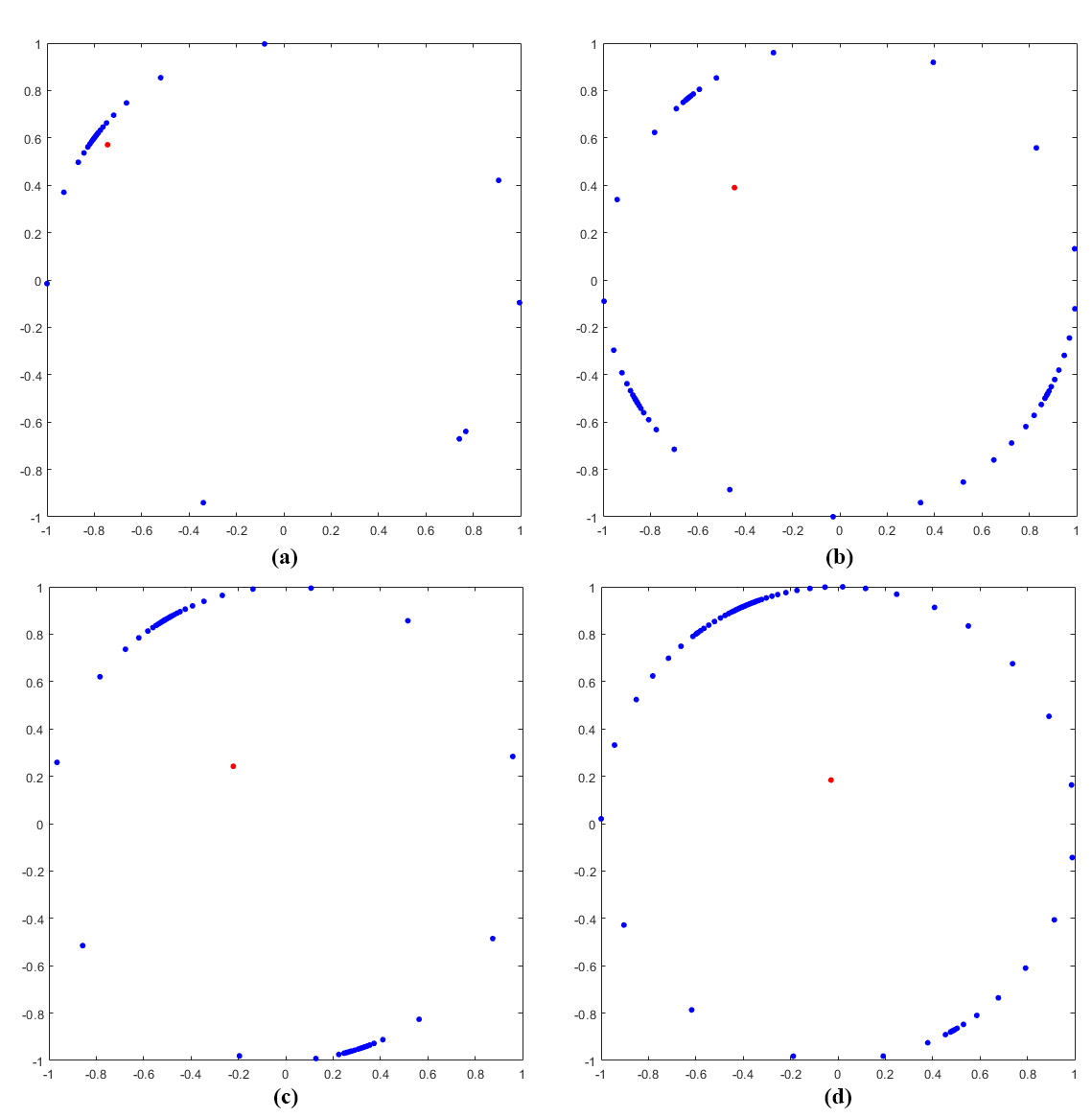}
        \end{center}
        \caption{The iteration of distribution adjustment. The boundary points are blue and their arithmetic center is red in each picture. (a)-(d) The 1st, 3rd, 7th, 12th iteration.}
    \end{figure}
    
    As for the normalization of $M_2$, it's much easier. For requirement (\ref{norm phi2 1}), let $b = \Phi_2^{-1}(\infty)$ and $c = -\frac{1}{\bar{b}}$, from
    \begin{equation*}
        M_{\Phi_2}(\infty) = F_{c}(\infty) = \lim_{z \rightarrow \infty} \frac{z -c}{1 - \bar{c}z} = -\frac{1}{\overline{c}} = b
    \end{equation*}
    we have 
    \begin{equation*}
        \tilde{\Phi}_2(\infty) = \Phi_2 \circ M_{\Phi_2}(\infty) = \Phi_2(b) = \infty.
    \end{equation*}

    \begin{algorithm}[H]
    \caption{Normalize $M_2$}
    \label{alg norm phi2}
    \begin{algorithmic}
        \STATE \textbf{Inputs:} $\Phi_2$ and $p_i \in \partial \D$ for $i=1, 2, \cdots, N$, $N=200$.
        \STATE Compute $b = \Phi_2^{-1}(\infty)$ and $c = -\frac{1}{\overline{b}}$.
        \STATE Let $F_c(z) = \frac{z-c}{1-\overline{c}z}$ and $M_{\Phi_2} = F_{c}$.
        \RETURN Mobi\"us transformation $M_{\Phi_2}$
        % \RETURN Conformal mapping $\hat{\Phi}_2: \D^c \rightarrow \Omega^c$ satisfied (\ref{norm phi2 1}), (\ref{norm phi2 2}) and boundary points $p_i \in \partial \D$.
    \end{algorithmic}
    \end{algorithm}

    \subsection{Harmonic extension}\label{detail harmonic}
        After obtaining the normalized $\tilde{\Phi}_1(z) = g_{\Phi_1}^{-1} \circ M_{\Phi_1}^{-1}(z)$ and $\tilde{\Phi}_2(z) = g_{\Phi_2}^{-1} \circ M_{\Phi_2} (\frac{1}{z})$, the conformal welding can be represented as a series points 
        \begin{equation*}
            (\varphi_i, \omega_i) = \left(\arg(\tilde{\Phi}_2^{-1}(z_i)), \arg(\tilde{\Phi}_1^{-1}(z_i))\right),
        \end{equation*}
        where $\varphi_i, \omega_i \in [0, 2\pi)$. So in fact we should use discrete form Poisson integral to extend $f$ to a harmonic mapping $H$ on the unit disk
        \begin{equation}\label{discrete poisson integral}
            H(re^{i\theta}) = \frac{1}{2\pi} \sum_{j=1}^N \frac{(1-r^2) e^{i \omega_j} \gamma_j}{1 - 2 r cos (\varphi_j - \theta) + r^2} ,
        \end{equation}
        where $\gamma_j = (\varphi_{j} - \varphi_{j-1}) \bmod{ 2\pi}$ and $\varphi_0 = \varphi_N$ and this $\bmod$ can solve some critical value problem, for example, $\varphi_j = 0$ but $\varphi_{j-1} = 6$. For the convenience of computation, we only calculate the value of $H$ on a grid inside unit disk
        \begin{equation}\label{grid G}
            G := \{z=x+i y \, \mid \abs{z} \le 1, x=\frac{j}{M}, y=\frac{k}{M}, j,k=-M,\cdots,M \},
        \end{equation}
        where $M$ is a fixed number and we choose $M=100$ here. Finally, the desired HBS can be generated from $\mu_H$ according to equation (\ref{normaled B}).

\subsection{Summary of the Algorithm}
    The totally algorithm to get HBS is as following.
    \begin{algorithm}[H]
    \caption{Calculate HBS}
    \label{alg all}
    \begin{algorithmic}
        \STATE \textbf{Inputs:} Simply-connected shape $\Omega \subset \C$, $N=200$.
        \STATE Pick clockwise points $z_1, \cdots, z_N \in \partial \Omega$ uniformly.
        \STATE Input $z_1, \cdots, z_N$ to Algorithm \ref{zipper} then get $g_{\Phi_1}: \Omega \rightarrow \D$ and $p_{i,1} \in \partial\D$.
        \STATE Input $z_N, \cdots, z_1$ to Algorithm \ref{zipper} then get $g_{\Phi_2}: \Omega^c \rightarrow \D$ and $p_{i,2} \in \partial\D$.
        \STATE Let $\Phi_1(z) = g_{\Phi_1}^{-1}(z)$ and $\Phi_2(z) = g_{\Phi_2}^{-1}(\frac{1}{z})$.
        \STATE Input $\Phi_1$ and $p_{i,1}$ to Algorithm \ref{alg norm phi1}, then get $M_{\Phi_1}$.
        \STATE Input $\Phi_2$ and $p_{i,2}$ to Algorithm \ref{alg norm phi2}, then get $M_{\Phi_2}$.
        \STATE Let $\tilde{\Phi}_1 = \Phi_1 \circ M_{\Phi_1}^{-1}$ and $\tilde{\Phi}_2 = \Phi_2 \circ M_{\Phi_2}$.
        \STATE Let $f = \tilde{\Phi}_1^{-1} \circ \tilde{\Phi}_2$ and represent it by $(\varphi_i, \omega_i) = (\arg(p_{i,2}), \arg(p_{i,1}))$.
        \STATE Extend $f$ to $H$ on $\D$ by equation (\ref{discrete poisson integral}) on grid $G$.
        \STATE Calculate Beltrami coefficient $\mu_{H}$.
        \STATE Calculate $\theta = \arg \int_\D \mu_H(z) dz$, $\theta' = \arg \int_\D \mu_H(z) / z dz$.
        \IF{$0 \le \theta - \theta'/2 < \pi$}
        \STATE Let $B(z) = e^{i \theta} \mu_{H}(e^{-\frac{1}{2}i\theta} z)$.
        \ELSE
        \STATE Let $B(z) = e^{i \theta} \mu_{H}(-e^{-\frac{1}{2}i\theta} z)$.
        \ENDIF
        \RETURN Harmonic Beltrami signature $B$.
    \end{algorithmic}
    \end{algorithm}

    \subsection{Reconstruction from HBS}
    The proposed HBS is an effective fingerprint of 2D shape and we can also construct the corresponding shape $\Omega$ from given HBS $B$ easily. Different from the theory metioned in theorem \ref{one to one equivalence class}, there is no computational algorithm to find the quasiconformal map $G$ on $\hat{\C}$ directly from its Beltrami coefficient $\mu$ in equation (\ref{reconstructionmu}), so we adopt the following method.
    
    Since $\mu=0$ on $\D^c$, we just focus on the part inside $\D$ and get $F: \D \rightarrow \Omega_0$ by the free boundary quasiconformal deformation method in \cite{choi2020shape}. Since $B$ is the Beltrami coefficient of some harmonic extension $H$ and $\mu_F = B$, such $F$ is also harmonic. Then we pick $z_1, \cdots, z_k$ from $\partial \D$ uniformly and $F(z_1), \cdots, F(z_k) \in \partial \Omega_0$ respectively. After that, the geodesic algorithm \cite{marshall2009lens} welds all $z_i$ and $F(z_i)$, generating conformal maps $g_1 : \Omega_0 \rightarrow \Omega$ and $g_2: \D^c \rightarrow \Omega^c$, where $\Omega = g_1 \circ F (\D)$ and $g_1 \circ F(z_i) = g_2(z_i)$.
    
    Claims that $\Omega$ is the shape we want to reconstruct, up to a rotation, scaling and transformation. There must be some comformal function $C$ maps $\Omega_0$ to $\D$, then $g_1 \circ C^{-1}$ maps $\D$ to $\Omega$ and $C \circ g_1^{-1} \circ g_2$ is conformal welding of $\Omega$. For each $z_i \in \partial \D$, we have
    $$C \circ F(z_i) = C \circ g_1^{-1} \circ g_2(z_i)$$
    and $C \circ F$ is harmonic, so $C \circ F$ is the harmonic extension of conformal welding. Therefore the HBS of $\Omega$ is $\mu_{C \circ F} = B$ and $\Omega$ is the desired shape.

    As metioned in section \ref{geometric implication}, we fix
    \begin{equation*}
        G = \begin{cases}
            g_1 \circ F \text{ on } \D,\\
            g2 \text{ on } \D^c
        \end{cases}
    \end{equation*}
    at $0, 1, \infty$ in order to eliminate the arbitrariness and compare the distance between different reconstructed shapes directly. For above method, it's equivalent with
    \begin{gather*}
        F(0) = 0, F(1) = 1,\\
        g_1(0) = 0, g_1(1) = 1,\\
        g_2(\infty) = \infty.
    \end{gather*}
    These requirements can be achieved naturally in free boundary quasiconformal deformation and geodesic welding algorithm. So the reconstruction algorithm can be described as following.

    \begin{algorithm}[H]
    \caption{reconstruction from HBS}
    \label{reconstruction algo}
    \begin{algorithmic}
        \STATE \textbf{Inputs:} HBS $B: \D \rightarrow \D$, $N=1000$.
        \STATE Let $z_k=e^{2\pi\frac{k}{N}i}\in \partial \D, k=1,\cdots, N$.
        \STATE Reconstruct $F: \D \rightarrow \Omega_1$ from $B$ by free boundary quasiconformal deformation with $F(0)=0$ and $F(1)=1$.
        \STATE Compute $g_1: \Omega_1 \rightarrow \Omega$ and $g_2: \D^c \rightarrow \Omega^c$ by geodesic welding algorithm with $g_1(0) = 0, g_1(1) = 1, g_2(\infty) = \infty$ and $g_2(z_k) = g_1(F(z_k))$.
        \RETURN Reconstructed shape $\Omega = g_1 \circ F(\D)$.
    \end{algorithmic}
    \end{algorithm}

    \section{Experimental result}\label{result}
        In this section, we validate key properties of our proposed Harmonic Beltrami signature, the invariance of under simple transformations and the robustness under small distortion and modification. Then we try to reconstruct shapes from HBS and verify the robustness of reconstruction algorithm. Besides we test the classification performance of HBS.

        Before showing results, what needs to illustrate is that the distance we used to measure the difference of HBS is based on $L^2$ norm as
        \begin{equation}\label{bs dis}
            d(B_1, B_2) = \sqrt{\frac{1}{N} \sum_{i=1}^N \abs{B_1(z_i) - B_2(z_i)}^2},
        \end{equation}
        where $B_1, B_2$ are two different Harmonic Beltrami signatures, $z_i \in \D$ is the face center of triangular mesh $M$ on grid $G$ mentioned in (\ref{grid G}) and $N=62504$ here. All following experiments are implemented in MATLAB R2014a running on 4-way Intel(R) Xeon(R) Gold 6230 processers with 80 cores at 2.10GHz base frequency and 1024 GB RAM with Ubuntu 18.04LTS 64-bit operating system.

    \subsection{Invariance}\label{sec inv}
        We use the dolphin shown in figure \ref{illu of BS} (a) as the original shape, then calculate HBS after scaling, translation and rotation and compare them with the original shape's HBS. The result is displayed in figure \ref{inv}. In this figure, the first column are the sets of boundary points and we remark them as $\Omega_a$ to $\Omega_f$. The second column are the corresponding Harmonic Beltrami signatures $B_a$ to $B_f$. Note that all the Harmonic Beltrami signatures are shown in modulus, i.e. $\abs{B_n}$ for row $n$, and in top view. And the third column(if have) are the histograms of the difference between original shape's Harmonic Beltrami signature, i.e. $\abs{B_n - B_a}$ for row $n$.
        
        Row b and c are about scaling, the shapes are $\Omega_b = \{z \mid z = 1.5 z_a, z_a \in \Omega_a \}$ and $\Omega_c = \{z \mid z = 0.5 z_a, z_a \in \Omega_a \}$ and the distance are $d(B_a, B_b) = 5.5647 \times 10^{-8}$ and $d(B_a, B_c) = 5.3476 \times 10^{-8}$. Row d is about translation, the shape $\Omega_d = \{z \mid z = z_a+100+20i, z_a \in \Omega_a \}$ and the distance is $d(B_a, B_d) = 4.7817 \times 10^{-8}$. Row e is about rotation, the shape is $\Omega_e = \{z \mid z = e^{0.2\pi i} z_a, z_a \in \Omega_a \}$ and the distance is $d(B_a, B_e) = 5.2144 \times 10^{-8}$. Row f is the combination of scaling, translation and rotation, the shape is $\Omega_e = \{z \mid z = 3e^{-0.85\pi i} z_a+350+600i, z_a \in \Omega_a \}$ and the distance is $d(B_a, B_e) = 5.7635 \times 10^{-8}$. These confirm the invariance of HBS and scaling, translation and rotation.

        \begin{figure*}
            \begin{center}
                \includegraphics[width=10cm]{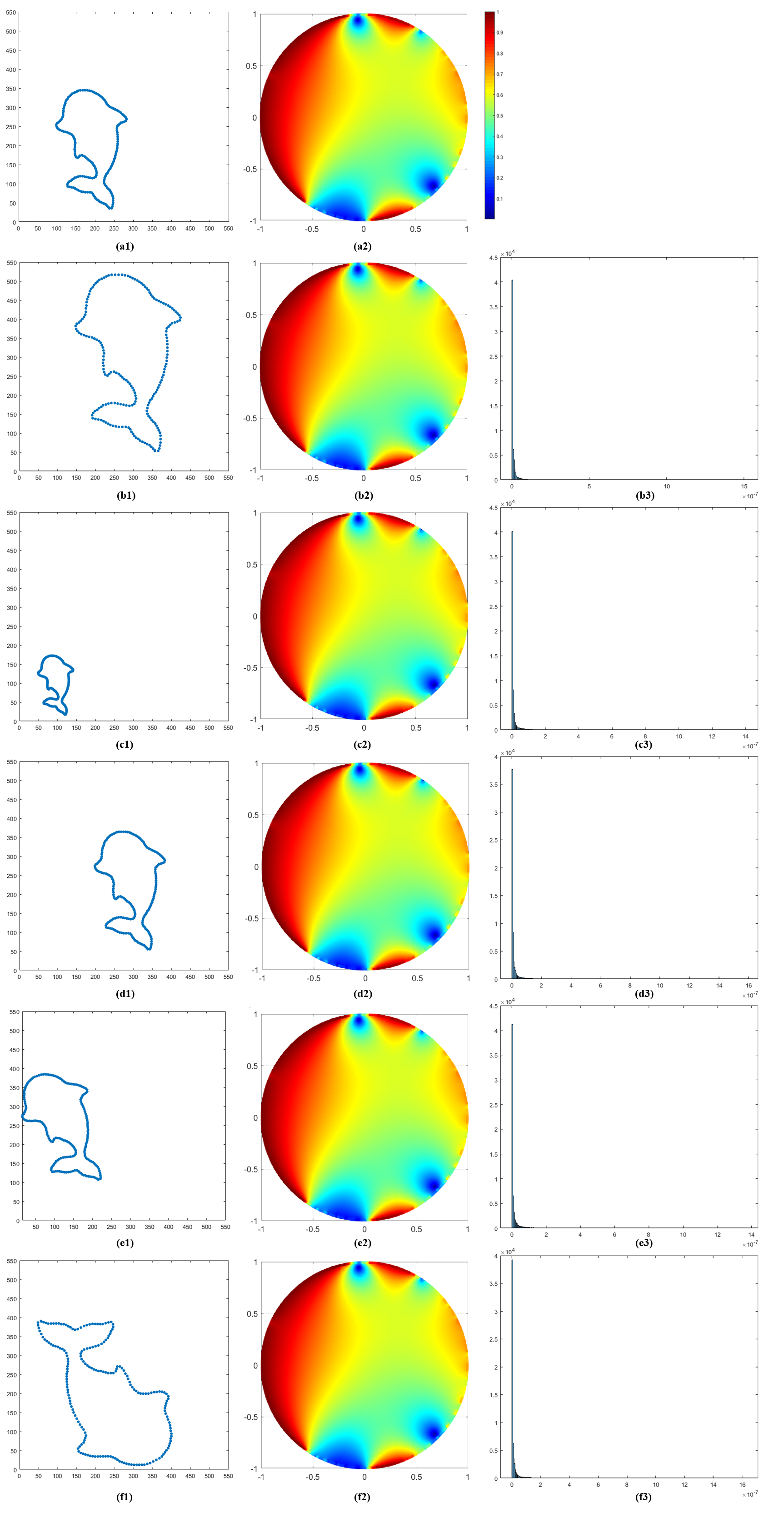}
            \end{center}
            \caption{Harmonic Beltrami signature under scaling, translation and rotation.}
            \label{inv}
        \end{figure*}

    \subsection{Robustness}
        Similar with Section \ref{sec inv}, here we still treat the dolphin as the original shape and modify some small parts of it and figure \ref{robust} is the result. It shows that the proposed signature is robust and stable and will not have a big mutation caused by small disturbance.

        Row g, h and i are result about modification. These shapes are generated by removing or adding something, which is in the red circle. We can see that Harmonic Beltrami signatures have slight differences from $B_a$ but are still similar in general. And this figure also demonstrates that the bigger the modification part is, the more different the Harmonic Beltrami signature is. For example in row i, losing a half of the tail makes the signature has a marked change. Quantitatively, $d(B_a, B_g) = 0.0154$, $d(B_a, B_h) = 0.0461$ and $d(B_a, B_i) = 0.2518$.

        Row j is for distortion. This dolphin is only enlarged in horizontally and becomes fatter, then the $B_j$ moves a little bit and $d(B_a, B_j) = 0.0825$.

        \begin{figure}
            \begin{center}
                \includegraphics[width=10.5cm]{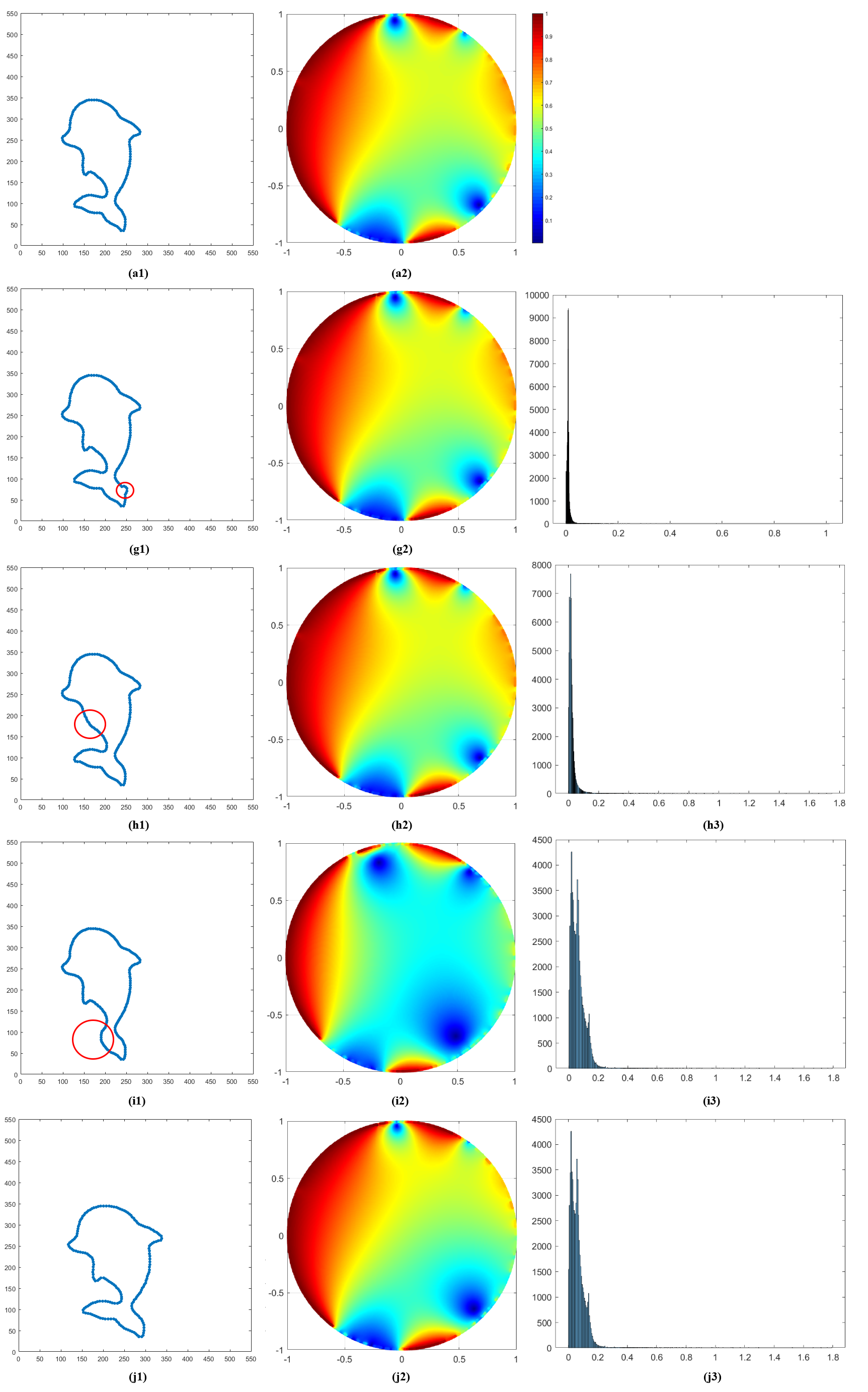}
            \end{center}
            \caption{Harmonic Beltrami signature under small modification}
            \label{robust}
        \end{figure}

    \subsection{Reconstruction from HBS}
        In this experiment, we show our reconstruction results and compare them with their corresponding original shapes. For each row in figure \ref{reconstruction experiment 1}, the left one is original shape, the middle one is HBS and the right one is reconstructed shape. We can find that each reconstructed shape is almost the same with the original one up to a transformation, rotation and scaling, which is owing to no any normalization to original shapes.
        
        Some reconstructed shapes may lose many points where the border is recessed because these points are concentrated in a very small range in HBS. If we use a bigger $N$ in algorithm \ref{reconstruction algo}, The missing part in reconstructed shape will get smaller, as shown in figure \ref{reconstruction experiment 2}.

        \begin{figure}
            \begin{center}
                \includegraphics[width=10cm]{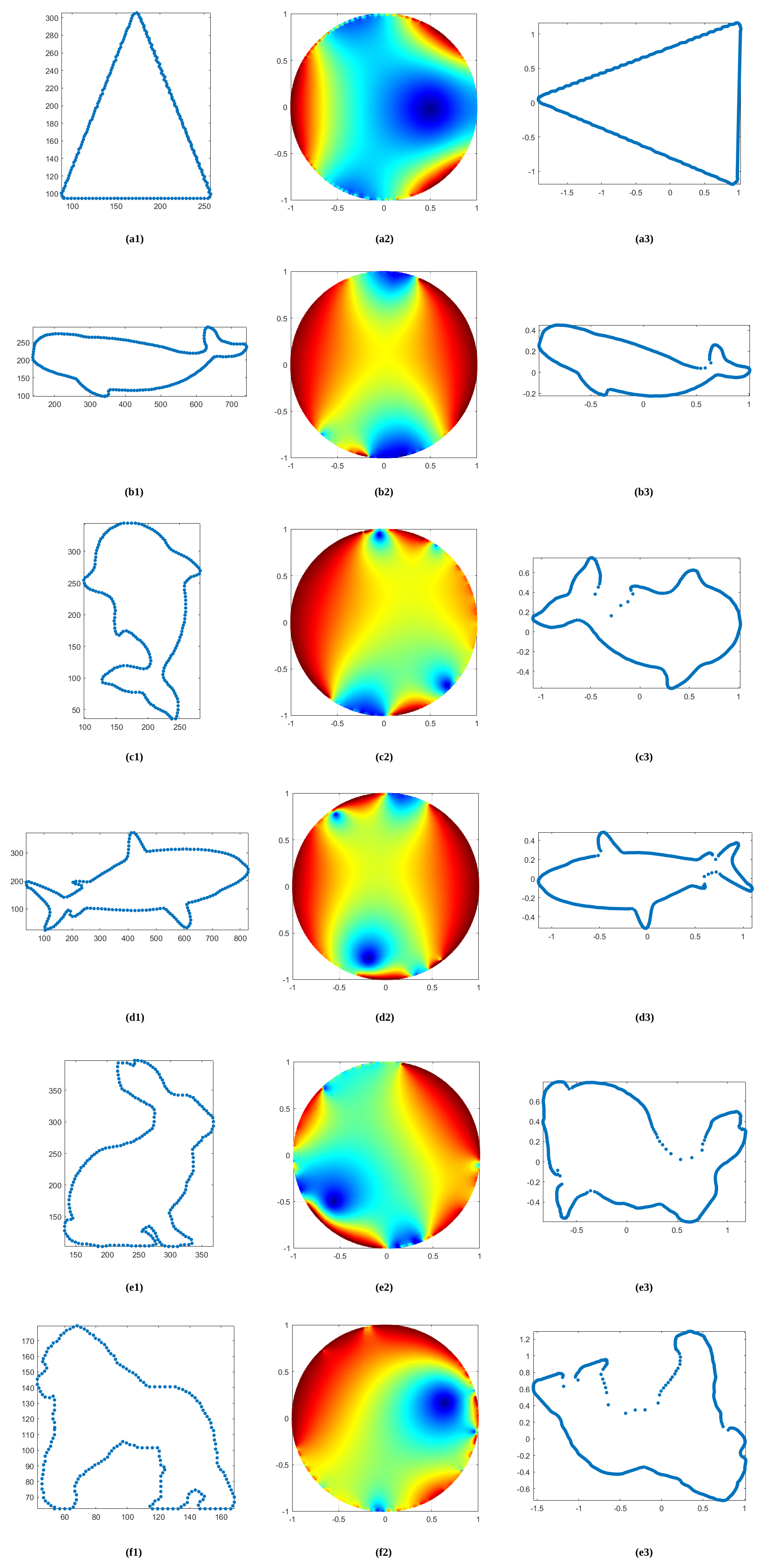}
            \end{center}
            \caption{Reconstruction from HBS}
            \label{reconstruction experiment 1}
        \end{figure}

        \begin{figure}
            \begin{center}
                \includegraphics[width=10cm]{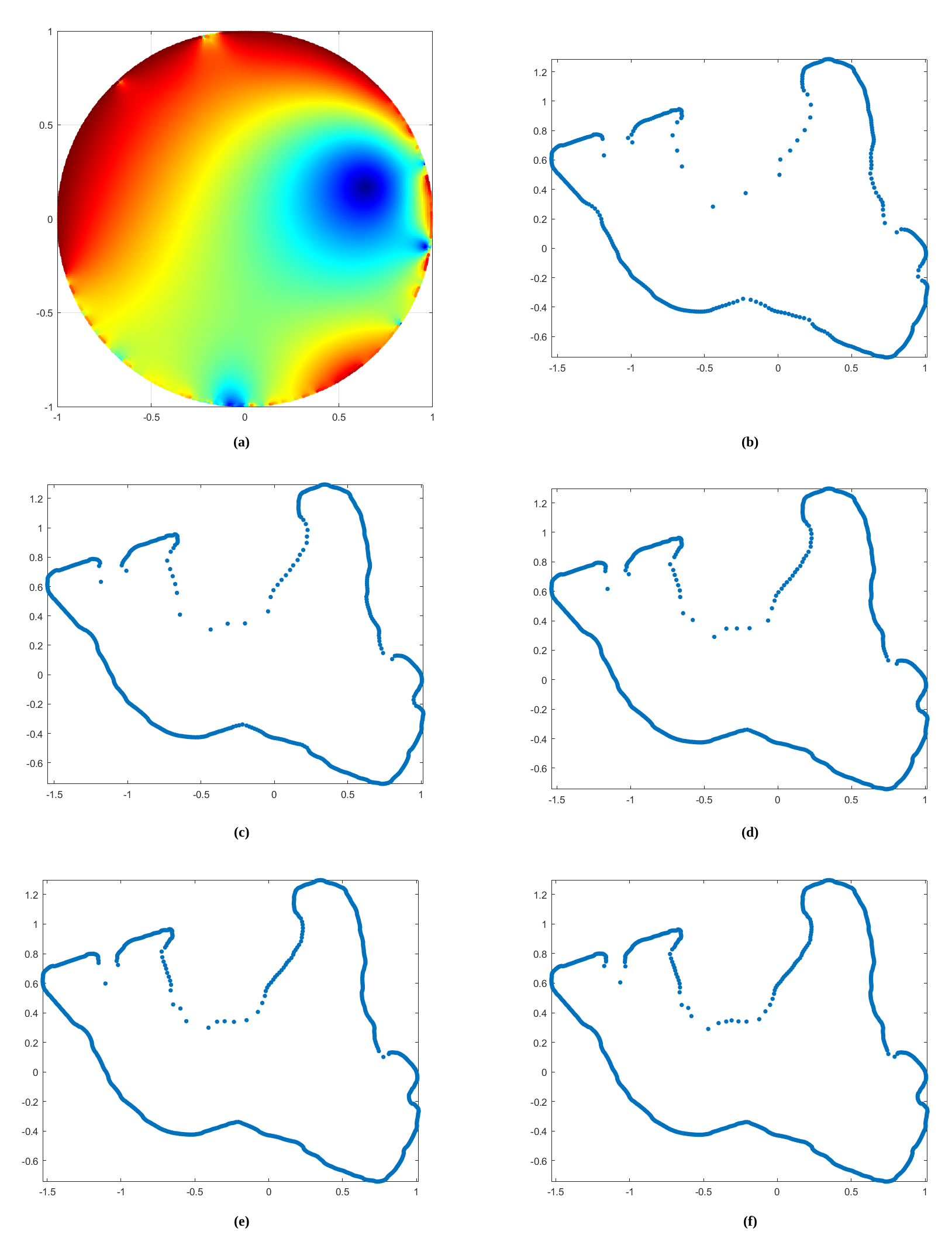}
            \end{center}
            \caption{(a) is the HBS and (b)-(f) are reconstructed shapes with different $N$, which are 500, 1000, 1500, 2000 and 2500 respectively.}
            \label{reconstruction experiment 2}
        \end{figure}

    \subsection{Robustness of reconstruction}
        Here we modify HBS directly by some function and then reconstruct.  The basic HBS $B_0$ is from figure \ref{reconstruction experiment 1} (b2) and
        \[
            B_k(z) = \abs{B_0(z)}^k B_0(z).
        \]
        $\Omega_k$ is reconstructed from $B_k$ as algorithm \ref{reconstruction algo}. In figure \ref{modify HBS}, the reconstructed whales become fatter and fatter when $k$ increases, but they still retain some important features.
        
        In order to describe this similarity more accurately, the distance of HBS from basic HBS is calculated by $L_\infty$ norm
        $$d_{HBS}(B, B_0) = \norm{B_0 - B}_\infty,$$
        and distance between reconstructed shape and original shape is computed as equation (\ref{shape distance})
        $$d_{\Omega}(\Omega, \Omega_0) =  \frac{1}{2}\left(\max_{q\in\partial \Omega_0} \min_{p\in\partial \Omega} ||p-q|| + \max_{p\in\partial \Omega} \min_{q\in\partial \Omega_0} ||p-q||\right).$$
        Except the $B_k$ metioned above, we also try many different $k$ and some other methods to edit the basic HBS $B_0$, like %$\tilde{B}_k(z) = e^{ik} B_0(z)$, 
        $\tilde{B}_k(z) = e^{i(1-\abs{B_0(z)})k}B_0(z)$ and so on. All these $d_{HBS}$ and $d_\Omega$ are shown in figure \ref{d_hbs and d_omega}, which indicates they are linearly related and support our theorem \ref{reconstruction robust theorem} strongly.

        \begin{figure}
            \begin{center}
                \includegraphics[height=22cm]{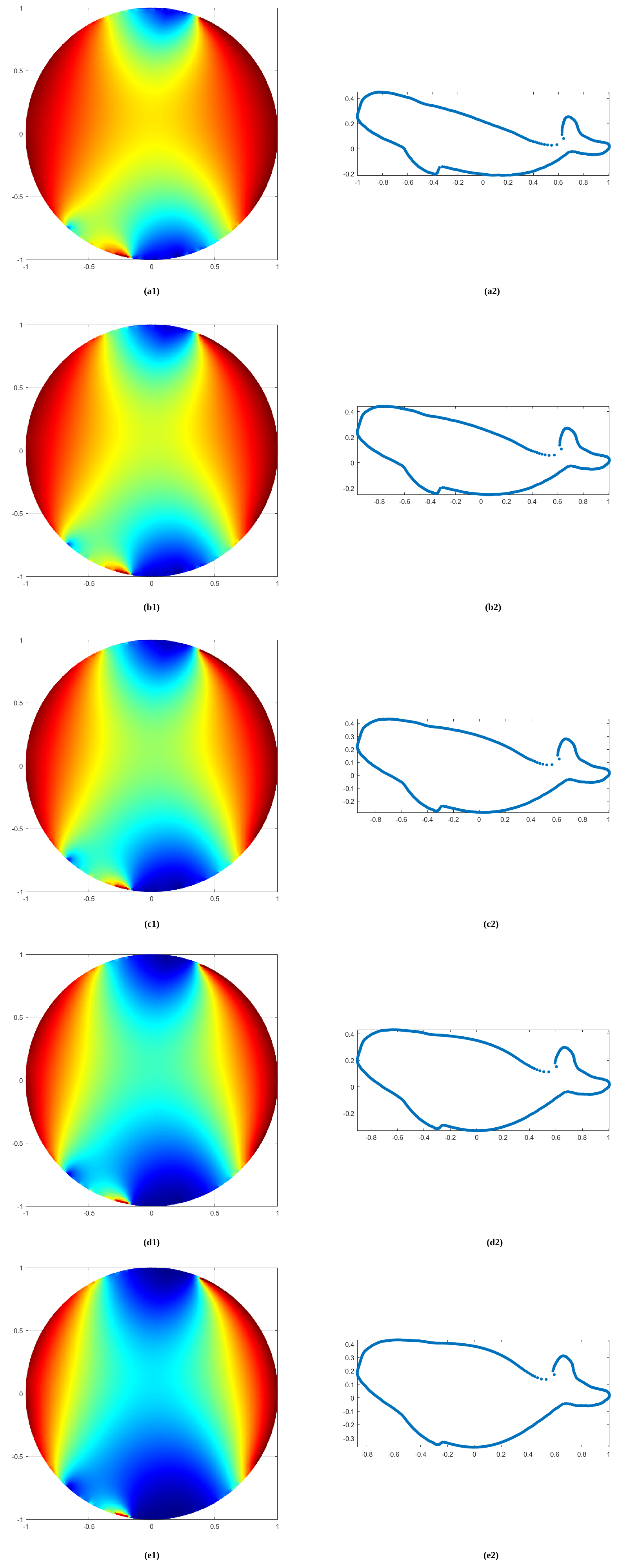}
            \end{center}
            \caption{Different $B_k$ and their reconstructed shapes, where the $k$ is -0.1, 0.2, 0.5, 1 and 1.5 for (a)-(e)}
            \label{modify HBS}
        \end{figure}

        \begin{figure}
            \begin{center}
                \includegraphics[width=8cm]{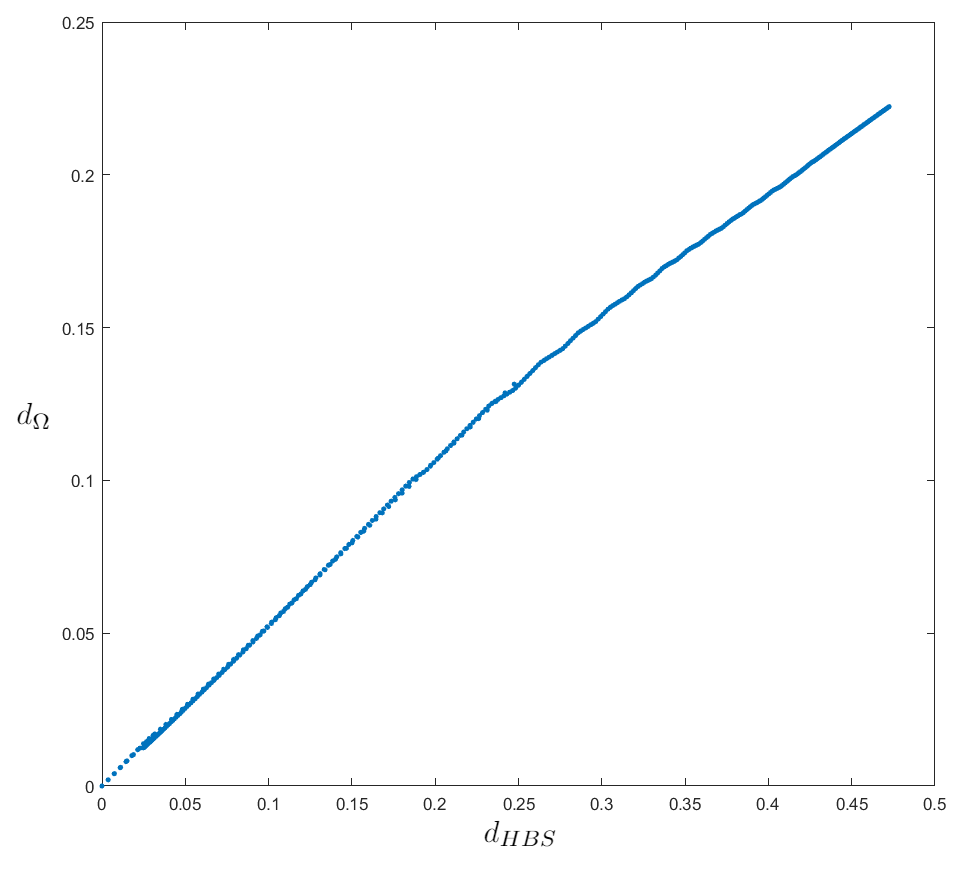}
            \end{center}
            \caption{$d_{HBS}$ and $d_\Omega$, each small blue point here means a pair of HBS and its reconstructed shape.}
            \label{d_hbs and d_omega}
        \end{figure}

    \subsection{Classification with HBS}
        Above properties ensure the proposed signature having the ability to reflect some stable features of given shape, but another much more important thing people concerned is that whether it can distinguish a shape from many different kinds of shapes and classify it correctly. A good signature should keep the similarity in the same kind of shapes and be significantly different for different kinds of shapes.

        To compare the classification performance, we also calculate the conformal welding of all the shapes directly to classify, and the distance is defined as
        \begin{equation}\label{welding dis}
            d_c(f_1, f_2) = \sqrt{\frac{1}{N} \sum_{i=1}^N \abs{f_1(z_i) - f_2(z_i)}^2},
        \end{equation}
        where $f_1, f_2$ are two different conformal welding, $z_i \in \partial \D$ and $N=1000$ here. Note that the conformal welding is not unique, but we can normalize it by
        \begin{equation}
        \begin{split}\label{norm to cw}
            \sum_{i=1}^N \Phi_1^{-1}(z_i) = 0,\\
            \Phi_2(\infty) = \infty,\\
            \Phi_2'(\infty) >= 0,\\
            f(1) = 1.
        \end{split}
        \end{equation}
        
        The first requirement is what we proposed in section \ref{norm_to_m1} and the other three are from \cite{sharon20062d}. After above normalizations, we can fix a unique conformal welding from the given shape.

        We prepare 3 images for 3 kinds of animals, fish, giraffe and elephant, so 9 images in total. From figure \ref{classification images}, we can find that each class share similar HBS but their conformal weldings look different. 
        
        Figure \ref{dis matrix} shows the intraclass distance of HBS is always smaller than interclass distance. But for conformal welding, the data is messy, for example, elephant 2 thinks itself is very different from other two elements but looks most like fish 3. After multidimensional scaling(MDS), we can maps all these 9 shapes to points on 2D plane as figure \ref{mds1}, where the HBS shows powerful classification ability.
        
        Actually, a closer look on figure \ref{classification images} shows the biggest difference between conformal weldings of the same class is just a translation. For example, if the conformal welding of giraffe 3 moves a little right up (mod by 2$\pi$), it will become very similar with other two giraffes. Further research tells us this difference is the result of the third requirement of (\ref{norm to cw}), which essentially looks for a starting point on shape boundary to be mapped to $(0, 0)$ in conformal welding. However, the starting point determined in this way is sensitive, so the corresponding conformal welding is unstable.

        \begin{figure}
            \begin{center}
                \includegraphics[width=12cm]{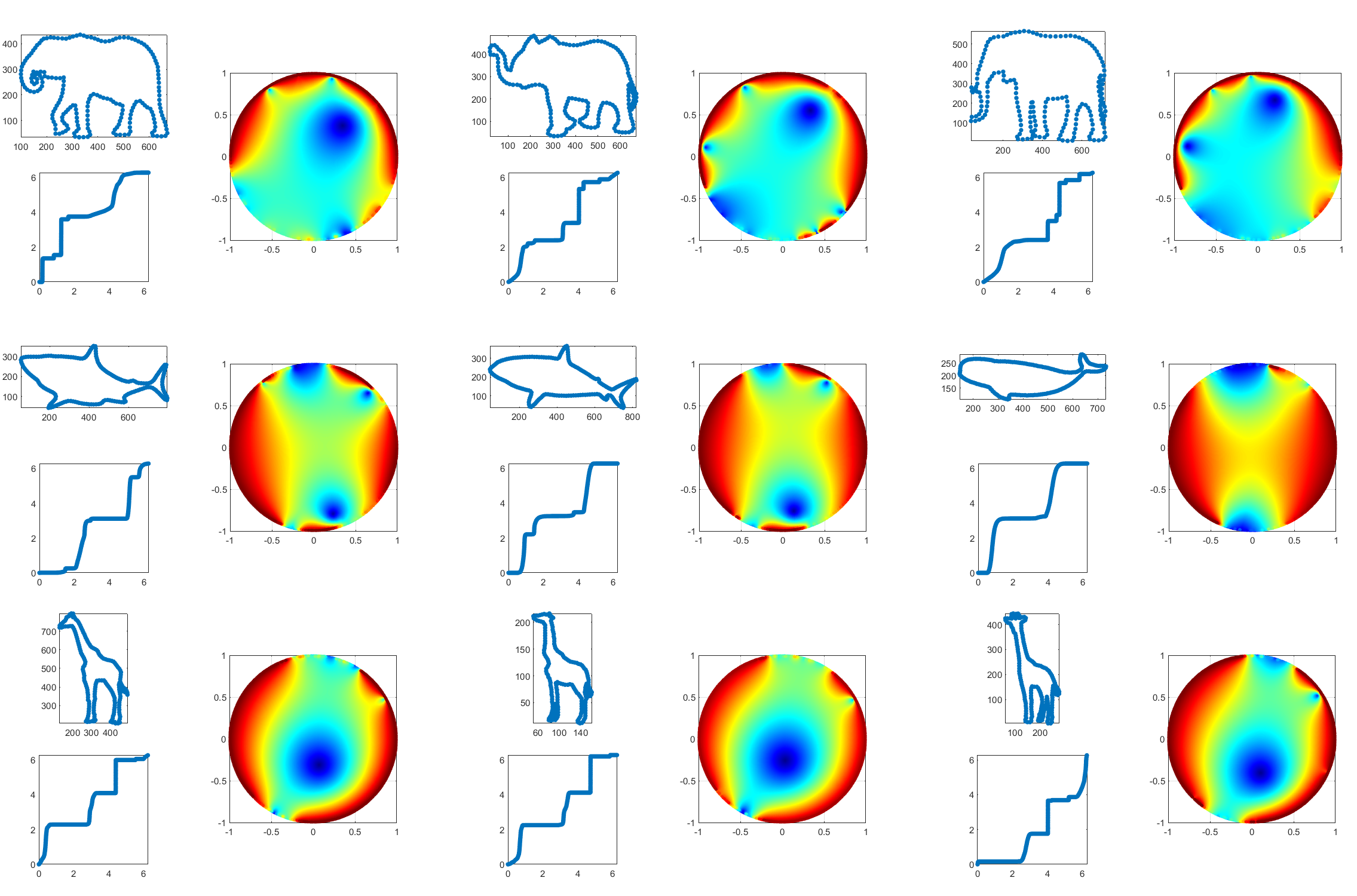}
            \end{center}
            \caption{These 3 rows are elephant, fish and giraffe. In each subfigure, the top left is the input shape, bottom left is the conformal welding and the right is Harmonic Beltrami signature.}
            \label{classification images}
        \end{figure}

        \begin{figure}
            \begin{center}
                \includegraphics[width=13cm]{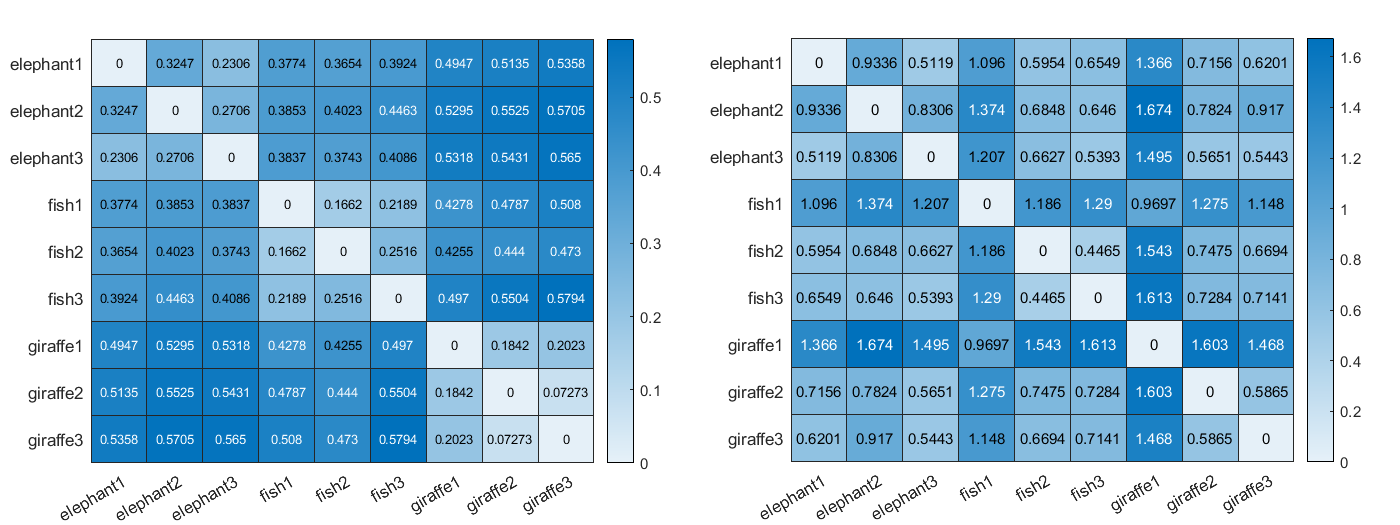}
            \end{center}
            \caption{(a) The distance matrix of Harmonic Beltrami signatures of above 9 shapes by equation (\ref{bs dis}); (b) The distance matrix of conformal weldings by equation (\ref{welding dis}).}
            \label{dis matrix}
        \end{figure}

        \begin{figure}
            \begin{center}
                \includegraphics[width=13cm]{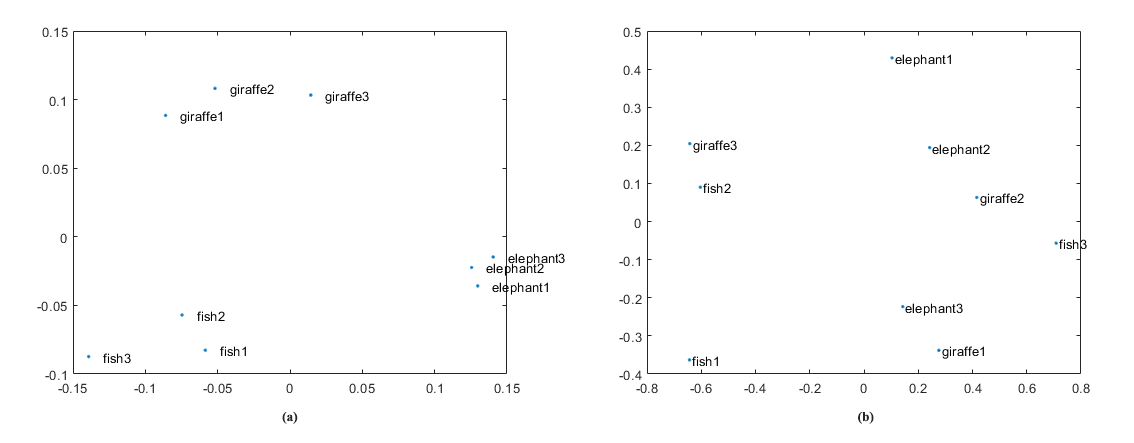}
            \end{center}
            \caption{(a) The MDS result of Harmonic Beltrami signature; (b) The MDS result of conformal welding.}
            \label{mds1}
        \end{figure}

    \subsection{Multi-class classification}\label{more class}
        In this experiment, we enlarge the amount of images to 58 in 7 different classes, which are camel, deer, dog, elephant, giraffe, gorilla and rabbit. All these shapes are in figure \ref{more class all}.
        
        We compare the classification performance of our HBS with conformal welding, shape context \cite{belongie2002shape} and boundary moments \cite{gupta1987267}. The distance of HBS and conformal welding are equation (\ref{bs dis}) and (\ref{welding dis}) respectively. Then the shape context distance defined in \cite{belongie2006matching} can measure the difference between shape contexts. And $L_2$ norm is a suitable distance for boundary moments.
        
        For each algorithm, the distances of any two shapes are calculated and form a distance matrix as last experiment, then MDS remaps these shapes to 2D plane accordingly and $k$-medoids method is used to cluster these points to 7 classes. The MDS and clustering results are displayed in figure \ref{mds2} and the classification accuracy can be found in table \ref{acc table}.
        
        These results demonstrate the multi-class classification performance of our proposed HBS is much better than boundary moments and conformal welding. As for shape context, Although its classification accuracy of is is very close to our HBS, we can find from $(a1)$ and $(b1)$ of figure \ref{mds2} that our HBS can separate different types more apart and keep relatively clear boundaries while shape context concentrates dogs, gorillas, camel and elephants in a very small range. By the way, the shape context distance is much more complicated and it takes more 5 hours(18377.49 seconds) to calculate distances between each 2 of these 58 shapes but our HBS only need 0.23 seconds.

        \begin{figure}
            \begin{center}
                \includegraphics[width=10cm]{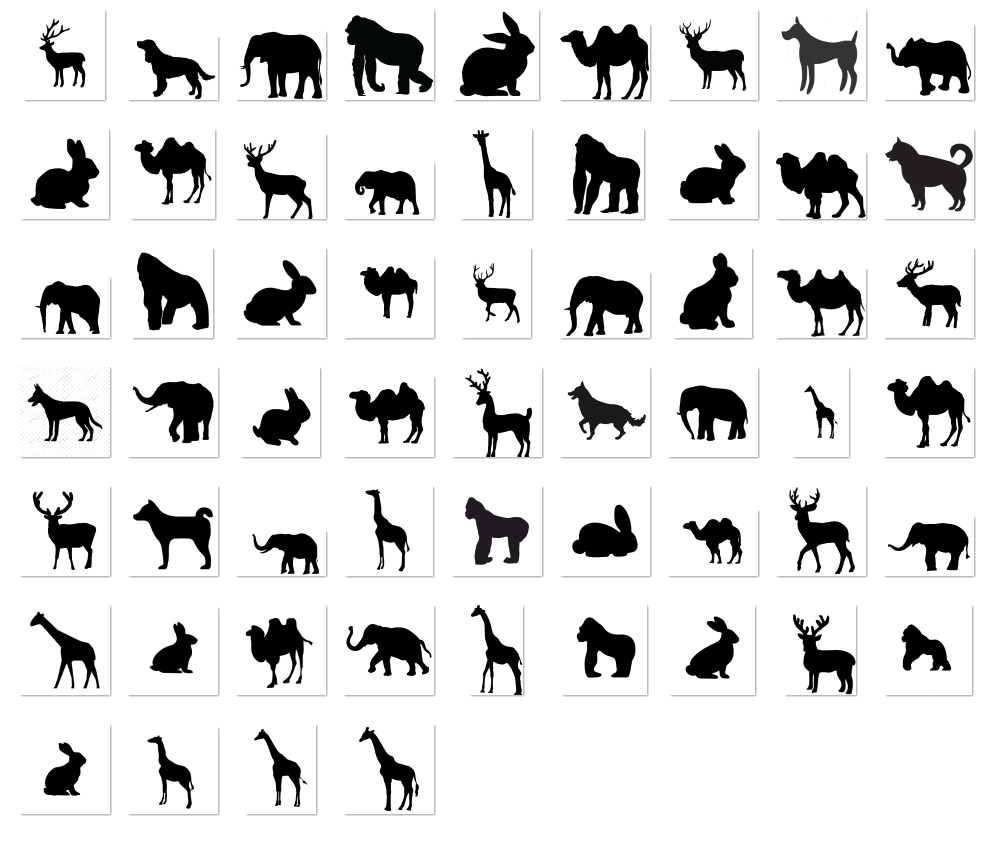}
            \end{center}
            \caption{All 58 shapes within 7 classes used in experiment \ref{more class}}
            \label{more class all}
        \end{figure}

        \begin{figure*}
            \begin{center}
                \includegraphics[width=\textwidth]{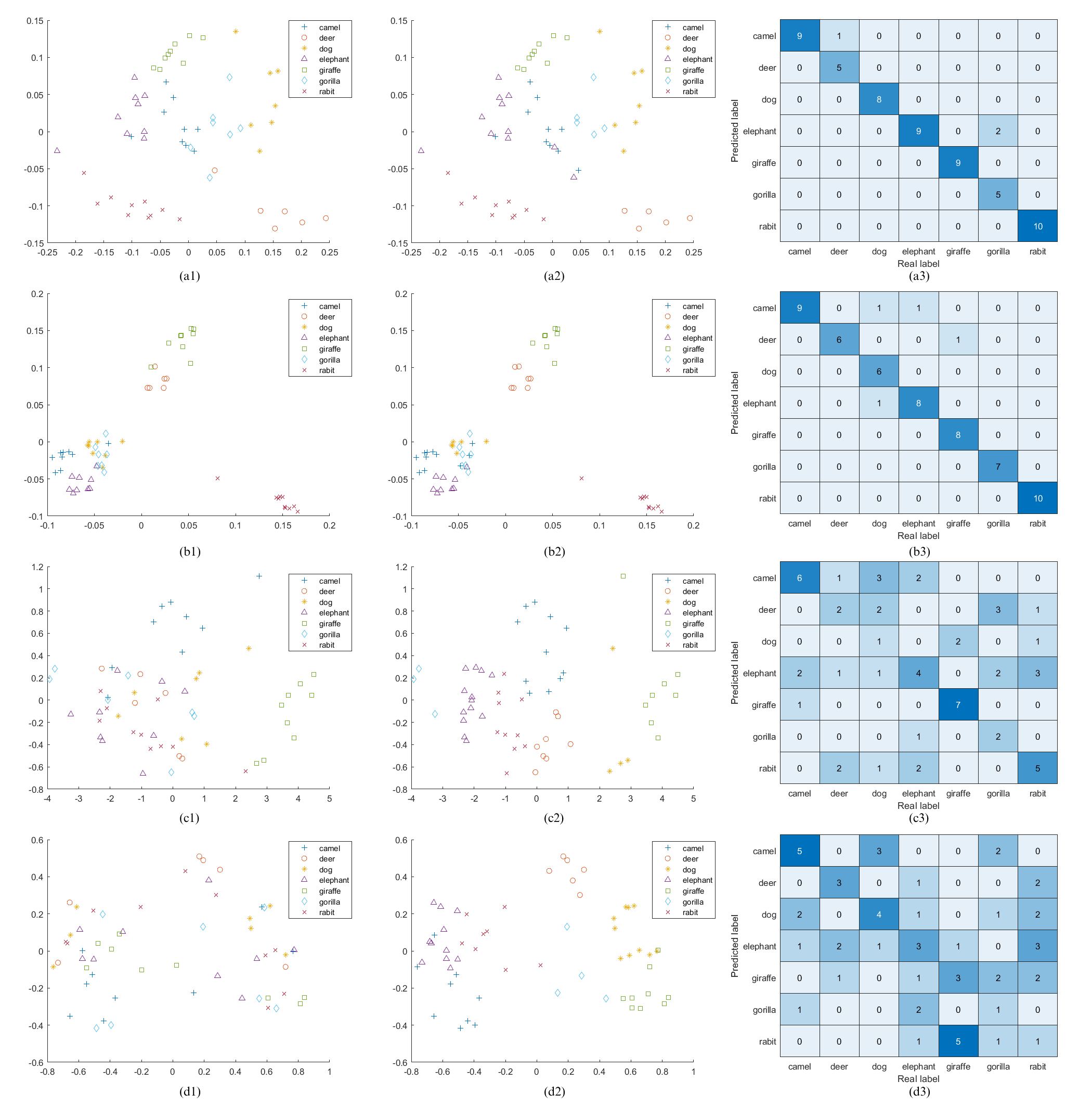}
            \end{center}
            \caption{Row (a) is our proposed HBS, row (b) is shape context, row (c) is boundary moments and row (d) is conformal welding.For each row, the left is MDS result, the middle is $k$-medoids classification result and the right is confusion matrix of classification.}
            \label{mds2}
        \end{figure*}

        \begin{table}[]
        \centering
        \begin{tabular}{|c|c|c|c|c|}
        \hline
        Algorithm & our HBS & Shape Context & Boundary Moments & Conformal Welding \\ \hline
        Accuracy  & 94.83\% & 93.10\% & 46.55\% & 34.48\% \\ \hline
        \end{tabular}
        \caption{Multi-class classification accuracy for our HBS, shape context, boundary moments and conformal welding.}
        \label{acc table}
        \end{table}

    \section{Conclusion}\label{conclusion}
        In this paper, we propose a novel shape representation for 2D bounded simply-connected objects called Harmonic Beltrami signature. The proposed signature is based on conformal welding but overcome a key shortcoming and it can be uniquely determined by the given shape. What's more exciting is that the proposed representation is invariant under scaling, translation and rotation. For slight deformation and distortion, HBS keeps robust and only changes within a reasonable small range. Conversely, if two HBS are alike, their corresponding domains are similar. Therefore, we have every reason to believe the it does have ability to represent some invariant geometrical features. The experimental results also confirm that the HBS has excellent performance in multi-classification tasks.

        Although our work has achieved relatively good results, the proposed representation still have some limitations. Firstly, the HBS is only applicable to simply-connected shapes currently, but as a matter of fact, multi-connected images are the majority in the real world. So we are eager for a feasible method to extend our HBS to multi-connected situation. Secondly, the traditional algorithm to compute the Beltrami coefficient is inevitably dependent on triangular mesh, which consumes a lot of time. Therefore, a fast algorithm to obtain this signature avoiding dense mesh is of high priority in our future work.

        In summary, we will focus on three major directions in the future. One is that the deeper meaning of HBS is worth digging and then a multi-connected version of representation based on this work can be proposed. Another is that if the HBS contains some geometrical features of shapes, we can also extract them directly from images and generate the HBS again. Hence the deep learning theory may help us to compute this signature from given images immediately, which is very likely to improve algorithm speed performance greatly. A third direction is this representation can be used in more applications like segmentation, registration and so on.
%\begin{acknowledgements}
%If you'd like to thank anyone, place your comments here
%and remove the percent signs.
%\end{acknowledgements}

% Authors must disclose all relationships or interests that 
% could have direct or potential influence or impart bias on 
% the work: 
%
% \section*{Conflict of interest}
%
% The authors declare that they have no conflict of interest.

% \biboptions{sort&compress}
% BibTeX users please use one of
% \bibliographystyle{spbasic}      % basic style, author-year citations
% \bibliographystyle{spmpsci}      % mathematics and physical sciences
\bibliographystyle{siamplain}       % APS-like style for physics
\bibliography{cite}
% Non-BibTeX users please use
% \begin{thebibliography}{}
% %
% % and use \bibitem to create references. Consult the Instructions
% % for authors for reference list style.
% %
% \bibitem{RefJ}
% % Format for Journal Reference
% Author, Article title, Journal, Volume, page numbers (year)
% % Format for books
% \bibitem{RefB}
% Author, Book title, page numbers. Publisher, place (year)
% % etc
% \end{thebibliography}

\end{document}